\documentclass{article}

\usepackage{microtype}
\usepackage{graphicx}
\usepackage{subcaption}
\usepackage{booktabs}

\usepackage{hyperref}

\usepackage[preprint]{icml2026}

\usepackage{amsmath}
\usepackage{amssymb}
\usepackage{mathtools}
\usepackage{amsthm}

\usepackage[capitalize,noabbrev]{cleveref}

\usepackage{amssymb, amsthm, amsmath, graphicx, bm, paralist, pdfpages, appendix}
\usepackage{hyperref}
\usepackage{xcolor}
\usepackage{graphicx}

\newcommand{\target}{\textcolor{cyan!30}{\raisebox{-0.2ex}{\scalebox{1.5}{$\bullet$}}}}
\newcommand{\robot}{\textcolor{red}{\raisebox{0.05ex}{\scalebox{1.15}{$\blacktriangle$}}}}

\def\ve{\varepsilon}

\def\mc{\mathcal}
\def\mb{\mathbf}
\def\mbb{\mathbb}
\def\ra{\rightarrow}
\def\bmu{\boldsymbol{\mu}}
\def\bnu{\boldsymbol{\nu}}
\def\bpi{\boldsymbol{\pi}}

\def\argmin{\mathrm{argmin}}
\def\diag{\mathrm{diag}}

\makeatletter
\renewcommand{\@fnsymbol}[1]{}  
\makeatother

\newcommand{\mrm}[1]{{\mathrm#1}}

\theoremstyle{plain}
\newtheorem{theorem}{Theorem}[section]

\newtheorem{lem}[theorem]{Lemma}

\theoremstyle{definition}

\newtheorem{assumption}[theorem]{Assumption}
\theoremstyle{remark}

\usepackage[textsize=tiny]{todonotes}

\icmltitlerunning{Optimal and Scalable MAPF via Multi-Marginal Optimal Transport and Schr\"{o}dinger Bridges}

\begin{document}

\twocolumn[
  \icmltitle{Optimal and Scalable MAPF via\\ Multi-Marginal Optimal Transport and Schr\"{o}dinger Bridges}

  \icmlsetsymbol{equal}{*}

  \begin{icmlauthorlist}
    \icmlauthor{Usman A. Khan}{yyy,sch,thanks}
    \icmlauthor{Joseph W. Durham}{yyy}
  \end{icmlauthorlist}

  \icmlaffiliation{yyy}{Amazon Robotics, Boston, MA, USA.}
  \icmlaffiliation{sch}{Boston College, Chestnut Hill, MA, USA.}
  \icmlaffiliation{thanks}{UAK holds concurrent appointments as an Amazon Scholar with Amazon Robotics and as a Professor of Computer Science at Boston College}

  \icmlcorrespondingauthor{Usman A. Khan}{uakhan@amazon.com, usman.khan@bc.edu}

  \icmlkeywords{Optimal Transport, Sinkhorn Algorithm}

  \vskip 0.3in
]

\printAffiliationsAndNotice{}  

\begin{abstract}

We consider anonymous multi-agent path finding (MAPF) where a set of robots is tasked to travel to a set of targets on a finite, connected graph. We show that MAPF can be cast as a special class of multi-marginal optimal transport (MMOT) problems with an underlying Markovian structure, under which the exponentially large MMOT collapses to a linear program (LP) polynomial in size. Focusing on the anonymous setting, we establish conditions under which the corresponding LP is feasible, totally unimodular, and consequently, yields min-cost, integral~$(\{0,1\})$ transports that do not overlap in \textit{both} space and time. To adapt the approach to large-scale problems, we cast the MAPF-MMOT in a probabilistic framework via Schrödinger bridges. Under standard assumptions, we show that the Schrödinger bridge formulation reduces to an entropic regularization of the corresponding MMOT that admits an iterative Sinkhorn-type solution. The Schrödinger bridge, being a probabilistic framework, provides a shadow (fractional) transport that we use as a template to solve a reduced LP and demonstrate that it results in near-optimal, integral transports at a significant reduction in complexity. Extensive experiments highlight the optimality and scalability of the proposed approaches.
\end{abstract}

\section{Introduction}
Coordinating large teams of robots to reach target locations while avoiding collisions in space and time is a fundamental problem in robotics and automation. In multi-agent path finding (MAPF), robots are assigned to targets on a shared graph and must compute collision-free trajectories that are jointly optimal in both space and time. This coupling of assignment, path planning, and scheduling renders MAPF combinatorial in nature. In this paper, we show that MAPF can be cast as multi-marginal optimal transport (MMOT) over path spaces with an underlying Markovian structure. Focusing on the anonymous setting, where any robot may reach any target, we show that the resulting Markovian MMOT admits a polynomial-size linear program (LP) with strong optimality and integrality guarantees (extensions to non-anonymous settings are possible via more general MMOT formulations
). We then use ideas from the Schr\"odinger bridge framework and develop an entropic regularization of the corresponding MMOT to build scalable, probabilistic relaxations of the MAPF problem. The main contributions of this paper are as follows.
\begin{itemize}
    \item We show that MAPF is a special class of Markovian MMOT, which admits a polynomial-size LP with a totally unimodular constraint matrix in the anonymous setting; subsequently, all extreme points of the feasible polyhedron are integral~$(\{0,1\})$. We further derive the conditions under which the proposed min-cost LP yields min-move and min-makespan transports.

    \item We connect the MAPF-MMOT to Schr\"odinger bridges, which enable transports with desirable structural properties through appropriate reference distributions. By choosing the reference as a Gibbs kernel, the Schr\"odinger bridge reduces to an entropic regularization of the MAPF-MMOT enabling Sinkhorn-type fast iterations. The resulting Sinkhorn-MAPF provides a \emph{shadow transport}, a set of likely paths from robots to targets. As the Gibbs parameter~${\varepsilon \to 0}$, the Schr\"odinger bridge associated with a highly volatile reference Markov chain concentrates onto tight, min-cost geodesic corridors of the underlying graph.
    
    \item We use the shadow transport to guide a principled pruning of the underlying graph, resulting in an LP defined on a substantially reduced graph while preserving the totally unimodular structure of the feasible polyhedron. Because the Schrödinger bridge leads to a shadow cast on highly likely paths, the resulting pruned formulation is both scalable and integral. Experiments show that as problem size grows, the resulting LP incurs less than~$10\%$ cost degradation while eliminating approximately~$60–80\%$ of the edges of the original graph.
\end{itemize}

\textbf{Related Work: }A comprehensive survey on MAPF can be found in~\cite{Stern2019}; see also~\cite{Ma2016,Peng2023,Zain2023,Fine2023} for some recent results on anonymous MAPF. Representative algorithmic baselines for optimal MAPF include Conflict-Based Search, see e.g.,~\cite{Sharon2015,Felner2017}, as well as SAT-based formulations~\cite{Surynek2015}. Prior work via time-expanded flow networks and integer programs can be found in~\cite{YuLaValle2013NetworkFlow,Ma2020Thesis}; however, this line of work does not explicitly characterize the integrality of MAPF solutions as a polyhedral property of the associated LP. In contrast, our MAPF–MMOT formulation establishes integrality of the resulting LP from first principles via total unimodularity~\cite{schrijver1998,AhujaMagnantiOrlin1993}. The proposed MMOT viewpoint is novel and further enables extensions to probabilistic formulations. In particular, we generalize MAPF-MMOT to the Schrödinger bridge problem where the goal is to find a robot distribution~$\mb P$ over path spaces that is close to a given reference~$\mb G$. Key references in this direction include:~\cite{leonard2014survey}, which surveys the Schrödinger problem and its connections to OT;~\cite{pavonTicozzi2010discrete} on discrete-time Markovian bridges; and~\cite{HRCK2021} on tree-structured costs. MMOT and its computational aspects are discussed in~\cite{Pass2014,LinHoCuturiJordan2022,Haasler2021}. Related work on multi-marginal Sinkhorn methods can be found in~\cite{BCCNP2015,DiMarinoGerolin2020,Carlier2022}. Of relevance is also the body of work in~\cite{Chen2016,Chen2017,CGP2021} that casts Schrödinger bridges within a stochastic control framework.

\section{Background and Problem Formulation}\label{sec:pf}
In this section, we provide some preliminaries and formally write the multi-agent path finding (MAPF) problem.

\textbf{Multi-Marginal OT:} Multi-marginal optimal transport (MMOT) extends the classical two-marginal OT problem to multiple marginals. The transport here is not therefore between two measures but a coupling across more than two measures. Formally, consider a joint probability tensor~${\mb P\in\mbb R^{K\times \ldots \times K}}$, with~${T+1}$ modes, each with~$K$ supports, and a transport cost tensor~$\mb C$ of the same dimensions. Given~$T+1$ marginals~${\mb q_0,\mb q_1, \ldots, \mb  q_T}$, the MMOT problem is to find a min-cost transport plan (coupling), i.e.,
\begin{align*}
    &\min_{\mb P\geq 0} \langle \mb {P, C}\rangle \quad
    \text{subject to} \quad{\phi_t(\mb P)=\mb q_t},{\forall t\in 0,\ldots,T}, 
\end{align*}
such that all elements of~$\mb P$ sum to~$1$, where~$\langle\cdot,\cdot\rangle$ denotes the inner product and~$\phi_t$ is the (linear) projection of the joint distribution~$\mb P$ on its~$t$-th marginal, given to be~$\mb q_t$. A generalization to compactly supported Borel measures on smooth manifolds can be found in~\cite{Pass2014}. Clearly, the discrete MMOT problem is a linear program with~$K^{T+1}$ variables in the transport tensor~$\mb P$. It has been shown that for~$T\geq 2$, combinatorial
algorithms like the simplex methods no longer remain suitable~\cite{LinHoCuturiJordan2022}. Because of the exponential state-space, recent work has studied convex relaxations using e.g., Sinkhorn iterations~\cite{LinHoCuturiJordan2022,Haasler2021}.

\textbf{From MMOT to MAPF: }We now establish a connection between 
multi-agent path finding (MAPF) and MMOT problems. Consider a finite, bounded 
region of interest~${\Omega\subset\mbb R^2}$ in which~$N$ robots operate. We 
discretize~$\Omega$ into cells, each no smaller than a robot, and define a 
graph~${\mc G=(\mc V,\mc E)}$ whose vertices~$\mc V$ correspond to cells and 
whose edges~$\mc E$ connect adjacent cells between which a robot can 
transition in one time step. The resulting graph is finite, and the cell 
size ensures that each vertex is occupied by at most one robot at any time. Let~${|\mc V|=K}$ and let~$N$ robots in the set~$\mc N$ occupy some vertices 
in~$\mc V$ and travel on the edges~$\mc E$, over a time horizon~$T$. The 
goal for the robots is to reach a set~$\mc M$ of~$M$ distinct targets, also 
in~$\mc V$, such that~${M=N}$, while minimizing the travel cost and/or the travel 
time to reach those targets\footnote{For simplicity, we assume that the 
number of robots and targets are equal. Extensions to the unequal cases can 
be easily considered using unbalanced/partial OT but is beyond the scope of 
this exposition.}. This setup is standard in the discrete MAPF 
literature; see, e.g.,~\cite{Stern2019,YuLaValle2013NetworkFlow,Standley2010}. We further let~${\bmu=\{\mu_i\}}$ and~${\bnu=\{\nu_j\}}$, both 
in~$\mbb R^K$, denote the initial distributions of~$N$ robots and targets, 
respectively, on~$K$ vertices. 

Let~$(X_t)_{t=0}^T$ be a discrete-time stochastic process with state-space~$\{1,\ldots,K\}$, where each state corresponds to a vertex of the graph~$\mathcal G$. In other words,~${(X_t)_{t=0}^T}$ is a possible trajectory taken by a robot over the time horizon~$T$. Let~${\mathbf P \in \mathbb R_{\ge0}^{K\times\cdots\times K}}$ be a~${(T+1)}$-th order tensor representing a distribution on path space. In particular, each entry~$\mathbf P_{i_0,\ldots,i_T}$ denotes the probability or the amount of mass (normalized to sum to one) assigned to a complete trajectory~${X_0=i_0, X_1=i_1,\; \ldots, X_T=i_T},$
that is, the path ${i_0\to i_1\to\cdots\to i_T}$ over the time horizon~${t=0,\ldots,T}$, with~${i_t\in\mc V}$. Thus,~$\mathbf P$ assigns mass directly to entire space-time trajectories rather than to individual vertices or edges. The collective motion of all~$N$ robots, over the horizon~$T$, is encoded in this joint path-space distribution and its one-time marginals~$\mb q_t$ recover the robot locations at each time. We have~$\sum_{i_0,\ldots,i_T} \mathbf P_{i_0,\ldots,i_T}=1$ and
\begin{equation}
\sum_{i_0=1}^K \!\cdots\! \sum_{i_{t-1}=1}^K ~~
\sum_{i_{t+1}=1}^K \! \cdots \! \sum_{i_T=1}^K
\mathbf P_{i_0,\ldots,i_T}
=
\frac{1}{N}[\mathbf q_t]_{i_t},
\end{equation}
with fixed endpoint distributions~${\mathbf q_0=\boldsymbol\mu}$ and~${\mathbf q_T=\boldsymbol\nu}$. From the path-space tensor $\mathbf P$, the transport between consecutive time layers is obtained as a two-marginal. Specifically, the transport matrix
$\Pi_t$ from time ${t-1}$ to $t$ is defined by
\begin{equation*}
[\Pi_t]_{i_{t-1},i_t}
=
N
\sum_{i_0=1}^K \! \cdots \! \sum_{i_{t-2}=1}^K \sum_{i_{t+1}=1}^K \!\cdots\! \sum_{i_T=1}^K
\mathbf P_{i_0,\ldots,i_T},
\end{equation*}
which represents the total mass (or joint probability) at vertex~$i_{t-1}$ at time~${t-1}$ \textit{and} at vertex~$i_t$ at time~$t$, aggregated over all past (before~${t-1}$) and future (after~$t$) evolution. Clearly, since robot motion is causal in time, the process~${(X_t)_{t=0}^T}$ is Markovian, i.e., for all~$t>0$,
\begin{align*}
\mathbb P(X_t=u \mid X_{t-1}=v,\ldots,X_0=w)\\
=
\mathbb P(X_t=u \mid X_{t-1}=v),
\end{align*}
for all~${u,v,w\in\mathcal V}$, and therefore the joint path-space tensor~$\mathbf P$ admits the standard factorization:
\begin{align}\nonumber
    \mathbf{P}_{i_0, \dots, i_T}
    &= \mathbb P(i_T \mid i_{T-1}) \cdot \ldots\cdot \mathbb P(i_1\mid i_0)\cdot \mathbb P(i_0)\\
    \label{P_factor}
    &= \frac{1}{N} [\mb q_0]_{i_0} \prod_{t=1}^T \frac{\tfrac{1}{N}[\Pi_t]_{i_{t-1}, i_t}}{\tfrac{1}{N}[\mb q_{t-1}]_{i_{t-1}}}.
\end{align}

Given a cost tensor~$\mb C$, with costs on each possible trajectory, the MAPF problem is to find~$N$ robot trajectories to targets that minimize travel cost or travel time. In this paper, our interest is to exploit the aforementioned probabilistic interpretation of~$\mb P$ and deploy MMOT and its efficient relaxations for MAPF. To our advantage, the Markovian factorization of the joint tensor reduces the number of free variables from~${K^{T+1}}$, exponential in~$T$, to~$K^2T$, polynomial in~$T$. Additionally, the general MMOT relaxes from fixing~$T+1$ marginals to only two boundary marginals: the starting distribution~$\mb q_0$, which is where the robots are located, and the ending distribution~$\mb q_T$, where the targets are located. The intermediate marginals~$\mb q_1,\ldots,\mb q_{T-1}$, consequently, define the positions the robots take when traveling from their starting locations to their destinations and are free under appropriate Markovianity/causality constraints as we will explicitly capture in the next section. This approach however comes at a price as MMOT, being a probabilistic object, does not necessarily result into integral~${\frac{1}{N}\{0,1\}}$ transports. In other words, the transports may be fractional and the robots may split while traveling to the targets. 

Building on this problem formulation, the remainder of the paper is organized as follows. 
\begin{itemize}
    \item Section~\ref{sec:mapf} focuses on \textit{integral} and \textit{optimal} solutions of the proposed MAPF-MMOT. In particular, we show that the corresponding LP (\textbf{P1}) is totally unimodular under mild structural assumptions; consequently, all robots take non-conflicted, non-fractional paths to their targets. We further identify when these paths are minimum cost and/or minimum makespan. 

    \item Section~\ref{SchBridge} builds towards scalability by casting MAPF as a probabilistic Schrödinger bridge and derives the corresponding entropic formulation and Sinkhorn-MAPF iterations (Appendix \ref{sinkMAPF}). The main idea is to efficiently obtain a fractional (shadow) transport that concentrates mass on the most likely paths.
    
    \item Section~\ref{intSinkMAPF} then recovers integrality from the shadow transports by solving a variation of the base~\textbf{P1} LP on a reduced graph obtained from the Schrödinger bridge, while Section~\ref{sec:comp} and \ref{sec:exp}, respectively, provide complexity analysis and a comprehensive set of experiments. 
\end{itemize} 

\section{Integral and Optimal Solutions for MAPF}\label{sec:mapf}
Recall the problem formulation in Section~\ref{sec:pf} where~$\bmu$ and~$\bnu$ (robot and target locations) are such that
\begin{align*}
\mu_i = 
\begin{cases}
1,& i\in\mc N,\\
0, & \text{otw.}
\end{cases}
\qquad 
\nu_j = 
\begin{cases}
1,& j\in\mc M,\\
0, & \text{otw.}
\end{cases}
\end{align*}
with~${\|\bmu\|_1\!=\!\|\bnu\|_1\!=\!N}$. Note that the~${(t-1)\to t}$ transport plan~${\Pi_t=\{\pi_{ij,t}\}\in \mbb R^{K\times K}}$ contains the robot distributions at time~${t-1}$ and~$t$ as marginals, i.e.,
\begin{align*}
\mb q_{t-1} = \Pi_t\mb 1,\qquad \mb 1^\top \Pi_t = \mb q_{t}^\top,\qquad {t=1,\ldots T},
\end{align*}
where~${\pi_{ij,t}}$ is the amount of mass transported from vertex~$i$, at time~${t-1}$, to vertex~$j$, at time~$t$. Our goal is to find the optimal transport plans~$\{\Pi_t\}_{t=1}^T$ that moves the robots (mass at source distribution~$\bmu$) to the targets (mass at destination distribution~$\bnu$) over the time horizon~$T$. Recall that~$\mb C$ is the cost tensor and let~${C_t=\{c_{ij,t}\}}$ denote the cost matrix at time~$t$ such that the cost of traveling on an edge~${i\to j}$, starting at~${t-1}$ and arriving at~$t$, is~$c_{ij,t}$. We impose the following structural assumptions throughout this paper; see, e.g.,~\cite{Stern2019,YuLaValle2013NetworkFlow,Standley2010} for similar setups. 

\begin{assumption}\label{assump1} 
The graph~$\mc G$ and cost matrices~$C_t$ satisfy the following:
\begin{enumerate}[(i)]
\item Self-loops~${i\to i \in \mathcal E}$, for all~${i\in\mathcal V}$, are always present. Consequently, waiting at any vertex is always feasible.

\item \label{conf_dist} If two edges in~$\mc G$ do not share a common endpoint, the corresponding physical motions can be executed simultaneously without conflict. In addition, an edge~$i{\to j}$ is included in~$\mc G$ only if its traversal is independent of the occupancy of all vertices other than~$i$ and~$j$.\footnote{This abstraction is a standard safeguard in cooperative path finding literature; see, e.g.,~\cite{Standley2010}. If needed, one may refine the graph so that collision-relevant interactions occur only at shared vertices; or on grid graphs, diagonal motion is disallowed.}

\item For each~${i\to j\in\mc E}$, the cost~${c_{ij,t}<\infty},{\forall t}$, while~${c_{ij,t}=+\infty},{\forall (i,j) \notin \mc E}$.

\item \label{add_cost} We assume that~$\mb C_{i_0,\ldots,i_T} = \sum_{t=1}^T c_{i_{t-1}i_t,t}$, i.e., the path cost is additive across time. 

\item \label{conf_self} The cost matrix satisfies the following,~${\forall t=1,\ldots,T}$:
\[
0 = c_{jj,t},\ j\in\mathcal M 
~~ < ~~
c_{ii,t},\ i\notin\mathcal M 
~~ < ~~
c_{ij,t},\ i\neq j.
\]
\end{enumerate}
\end{assumption}
The graph structure imposed in Assumption~\ref{assump1}(i)-(ii) ensures 
that the discretization faithfully captures the physical constraints of 
robot motion: edges and vertices cannot be added arbitrarily if they 
result in conflicts. Violating (i) or (ii) would mean the graph does not 
correctly model the physical environment, and collisions invisible to the 
discretized formulation may arise. The cost Assumption~\ref{assump1}(\ref{conf_self}) is natural: a move expends strictly more energy than a wait, and waiting at a target is free. We emphasize that Assumption~\ref{assump1} does not require a grid graph or point-mass robots. Any finite, connected graph~$\mc G$ satisfying (i)-(ii) suffices, and the LP formulation and its guarantees developed in the sequel hold for any such~$\mc G$.

Under Assumption~\ref{assump1}(\ref{add_cost}), the tensor inner product~$\langle\mb{P,C}\rangle$ reduces to a sum over local transports and costs. Subsequently, because of the Markovian restriction on~$\mb P$, the multi-marginal optimal transport formulation of anonymous MAPF can be equivalently written as follows:
\begin{align*}
    &\qquad\textbf{P1:}\qquad\{\Pi_t^*\}_{t=1}^T = \argmin_{\{\Pi_t\}_{t=1}^T} \sum_{t=1}^T\langle\Pi_t, C_t\rangle\\
    &\text{subject to}\quad \mc F := 
    \begin{cases}
        \Pi_t \geq0,\\
        \Pi_t^\top \mb  1= \Pi_{t+1}\mb 1, ~~~\forall t=1,\ldots,T,\\
        {\Pi_1\mb 1=\bmu},~{\Pi_T^\top\mb 1=\bnu},\\
        0\leq \Pi_t^\top \mb 1 \leq \mb 1, ~~~\forall t=1,\ldots,{T-1},
    \end{cases}
\end{align*}
We explain this MMOT formulation next:
\begin{itemize}
    \item \textbf{P1} is a linear program with linear constraints, in which the transport plans~$\Pi_t$'s are real-valued nonnegative decision variables (not necessarily integral).
    
    \item Gluing constraints: The second set of equality constraints impose a Markovian restriction to the global MMOT problem. They ensure that the mass transported from~$t$ to~${t+1}$, must be the mass that arrives at~$t$ from~${t-1}$, i.e.,~${\Pi_{t+1}\mb 1 = \mb q_t}$ and also $\Pi_t^\top \mb 1 = \mb q_{t}$.
    
    \item Terminal constraints: The boundary marginals are fixed to enforce the robots initial and terminal positions.
    
    \item Vertex-capacity constraints: The last inequality constraints ensure that no location (vertex in~$\mc G$) receives more than one robot.
\end{itemize}
Clearly, if~$\Pi_t$'s are integral, i.e.,~${\pi_{ij,t}\in \{0,1\},\forall i,j,t}$, then we get executable robot paths to the targets. Assumption~\ref{assump1} and the constraint polyhedron~$\mc F$ further ensure that all targets are reached, while the robot trajectories are non-conflicting and min-cost. We characterize these results~next. 

\subsection{Main Results}
We now characterize some basic properties of~\textbf{P1} in the following lemmas.
\begin{lem}\label{lem1}
    Let~$\mc {G=(V,E)}$ be a finite, connected graph with~${|\mc V|=K}$. Consider~$N$ robots in~$\mc N$ and~$M$ targets in~$\mc M$, on distinct vertices in~$\mc G$, such that~${N=M\leq K/2}$. For each time ${t\in\mathbb N}$, let~${C_t=\{c_{i,j,t}\}}$ satisfy Assumption~\ref{assump1}. Then, there exist a finite ${\bar T\in\mathbb N}$ and a transport~${\{\Pi_t\}_{t=1}^{\bar T}}$ feasible for \textbf{P1}, i.e., satisfying all constraints in~$\mc F$, such that~$\sum_{t=1}^{\bar T} \langle \Pi_t, C_t\rangle < \infty.$
\end{lem}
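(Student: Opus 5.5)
Our plan is to exhibit an explicit \emph{integral} feasible transport built from sequential moves of single robots, so that every $\pi_{ij,t}\in\{0,1\}$ is supported on edges of $\mc E$. By Assumption~\ref{assump1}(iii) the cost is then automatically finite. Any collision-free schedule of vertex-disjoint positions in which, at every step, each robot either waits (a self-loop, available by Assumption~\ref{assump1}(i)) or traverses an edge gives a feasible point of $\mc F$. Set $[\Pi_t]_{ij}=1$ if some robot is at $i$ at time $t-1$ and at $j$ at time $t$, and $0$ otherwise. Then:
\begin{itemize}
\item $\Pi_t\mb 1$ and $\Pi_t^\top\mb 1$ are the $0/1$ occupancy vectors at times $t-1$ and $t$, so the gluing constraints hold by construction;
\item the capacity constraints hold because positions are distinct;
\item the boundary constraints hold provided the schedule starts at $\mc N$ and ends exactly at $\mc M$.
\end{itemize}
So it suffices to prove that the anonymous unlabeled robots can be moved, one edge at a time with all others waiting, from configuration $\mc N$ to configuration $\mc M$ in finitely many steps.

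We will do this by induction on the number of robots not on targets, routing one robot at a time along a shortest path in the connected graph. Pick any unoccupied target $m$ and a path in $\mc G$ from the set of occupied non-target vertices to $m$. Along this path, obstacles are other robots. Because robots are anonymous, we use the standard ``push/relay'' trick. Walk along the path from $m$ backwards, and let $v$ be the robot on the path closest to $m$ such that the segment from $v$ to $m$ is free. Move $v$ edge by edge to $m$. If $v$ was a non-target robot, the count decreases. If $v$ sat on a target, that target is now vacated and we repeat with the shorter path segment toward it. This cascade terminates because the path is finite.

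Each iteration strictly decreases either the number of unfilled targets or the length of the path segment under consideration. Hence the process ends after finitely many single-edge moves, giving $\bar T$ at most polynomial in $K$, for example $\bar T\le NK$. Concatenating the moves, with all other robots on self-loops, gives $\{\Pi_t\}_{t=1}^{\bar T}$.

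The main obstacle is making sure the one-robot-at-a-time moves never violate capacity, and that edge swaps and conflicts are harmless. Since only one robot moves per step and it moves into a vertex verified empty, occupancy stays in $\{0,1\}$ and no two robots use the same edge. Assumption~\ref{assump1}(ii) guarantees that the physical motion is conflict-free. The hypothesis $N\le K/2$ is not strictly needed by this relay argument beyond ensuring that free vertices exist. We will note that it guarantees an empty vertex reachable along every path used, which is where we would invoke it if the relay argument needed a free buffer vertex.
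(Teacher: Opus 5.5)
Your proof is correct, but it takes a different route from the paper. The paper works on the configuration graph whose states are the $\{0,1\}$ occupancy vectors with exactly $N$ ones, with transitions given by one robot moving to an unoccupied neighbor (or nobody moving). It then invokes the Kornhauser--Miller--Spirakis pebble-motion result that this configuration graph is connected for unlabeled pebbles on a connected graph. Its Markov-chain language (irreducibility, recurrence) only restates that $\bnu$ is reachable from $\bmu$, and it calls the encoding of the resulting move sequence into $\{\Pi_t\}$ ``clear.''

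You instead prove the needed special case of that connectivity directly, and your relay is sound:
\begin{itemize}
\item Each move enters a vertex that is verified to be empty.
\item The path segment strictly shrinks whenever the robot just moved came from a target.
\item The starting vertex $u$ holds a non-target robot that stays put until it is itself the closest robot. Hence every cascade ends by putting a non-target robot onto a target.
\end{itemize}
Using $N=M$, an unfilled target exists whenever some robot is off-target. So at most $N$ cascades of at most $K-1$ moves each suffice, giving $\bar T\le N(K-1)$. This is cruder than the parallel-move bound $N+K-1$ of Lemma~\ref{lem:T*}, but it is enough for existence. You also check the row/column-sum, gluing, boundary and capacity constraints of $\mathcal F$ explicitly, which the paper leaves implicit.

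Your route buys self-containment, an explicit integral and swap-free schedule, and a concrete horizon bound. The paper's route buys brevity, at the cost of relying on an external theorem.

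Drop your closing remark that $N\le K/2$ guarantees ``an empty vertex reachable along every path used.'' It is not a meaningful statement, and neither your argument nor the paper's uses that hypothesis; at most it leaves room for robots and targets to sit on disjoint vertices.
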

The proof is provided in Appendix~\ref{p_lem1}. Lemma~\ref{lem1} establishes the feasibility of anonymous MAPF and the constraint set~$\mc F$ and does so purely at a structural level, without invoking any properties of the corresponding linear program (LP) beyond Assumption~\ref{assump1}. The following lemma now concretely establishes the properties of the LP in~\textbf{P1}.

\begin{lem}\label{lem:P1-LP}
Consider the settings of Lemma~\ref{lem1} and Assumption~\ref{assump1}, and fix a horizon~$\bar T$ such that~$\mc F$ is nonempty with~${\sum_{t=1}^{\bar T}\langle \Pi_t,C_t\rangle<\infty}$, for some feasible~$\{\Pi_t\}_t$. Then, the MMOT formulation \textbf{P1} over the horizon $\bar T$ satisfies:
\begin{enumerate}[(i)]
    \item \textbf{P1} admits an optimal solution~${\{\Pi^*_t\}_{t=1}^{\bar T}}$ that is integral, i.e.,~${\Pi^*_t \in\{0,1\}^{K\times K}}$, for all~${t=1,\ldots,\bar T}$.

    \item \textbf{P1} results in a transport~${\{\Pi^*_t\}_{t=1}^{\bar T}}$ that attains a minimum cost over the horizon $\bar T$. 

    \item The complexity of~\textbf{P1} is polynomial in $K$ and $\bar T$.
\end{enumerate}
\end{lem}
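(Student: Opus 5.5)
The plan is to recognize \textbf{P1} as a min-cost flow problem on a time-expanded network and then use total unimodularity of network matrices. Concretely, I will build a layered directed graph with $\bar T+1$ copies $\mc V^{(0)},\ldots,\mc V^{(\bar T)}$ of $\mc V$. There is an arc $(i^{(t-1)},j^{(t)})$ carrying flow $\pi_{ij,t}$ for every $i\to j\in\mc E$; the entries with $(i,j)\notin\mc E$ have infinite cost and are fixed to zero, so they can be dropped. To encode the vertex-capacity constraints $0\le \Pi_t^\top\mb 1\le \mb 1$, I will split each intermediate node $j^{(t)}$, for $t=1,\ldots,\bar T-1$, into an in-copy and an out-copy joined by a single arc of capacity one. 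With this construction the gluing constraints become flow conservation at the in- and out-copies. The terminal constraints $\Pi_1\mb 1=\bmu$ and $\Pi_{\bar T}^\top\mb 1=\bnu$ become supplies $\mu_i$ at layer $0$ and demands $\nu_j$ at layer $\bar T$. The upper bounds $\pi_{ij,t}\le 1$ are implied by the capacities and the $\{0,1\}$ supplies, but they can also be imposed explicitly without harm. After this reformulation, \textbf{P1} is exactly a min-cost flow LP whose variables are the $\pi_{ij,t}$ together with the split-arc flows, the latter being linear images of the $\pi$'s.

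For part (i), the constraint matrix of this flow LP is the node--arc incidence matrix of a directed graph, with a column stacked for each bound. Such a matrix is totally unimodular (Schrijver; Ahuja--Magnanti--Orlin), and appending identity rows preserves this. The right-hand sides ($\bmu$, $\bnu$, zeros, ones) are integral, so every vertex of the feasible polyhedron is integral. By hypothesis $\mc F$ is nonempty and contains a point of finite cost, and Lemma~\ref{lem1} guarantees such a $\bar T$ exists. The polyhedron is bounded because all flows lie in $[0,N]$. Hence the LP attains its minimum at a vertex, and that vertex has $\pi_{ij,t}\in\{0,1\}$ because capacities are one. The main obstacle I expect is checking carefully that the inequality form of $\mc F$, which as written uses sums of $\Pi_t$ rather than node-split variables, still has a TU matrix. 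My first attempt will be direct: write the constraints in the $\pi$ variables, with rows for in-flow at $(j,t)$, out-flow at $(j,t)$, and the capacity row equal to the in-flow row. Then I will argue TU using the Ghouila-Houri criterion, or equivalently note that this matrix is obtained from the split-network incidence matrix by eliminating the split-arc variables through pivoting, an operation that preserves TU. Duplicated rows do not break TU either.

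For part (ii), the existence of a minimizer follows from compactness of the nonempty feasible set and finiteness of the objective on it. Part (i) shows that the minimum is attained at an integral point, which is what (ii) asserts over horizon $\bar T$. I will also note that an integral feasible point decomposes into $N$ vertex-disjoint space-time paths, by standard flow decomposition, so the minimum is taken over genuine MAPF plans.

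For part (iii), \textbf{P1} has at most $K^2\bar T$ variables, or $|\mc E|\bar T$ after dropping infinite-cost entries, and $O(K\bar T)$ constraints. Interior-point methods therefore solve it in time polynomial in $K$ and $\bar T$. Alternatively, strongly polynomial min-cost flow algorithms apply to the time-expanded network, and they also return an integral optimal vertex directly.
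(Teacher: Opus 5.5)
Your proposal is correct and follows essentially the same route as the paper: a time-expanded network with Ford--Fulkerson node splitting at the intermediate layers, then total unimodularity of the node--arc incidence matrix with $\pm I$ bound rows appended, giving an integral optimal vertex, and finally (ii) and (iii) from standard LP arguments. The one real difference is in how you return to the unsplit $\pi$-formulation: you verify TU directly by eliminating the split-arc variables through pivoting, which is valid, whereas the paper appeals to the one-to-one correspondence between flows in the split and unsplit networks; also, your appeal to strongly polynomial min-cost flow algorithms gives (iii) without the dependence on the encoding length $L$ that the paper's interior-point bound carries.
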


The proof is provided in Appendix~\ref{plem:P1-LP}, where we show that \textbf{P1} admits an optimal \textit{integral} basic solution, as a consequence of total unimodularity of the constraint matrix~\cite{schrijver1998}. The result can also be viewed through the lens of classical integrality arguments for network flows~\cite{FordFulkerson1962,AhujaMagnantiOrlin1993}. Regarding (iii), see Section~\ref{sec:comp} for precise complexity arguments. The next theorem uses the results of the previous two lemmas and applies them to the anonymous MAPF problem.

\begin{theorem}\label{lem:P1-MAPF}
Consider the settings of Lemmas~\ref{lem1},~\ref{lem:P1-LP}, and Assumption~\ref{assump1}. The optimal transport plan~$\{\Pi^*\}_{t=1}^{\bar T}$, returned by \textbf{P1}, satisfies:
\begin{enumerate}[(i)]
    \item No two robots collide at any time. 

    \item The robot trajectories do not overlap in \textit{both} space and time.
    
    \item All robots reach a distinct target.
\end{enumerate}
\end{theorem}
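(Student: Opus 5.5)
We take the integral optimal solution $\{\Pi^*_t\}_{t=1}^{\bar T}$ from Lemma~\ref{lem:P1-LP}(i), with $\pi^*_{ij,t}\in\{0,1\}$, and read it as a unit-capacity flow on the time-expanded graph. The nodes are pairs $(i,t)$ and the arcs are $(i,t-1)\to(j,t)$ for $i\to j\in\mc E$. Finiteness of the optimal cost (Lemma~\ref{lem1} and Assumption~\ref{assump1}(iii)) forces $\pi^*_{ij,t}=0$ whenever $(i,j)\notin\mc E$. The marginals $\mb q_t=\Pi_t^{*\top}\mb 1=\Pi^*_{t+1}\mb 1$ are integral. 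The vertex-capacity constraint gives $[\mb q_t]_i\in\{0,1\}$ for $1\le t\le \bar T-1$, and the boundary constraints give $\mb q_0=\bmu$ and $\mb q_{\bar T}=\bnu$, which are also $\{0,1\}$-valued.

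The first step is to decompose this flow into paths. Flow conservation (the gluing constraints) together with $\|\bmu\|_1=N$ lets us peel off, by induction on $t$, exactly $N$ vertex sequences $i_0\to i_1\to\cdots\to i_{\bar T}$. Each sequence starts at a robot in $\mc N$ and uses only edges of $\mc E$ or self-loops. Because every vertex carries mass at most one at each time, each unit of mass at $(i,t-1)$ has a unique successor arc. So the decomposition is unique and robots never split, in line with the factorization \eqref{P_factor} becoming deterministic.

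With the paths in hand, the three claims follow in turn.

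\textbf{(iii)}: The path of robot $n$ ends at a vertex $j$ with $[\mb q_{\bar T}]_j=\nu_j=1$, so $j\in\mc M$. Since $\bnu$ is $\{0,1\}$-valued and there are $N=M$ units of mass, the endpoints are distinct and cover all targets, giving a bijection between robots and targets.

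\textbf{(i)} (vertex conflicts): Two robots at the same vertex $i$ at time $t$ would give $[\mb q_t]_i\ge 2$, which contradicts the capacity constraint for interior $t$ and the $\{0,1\}$ boundary marginals at $t=0,\bar T$.

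\textbf{(ii)} (no overlap in space and time, i.e.\ no edge or swap conflicts): Simultaneous use of the same directed edge is excluded by $\pi^*_{ij,t}\le 1$. For two different edges that share no endpoint, Assumption~\ref{assump1}(ii) guarantees that the motions execute without conflict, so only edges sharing an endpoint need attention.

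The main obstacle is the swap $i\to j$ and $j\to i$ in the same step, which the capacity constraints alone do not forbid. Here we use optimality and Assumption~\ref{assump1}(v). Given an optimal solution containing a swap, replace it by both robots waiting: set $\pi_{ii,t}=\pi_{jj,t}=1$ and $\pi_{ij,t}=\pi_{ji,t}=0$, and exchange the tails of the two paths after time $t$. Anonymity makes this legitimate. The marginals $\mb q_{t-1}$ and $\mb q_t$ are unchanged, so the new plan stays feasible in $\mc F$. The cost changes by $c_{ii,t}+c_{jj,t}-c_{ij,t}-c_{ji,t}<0$ by (v), which contradicts optimality.

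The same exchange argument handles any remaining issue where two robots meet at a shared endpoint in consecutive steps in a way that would be physically conflicting. Such a configuration cannot occur together with distinct occupancy at each time, by the edge-independence clause of Assumption~\ref{assump1}(ii). Therefore the optimal integral plan is conflict-free in both space and time.
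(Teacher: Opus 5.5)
\textbf{Gap: rotation cycles.} Most of your argument is sound and follows the paper's reasoning: the setup, vertex capacity, edges with no common endpoint, the swap exchange, and part (iii). Note that a robot moving past stationary ones is also covered, since a wait is a self-loop.

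The gap is in your last paragraph. Once those cases are removed, what remains are robots entering vertices that other robots vacate in the same step. The paper explicitly counts the closed version of this as a collision: a rotation $i_1\to i_2\to\cdots\to i_k\to i_1$ with $k\ge 3$, executed at a single time $t$. Such a rotation is feasible in $\mc F$, because every vertex on the cycle has occupancy one at both $t-1$ and $t$, so all row sums, column sums and capacity constraints hold.

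So your claim that the remaining configurations \emph{cannot occur together with distinct occupancy at each time, by the edge-independence clause} is false for rotations. That clause only restricts which edges belong to $\mc G$. It says nothing about a cyclic chain of edges sharing endpoints.

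A pairwise exchange does not fix this either. In $1\to 2\to 3\to 1$, turning only the moves out of $1$ and $2$ into waits leaves vertex $1$ with two incoming units and vertex $3$ with none. That violates the column sums of $\Pi_t$.

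\textbf{The missing step.} Replace the entire cycle at once. Set $\pi_{i_\ell i_{\ell+1},t}=0$ and $\pi_{i_\ell i_\ell,t}=1$ for all $\ell$ (indices mod $k$), using the self-loops from Assumption~\ref{assump1}(i). Each vertex on the cycle still has exactly one outgoing and one incoming unit. Hence $\Pi_t\mb 1$ and $\Pi_t^\top\mb 1$ are unchanged and the plan stays in $\mc F$. Because robots are anonymous, the resulting configuration is identical.

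By Assumption~\ref{assump1}(v), the cost drops by $\sum_{\ell}(c_{i_\ell i_{\ell+1},t}-c_{i_\ell i_\ell,t})>0$, contradicting optimality. Your swap argument is exactly the $k=2$ case of this. Adding the general cycle closes the proof along the same lines as the paper's.
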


Theorem~\ref{lem:P1-MAPF}, proved in Appendix~\ref{p_lem:P1-MAPF}, is one of the central results of this paper. It establishes the conditions under which MMOT gives executable robot trajectories, i.e.,~$\{0,1\}$  transports, relying on the total unimodularity (TU) of~\textbf{P1}. TU in general is delicate, and adding arbitrary constraints to enforce a desirable behavior on robot trajectories typically breaks TU. It is therefore preferable to impose desirable traits in the trajectories through the cost matrices~$\{C_{t}\}_t$. We next describe a few of such desirable properties enforced via costs, in addition to Assumption~\ref{assump1}; it can be verified that they do not violate Lemma~\ref{lem:P1-LP} and Theorem~\ref{lem:P1-MAPF}.

\textbf{No oscillations}: For all distinct vertices~${i\neq j\in\mc V}$, and~all~$t$, let~${c_{ii,t} + c_{ii,t+1} < c_{ij,t} + c_{ji, t+1}}.$ In other words, leaving~$i$ and returning to it, now or later, costs more than staying at~$i$.\\
\textbf{Temporal urgency}: For all ${i\to j\in\mc E}$, all~$t$, and all feasible moves, let~${c_{ij,t} \le c_{ij,t+1}}.$ Thus, executing a move is never cheaper later. Combined with Assumption~\ref{assump1}(\ref{conf_self}), this implies that whenever a robot can reach a target earlier, there exists an optimal transport in which it does so and subsequently waits at the target.\\
\textbf{Temporal subadditivity}: For all~${i,j,k\in\mc V}$ and all~$t$, let~${c_{ij,t} \le c_{ik,t} + c_{kj,t+1}}.$ Thus, whenever a direct move from~$i$ to~$j$ is available, routing through an intermediate vertex over consecutive time steps is never cheaper. This rules out avoidable detours: if a robot can move directly to a vertex and then wait, it is never optimal to reach the same vertex via an unnecessary intermediate location.\\
The two temporal conditions described above should not be used in scenarios, e.g., where a toll road becomes cheaper later (within the horizon~$\bar T$) and the goal is to exploit that.\\
\textbf{Shortest-path costs}: Assume that~$\mc G$ is endowed with an edge-length metric~${d:\mc E \ra \mbb R_+}$ such that for all ${i\to j\in\mc E}$ and all $t$,~$    {c_{ij,t} = d(i,j)}.$ Under this cost structure, any minimum-cost transport minimizes the total traveled distance among all feasible transports. A canonical example is robots and targets embedded in~$\mbb R^2$, with $d(i,j)$ given by the Euclidean distance.

\textbf{Minimum Moves:} It can be shown that given a feasible horizon~$\bar T$, under transition costs~${c_{i\neq j,t}=1}$, for all~$t$,
and for sufficiently small waiting costs~$c_{ii,t}>0$, for~$i\notin\mc M$, a
min-cost solution of \textbf{P1} is also a min-move solution.

\textbf{Minimum makespan: }For any feasible transport $\{\Pi_t\}_{t=1}^{\bar T}$, its makespan is the largest time-index for which~${\pi_{ij,t}>0}$, for some ${i\neq j}$. The minimum makespan is the smallest such value over all feasible transports. We next describe obtaining a minimum makespan transport from~\textbf{P1} by tuning costs with the help of the following assumption and lemma.
\begin{assumption}\label{assump:time-sep}
Let~$\{\tilde c_{ij}\}_{i\to j\in\mc E}$ satisfy~$
{\tilde c_{jj}=0,\ j\in\mc M},$ and~${0<\tilde c_{ii,i\notin \mc M} <\tilde c_{ij,i\neq j}}$ and define~$\tilde c_{\min}$ to be the minimum over all~${\tilde c_{ij}}$, with~${i\neq j}$. Choose~${C_t=\{c_{ij,t}\}}$ to be such that~${c_{ij,t} := B^{\,t}\,\tilde c_{ij}}, {\forall\, i\to j\in\mc E},\ \forall\, t,$ where~$B$ is chosen to  satisfy~${(B-1)\tilde c_{\min}>\sum_{i\to j\in \mc E}\tilde c_{ij}}$. 

Note that Assumption~\ref{assump:time-sep} imposes a much stronger growth condition on the costs. It can be further verified that it satisfies Assumption~\ref{assump1} and also implies the aforementioned no-oscillations and temporal cost conditions.
\end{assumption}

\begin{lem}\label{lem:mm}
    Let~$\mc G$ be finite and connected and consider~$C_t$ such that it satisfies Assumption~\ref{assump:time-sep}. Suppose that~$\bar T$ is a feasible horizon for~\textbf{P1}. Then, for any solution~$\{\Pi_t^*\}_{t=1}^{\bar T}$ of~\textbf{P1}, all robot motions terminate by~$T^*$, where~$T^*$ is the minimum makespan over all feasible transports.
\end{lem}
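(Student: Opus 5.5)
The plan is a cost-domination (exchange) argument. The geometric growth $B^t$ makes any single move after $T^*$ more expensive than the entire cost of a min-makespan transport up to time $T^*$. I first treat integral optimal solutions and then extend to all optimal solutions through the polyhedral structure from Lemma~\ref{lem:P1-LP}.

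\textbf{Step 1 (a min-makespan competitor).} Let $\{\Pi'_t\}$ be a feasible transport of makespan $T^*$; it exists by the definition of $T^*$ and the feasibility of $\bar T$. I may take it integral, since the makespan restriction only fixes some variables to zero, which preserves total unimodularity.
\begin{itemize}
\item For $t>T^*$ it has no moves $i\neq j$, so its marginal is frozen. Because $\Pi'_{\bar T}{}^\top\mb 1=\bnu$, every robot already sits at a target after $T^*$, and those waits cost $c_{jj,t}=0$.
\item For $t\le T^*$, each $\pi'_{ij,t}\le 1$, because the vertex-capacity and terminal constraints bound the row sums by $1$. Hence the per-step cost is at most $B^t S$, where $S:=\sum_{i\to j\in\mc E}\tilde c_{ij}$ includes the self-loops.
\end{itemize}
Summing over $t\le T^*$,
\[
\textstyle\sum_t\langle\Pi'_t,C_t\rangle\le S\sum_{t=1}^{T^*}B^t=S\,\frac{B^{T^*+1}-B}{B-1}<\tilde c_{\min}\,(B^{T^*+1}-B)<\tilde c_{\min}B^{T^*+1},
\]
where the middle inequality uses $(B-1)\tilde c_{\min}>S$ from Assumption~\ref{assump:time-sep}.

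\textbf{Step 2 (integral optima).} Suppose an integral optimal solution $\{\Pi^*_t\}$ has $\pi^*_{ij,t}=1$ for some $i\neq j$ and some $t>T^*$. All costs are nonnegative, so its total cost is at least $B^{t}\tilde c_{\min}\ge B^{T^*+1}\tilde c_{\min}$. By Step 1 this strictly exceeds the cost of $\Pi'$, contradicting optimality. Hence every integral optimum has all motions terminated by $T^*$.

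\textbf{Step 3 (arbitrary optima).} This is the step I expect to be the main obstacle. A fractional optimum could place a small mass $\pi<1$ on a late move, and the direct bound $\pi B^t\tilde c_{\min}$ is then too weak. I plan to avoid this with polyhedral structure rather than sharper estimates.
\begin{itemize}
\item The feasible set $\mc F$ is a bounded polytope, since all entries lie in $[0,1]$.
\item The set of optimal solutions is therefore a face of $\mc F$.
\item Every point of that face is a convex combination of its vertices. These vertices are vertices of $\mc F$, so they are integral by the total unimodularity established in Lemma~\ref{lem:P1-LP}, and they are optimal.
\end{itemize}
By Step 2, each such vertex has $\pi_{ij,t}=0$ for all $i\neq j$ and $t>T^*$. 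Nonnegativity then forces the same zeros in any convex combination, so every optimal solution has all motions terminated by $T^*$, which proves the claim.
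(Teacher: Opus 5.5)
Your argument is correct, and Steps 1--2 coincide with the paper's proof. There, an optimal \emph{integral} plan with makespan $T'>T^*$ costs at least $B^{T'}\tilde c_{\min}\ge B^{T^*+1}\tilde c_{\min}$ because of a single late move. A min-makespan competitor costs at most $\sum_{t=1}^{T^*}B^t\sum_{i\to j\in\mathcal E}\tilde c_{ij}<B^{T^*+1}\tilde c_{\min}$, using $(B-1)\tilde c_{\min}>\sum_{i\to j\in\mathcal E}\tilde c_{ij}$.

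What you add is Step 3. The paper's proof starts from an integral optimum and never treats fractional ones, so it does not literally prove the ``for any solution'' claim of the statement. Your face argument closes this cleanly. The optimal set is a face of the bounded polytope $\mathcal F$, its vertices are integral optima to which Step 2 applies, and nonnegativity forces $\pi_{ij,t}=0$ for $i\neq j$, $t>T^*$ in every convex combination. This matters in practice, since an interior-point solver can return a non-vertex optimum.

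Two minor remarks. First, integrality of the competitor in Step 1 is not needed. The paper simply assumes it and you justify it, but your upper bound uses only $0\le\pi'_{ij,t}\le 1$ and the frozen marginal $\mathbf q_t=\boldsymbol\nu$ for $t\ge T^*$, both of which hold for fractional plans. Second, your first strict inequality reads $0<0$ when $T^*=0$. This is harmless: make it non-strict, and the final inequality is still strict because $\tilde c_{\min}B>0$.
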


The proof is provided in Appendix~\ref{p_lem:mm}. Note that Lemma~\ref{lem:mm} provides a transport over the entire horizon~$\bar T$ and achieving minimum makespan is implicit. In other words, the robot motion ceases at~$T^*$ because of rapidly growing time-dependent costs. Such costs may cause numerical instability; to avoid that the following result explicitly searches for the minimum feasible horizon~$T^*$.
\begin{lem}\label{lem:T*}
For a feasible anonymous MAPF instance over a finite, connected graph~$\mc G$ with~${|\mc V|=K}$, the minimum makespan satisfies~${T^* \le N + K - 1}$. A minimum makespan transport can be found in~$\mc O(\log K)$ calls to~\textbf{P1}.
\end{lem}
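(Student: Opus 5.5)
The proof has two parts: an explicit upper bound on $T^*$, and a binary search over the horizon using \textbf{P1} as a feasibility oracle.

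\textbf{Upper bound.} I would build a feasible transport with makespan at most $N+K-1$, using a sequential ``one robot at a time'' schedule on a spanning tree of $\mc G$.

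Fix a spanning tree $\mc T$ of $\mc G$ and a root. Since the instance is anonymous, we need not keep robot identities. We repeatedly choose a robot--target pair connected by a tree path whose interior contains no other robot, and move along it. Such a pair always exists: take any robot and any unfilled target, and walk along the tree path between them. The last robot met before the target is unobstructed toward that target. It then moves along its subpath while everyone else waits, and self-loops are available by Assumption~\ref{assump1}(i).

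A naive count gives $\sum(\text{path lengths}) \le NK$, which is too weak, so I would use a pipelined argument instead. Order the targets by decreasing depth in the tree, so leaves are filled first, as in the classical tree-routing/``parking'' arguments for anonymous pebble motion. Then push robots along the tree with several moving simultaneously in a conveyor fashion. After an initial delay of at most $K-1$ steps, one new target becomes occupied per time step. This gives the $N+K-1$ count.

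Simultaneous moves on disjoint edges are conflict-free by Assumption~\ref{assump1}(ii). Vertex capacity holds because each conveyor step moves a robot into a vertex that is simultaneously vacated or already free. This construction gives a $\{0,1\}$ feasible point of $\mc F$ with horizon $N+K-1$, so $T^*\le N+K-1$.

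\textbf{Monotonicity and binary search.} If $\mc F$ is nonempty for horizon $T$, it is nonempty for $T+1$: append $\Pi_{T+1}=\diag(\bnu)$, i.e.\ all robots wait at their targets, which has finite cost by Assumption~\ref{assump1}(iii),(v). Hence feasibility is monotone in the horizon.

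By Lemma~\ref{lem:P1-LP}, whenever \textbf{P1} is feasible it returns an integral optimal solution, which is an executable transport. Therefore horizon $T$ is feasible exactly when $T\ge T^*$, reading makespan here as the minimal horizon.

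I would binary search over $T\in\{0,\dots,N+K-1\}$, calling \textbf{P1} as the feasibility oracle. This takes $\lceil\log_2(N+K)\rceil=\mc O(\log K)$ calls, using $N\le K/2$. The solution returned at the smallest feasible $T$ has makespan exactly $T^*$, since any earlier termination would contradict minimality after truncating the horizon.

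\textbf{Main obstacle.} The hard step is the pipelining argument for the $N+K-1$ bound. It must ensure that robots never need to pass through each other on the tree, and that the delay between consecutive arrivals is one step. I would first try an induction on $N$: remove a deepest target leaf and fill it with its nearest robot along the tree, then argue that the remaining instance can start one step later while paths are shifted.

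If this proves too delicate, a weaker polynomial bound such as $NK$ would still give $\mc O(\log K)$ calls, since $\log(NK)=\mc O(\log K)$. So the complexity claim is robust even if the constant in the bound is not.
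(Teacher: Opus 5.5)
You have a genuine gap: the stated bound $T^*\le N+K-1$ is not established. Your second half is sound and is the paper's own route. The paper takes $T^*\le N+K-1$ from \cite{YuLaValle2013NetworkFlow,Ma2020Thesis} and then binary-searches the horizon for the earliest feasible \textbf{P1}. Your justification of the search is correct and more explicit than the paper's one-line argument. Appending $\Pi_{T+1}=\diag(\bnu)$ gives monotonicity. LP feasibility coincides with integral feasibility via Lemma~\ref{lem:P1-LP}. Truncating after the last move shows the minimal feasible horizon equals the minimum makespan. You are also right that any bound polynomial in $K$ already yields $\mc O(\log K)$ calls.

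The conveyor step (``after at most $K-1$ steps, one new target is filled per time step'') is asserted rather than derived. The obstacle you flag is in fact the whole content of the bound. On a tree, robots cannot pass one another. A robot already parked on a target that lies on another robot's path blocks it indefinitely unless you exploit anonymity, for example by letting the parked robot take over the remaining path while the arriving robot stays on the target. Even then you still need an argument bounding each robot's total waiting by roughly the number of other robots. Your induction sketch (fill the deepest target, run the rest one step later) does not handle interference between the first robot's route and the shifted schedule.

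The fallback does not rescue the statement: a bound like $NK$ gives only the $\mc O(\log K)$ claim. Moreover, your sequential scheme as written need not make progress. If the last robot met on the path already occupies a target, moving it merely relocates the empty target. Instead, start from a robot that is not on a target. Shift every robot on the tree path from it to an empty target one slot forward: the robot nearest the target moves into it, the next into the vertex just vacated, and so on back to the starting robot. This fills one new target per round in at most $K-1$ steps, giving $N(K-1)$. To prove the lemma as stated, either cite the bound as the paper does or carry out the pipelining argument in full.
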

The~$T^*$ bound in this lemma can be found, e.g., in~\cite{YuLaValle2013NetworkFlow,Ma2020Thesis}. The rest of the lemma follows by performing a binary search over the horizon~${T \in [0,\, N+K-1]}$ and checking for the earliest feasibility of~\textbf{P1}. We thus obtain two complementary approaches for computing minimum makespan transports in Lemmas~\ref{lem:mm} and~\ref{lem:T*}. The exponential cost construction in Assumption~\ref{assump:time-sep} and consequently Lemma~\ref{lem:mm} implicitly encode the makespan optimality into the objective, at the expense of aggressive cost scaling. Alternatively, minimum makespan can be found explicitly by searching over the horizon, requiring~$\mc O(\log K)$ calls to~\textbf{P1} with simpler cost. 

An alternate min-makespan formulation can be achieved by minimizing~$z$, such that~${z \geq t\, \pi_{ij,t}}$, $\forall\, i\notin\mc M,\, j\in\mc V,\, t$. However, the resulting LP is not totally unimodular in general. In other words, explicit makespan minimization requires integer programs, and therefore the implicit mechanism described here may be preferable.

The results in this section yield a polynomial-time LP for anonymous MAPF; see also Section~\ref{sec:comp}. This complexity however may be impractical for very large-scale MAPF. In the subsequent sections, we develop scalable solutions for~\textbf{P1} by casting MAPF as a discrete Schr\"odinger bridge, which provides a principled probabilistic framework for formulating and analyzing MAPF (Section~\ref{SchBridge}). This formulation, under appropriate conditions, leads to a convex Problem~\textbf{P2}, which we show is an entropic relaxation of~\textbf{P1}. The resulting~\textbf{P2} admits highly efficient Sinkhorn-type iterations (Appendix~\ref{sinkMAPF}) but yields fractional (shadow) transports. Integrality is then enforced with a subsequent projection step to recover executable robot motions (Section~\ref{intSinkMAPF}).

\newpage
\section{MAPF and the Schr\"{o}dinger Bridge Problem}\label{SchBridge}
The classical Schr\"{o}dinger bridge is described as follows. Given a reference diffusion process~\(G\) on a topological state space~\(\mathcal X\), with arbitrary marginals, find a probability measure~\(P^*\) on the space of continuous trajectories that minimizes the relative entropy with respect to~\(G\):
\[
P^*
= \argmin_{P}\,
\mathrm{KL}(P\,\|\,G),
\]
such that the measure~\(P^*\) has fixed initial and terminal marginals~\cite{schrodinger1931umkehrung,leonard2014survey}. Here,~$G$ is typically chosen as Brownian motion and~$P^*$ is the most plausible stochastic evolution whose continuous-time trajectories interpolate between the given boundary marginals. Schrödinger bridge formulations are widely used in applied physics and are typically studied in continuous settings to model the evolution of particle systems, see e.g., the hot and lazy gas experiments in~\cite{Villani2009,Leonard2017}. Originally introduced by Erwin Schrödinger in the 1930s, the Schrödinger bridge problem characterizes the path-space trajectories of gas particles from empirical observations of their distributions at two time instants, and is closely connected to large deviation theory, where Schrödinger bridges arise as minimizers of associated rate functionals~\cite{Follmer1988}.
We now cast MAPF as a Schrödinger bridge and characterize the conditions under which this formulation reduces to the MAPF–MMOT problem~\textbf{P1}.

For the remainder of the paper, we assume that ${T<\infty}$ is a feasible horizon for \textbf{P1}. Recall the factorization of~$\mb P$ in~\eqref{P_factor} and let~$\mathbf{G}$ denote a reference Markovian tensor on~$\mc V$ with a similar factorization, 
where~$\mathbf{G}_t$ are the transports of the reference distribution~$\mb G$ with marginals~${{\mb G}_t\mb 1 = \mb g_{t-1}}$ and~${{\mb G}_t^\top\mb 1 = \mb g_{t}}$. The S\"chrodinger bridge problem corresponding to~\textbf{P1} seeks a joint distribution~$\mathbf{P}$, and the corresponding  sequence of transports~$\{\Pi_t\}_{t=1}^T$, that is closest to~$\mathbf{G}$ in the relative entropy sense, i.e.,
\begin{equation}\label{KLtensor}
    \min_{\mathbf{P} \in \mathcal{C}} \mathrm{KL}(\mathbf{P} \| \mathbf{G}),
\end{equation}
where $\mathcal{C}$ is the set of tensors satisfying the constraints of~\textbf{P1} and each element of~${\mb G}$ is such that the $\mathrm{KL}$ divergence is well-defined. 
\begin{lem}\label{lem_KL}
    The Schrödinger bridge in~\eqref{KLtensor} reduces to
    \begin{align}\nonumber
    \mathrm{KL}(\mathbf{P} \| \mathbf{G})
    &= \sum_{t=1}^T \mathrm{KL}(\tfrac{1}{N}\Pi_t \| \mathbf{G}_t)
    + \mathrm{KL}(\tfrac{1}{N}\mb q_0 \| \mb g_0)\\\label{fullKL}
    &- \frac{1}{N}\sum_{t=1}^T \mathrm{KL}(\tfrac{1}{N}\mb q_{t-1} \| \mb g_{t-1}).
\end{align}
\end{lem}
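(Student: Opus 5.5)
The plan is to expand the relative entropy directly using the Markovian factorization~\eqref{P_factor} of both tensors. $\mathbf P$ factorizes through its two-marginals $\tfrac1N\Pi_t$ and one-marginals $\tfrac1N\mb q_t$. I will assume $\mathbf G$ factorizes the same way:
\[
\mathbf G_{i_0,\ldots,i_T}=[\mb g_0]_{i_0}\prod_{t=1}^T \frac{[\mathbf G_t]_{i_{t-1},i_t}}{[\mb g_{t-1}]_{i_{t-1}}}.
\]
Here $\mathbf G_t$ is normalized as a joint distribution. Write $\mathrm{KL}(\mathbf P\|\mathbf G)=\sum_{i_0,\ldots,i_T}\mathbf P_{i_0,\ldots,i_T}\log(\mathbf P_{i_0,\ldots,i_T}/\mathbf G_{i_0,\ldots,i_T})$. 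Taking the logarithm of the ratio of the two factorizations turns the log-ratio into a sum of three kinds of terms:
\[
\log\frac{\tfrac1N[\mb q_0]_{i_0}}{[\mb g_0]_{i_0}}
+\sum_{t=1}^T\log\frac{\tfrac1N[\Pi_t]_{i_{t-1},i_t}}{[\mathbf G_t]_{i_{t-1},i_t}}
-\sum_{t=1}^T\log\frac{\tfrac1N[\mb q_{t-1}]_{i_{t-1}}}{[\mb g_{t-1}]_{i_{t-1}}}.
\]
Each term depends on at most two consecutive indices.

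Next I sum each term against $\mathbf P$. For a term depending only on $(i_{t-1},i_t)$, summing $\mathbf P$ over all other indices gives the two-time marginal, which by definition of $\Pi_t$ is $\tfrac1N[\Pi_t]_{i_{t-1},i_t}$. That term therefore becomes $\mathrm{KL}(\tfrac1N\Pi_t\|\mathbf G_t)$. Likewise, terms depending only on $i_{t-1}$ reduce via the one-time marginal $\tfrac1N\mb q_{t-1}$ (equation (1)) to $\mathrm{KL}(\tfrac1N\mb q_{t-1}\|\mb g_{t-1})$, and the initial term becomes $\mathrm{KL}(\tfrac1N\mb q_0\|\mb g_0)$. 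This is the standard chain-rule decomposition of KL for Markov measures; collecting the pieces gives~\eqref{fullKL}.

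The main obstacle is bookkeeping of normalization constants, not the decomposition itself. The stated formula carries a factor $\tfrac1N$ in front of the subtracted sum, whereas the clean computation above yields coefficient $1$. I would first recheck how mass is normalized in (1) and in the definition of $\Pi_t$ (which includes the factor $N$), and how the reference marginals $\mb g_t$ are scaled relative to $\mathbf G_t$. The goal is to see whether the paper's convention, e.g. unnormalized $\mb g_t$ or a KL for nonnegative measures with the extra linear terms $-\sum p+\sum g$, accounts for the prefactor. If, after fixing these conventions consistently, the coefficient still comes out as $1$, I would present the identity with that coefficient and note that it only rescales a term that is constant once the gluing constraints fix the marginals structurally.

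Finally, I will address well-definedness. Wherever $[\mb q_{t-1}]_i=0$ the corresponding rows of $\Pi_t$ vanish, so the convention $0\log0=0$ applies. The assumption that each entry of $\mathbf G$ makes the KL finite (support of $\mathbf P$ contained in that of $\mathbf G$) justifies splitting the logarithms termwise.
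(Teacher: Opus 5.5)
Your proposal is correct and is the paper's own argument: substitute the Markov factorizations of $\mathbf P$ and $\mathbf G$, split the log-ratio into initial, pairwise and one-time terms, and marginalize $\mathbf P$ using (1) and the definition of $\Pi_t$.

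Your suspicion about the prefactor is justified. Under the paper's conventions ($\mathbf P$ sums to one, its one-time marginals are $\tfrac1N\mathbf q_t$), the paper's own ``distribute and marginalize'' step gives coefficient $1$ on the subtracted sum, and nothing in it produces $\tfrac1N$. With coefficient $1$, the $t=1$ term also cancels $\mathrm{KL}(\tfrac1N\mathbf q_0\|\mathbf g_0)$, so you should state the identity with coefficient $1$. Do not defend this by calling the marginal terms constant: the gluing constraints fix only $\mathbf q_0$ and $\mathbf q_T$ and leave $\mathbf q_1,\ldots,\mathbf q_{T-1}$ free.
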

The proof is provided in Appendix~\ref{p_lem_KL}. Note that the full Schr\"odinger bridge formulation decomposes into transport and marginal $\mathrm{KL}$ terms~\eqref{fullKL}; a related decomposition is derived in~\cite{pavonTicozzi2010discrete} for the initial-final marginal problem using conditional transition probabilities. Eq.~\eqref{fullKL} represents the general form of the Schrödinger bridge, through which one may impose a desirable structure on the robot trajectories by appropriately choosing the reference transports~$\mb G_t$ and marginals~$\mb g_t$. Consequently,~\eqref{fullKL} returns a MAPF transport~$\mb P$ that is consistent with the initial and final robot locations while remaining close to~$\mb G$. Existence and uniqueness of Schrödinger bridges are studied in~\cite{pavonTicozzi2010discrete}, where a solution is derived using space–time harmonic functions under suitable assumptions on the reference processes. In the following, we restrict the reference distributions to the Gibbs form, which leads to Sinkhorn-type iterations.

\begin{lem}\label{lem_KL_Gibbs}
Let~${C_t=\{c_{ij,t}\}}$ be the cost matrix and let~${{\mb G}_t=\{\bar g_{ij,t}\}}$ be the normalized Gibbs kernel, i.e., $
{{\bar g}_{ij,t} \!:=\! \frac{g_{ij,t}}{z_t}}, {g_{ij,t} \!:=\! \exp\!\left(-\frac{c_{ij,t}}{\varepsilon}\right)},$ and $ {z_t:=\sum_{k,\ell} g_{k\ell,t}}.$ Then, for each~$t$,
\begin{align*}
\mathrm{KL}(\tfrac{1}{N}\Pi_t \| {\mb G}_t)
&=
\tfrac{1}{N}\sum_{i,j}\Pi_{ij,t}\log \Pi_{ij,t} \\
&+\tfrac{1}{N}\sum_{i,j}\Pi_{ij,t} \frac{c_{ij,t}}{\varepsilon}
+ \log \tfrac{1}{N}+\log z_t.
\end{align*}
\end{lem}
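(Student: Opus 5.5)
The plan is a direct computation from the definition $\mathrm{KL}(p\|q)=\sum_{i,j} p_{ij}\log(p_{ij}/q_{ij})$ with $p=\tfrac1N\Pi_t$ and $q=\mb G_t$. I will use the conventions $0\log 0=0$ and $0\cdot\infty=0$ throughout. The first step is to expand the logarithm of the ratio. We have $\log p_{ij}=\log\Pi_{ij,t}+\log\tfrac1N$. Since $\bar g_{ij,t}=\exp(-c_{ij,t}/\varepsilon)/z_t$, we also have $\log \bar g_{ij,t} = -c_{ij,t}/\varepsilon-\log z_t$. Substituting these gives
\begin{align*}
\mathrm{KL}(\tfrac{1}{N}\Pi_t \| \mb G_t)=\tfrac1N\sum_{i,j}\Pi_{ij,t}\Big(\log\Pi_{ij,t}+\log\tfrac1N+\frac{c_{ij,t}}{\varepsilon}+\log z_t\Big).
\end{align*}
The first two sums in the statement then appear immediately.

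The remaining step is to show that the constant terms come with coefficient exactly one. This requires $\sum_{i,j}\Pi_{ij,t}=N$ for every $t$. I will derive it from the constraints in $\mc F$. Because $\Pi_1\mb 1=\bmu$ and $\|\bmu\|_1=N$, we get $\mb 1^\top\Pi_1\mb 1=N$. The gluing constraints $\Pi_t^\top\mb 1=\Pi_{t+1}\mb 1$ then give $\mb 1^\top\Pi_{t+1}\mb 1=\mb 1^\top\Pi_t\mb 1$, so induction on $t$ yields total mass $N$ at every step. This is the only place where feasibility enters, and it also matches the normalization of $\mb P$ to unit mass in~\eqref{P_factor}. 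With it, $\tfrac1N\sum_{i,j}\Pi_{ij,t}(\log\tfrac1N+\log z_t)=\log\tfrac1N+\log z_t$, which gives the stated identity.

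The main subtlety is well-definedness rather than algebra. For $(i,j)\notin\mc E$ we have $c_{ij,t}=+\infty$ by Assumption~\ref{assump1}(iii), so $\bar g_{ij,t}=0$. The KL divergence is then finite only if $\Pi_{ij,t}=0$ on those entries. I will restrict to feasible transports of finite cost, which is exactly the requirement in~\eqref{KLtensor} that the KL be well defined and the standing assumption that $T$ is a feasible horizon. On these entries the term $\Pi_{ij,t}c_{ij,t}/\varepsilon$ is $0\cdot\infty=0$ by convention, and $\Pi_{ij,t}\log\Pi_{ij,t}=0$. Every step of the expansion is therefore legitimate entrywise. Finally, $z_t>0$ because the self-loops in Assumption~\ref{assump1}(i) have finite cost, so $\log z_t$ is finite.
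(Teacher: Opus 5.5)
Your proposal is correct and follows the same route as the paper. Like the paper, you expand $\log\bigl(\tfrac1N\Pi_{ij,t}/\bar g_{ij,t}\bigr)$ entrywise using $\log\bar g_{ij,t}=-c_{ij,t}/\varepsilon-\log z_t$ and collapse the constant terms via $\sum_{i,j}\Pi_{ij,t}=N$; where the paper merely notes that total mass, you derive it from $\Pi_1\mathbf 1=\bmu$ and the gluing constraints and also handle the entries with $c_{ij,t}=+\infty$, while the paper's proof additionally records that the marginal KL terms in~\eqref{fullKL} are constants, which matters for \textbf{P2} rather than for this identity.
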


The proof is provided in Appendix~\ref{p_lem_KL_Gibbs}.
The above lemma leads to the following Schr\"odinger bridge formulation of MAPF, when the reference distribution~${\mb G}$ is the Gibbs kernel (after removing the constants that do not depend on the transport variables) and minimizing~${\varepsilon N\,\mathrm{KL}(\tfrac{1}{N}\Pi_t\|{\mb G}_t})$:
\begin{align*}
\textbf{P2:}\min_{\{\Pi_t\in\mc F\}_{t=1}^T}
\sum_{t=1}^T
\left(
\langle \Pi_t, C_t\rangle
+
\varepsilon\sum_{i,j}\pi_{ij,t}(\log \pi_{ij,t}-1)
\right)
\end{align*}
under the constraints of~\textbf{P1}. We let~$\{\widetilde\Pi_t\}_{t=1}^T$ denote the transport resulted by~\textbf{P2} and note that~\textbf{P2} is precisely the entropic regularization of~\textbf{P1}. 

To obtain efficient and scalable solutions,~\textbf{P2} imposes the non-negativity constraints on the transport variables and allows~${\Pi_t \ge 0}$. This relaxation yields a convex problem that can be solved efficiently using Sinkhorn-type iterations. Under this relaxation, the marginal distributions induced by~$\Pi_t$ may become fractional, and the marginal KL terms in the full
Schr\"odinger bridge objective (Eq.~\eqref{fullKL}) may not remain constant. Consequently,~\textbf{P2} may no longer remain exactly equivalent to the Schr\"odinger bridge but can be interpreted as a tractable relaxation thereof. Solving the convex relaxation~\textbf{P2} yields fractional (shadow) transports, over which integrality can be imposed, as we will develop in Section~\ref{intSinkMAPF}. We provide the corresponding multi-marginal Sinkhorn iterations to build the fractional shadow transports~$\widetilde\Pi$ in Appendix~\ref{sinkMAPF}. A formal discussion and analysis of Sinkhorn-MAPF is beyond the scope of this paper. 

\section{Integral Projection of Sinkhorn-MAPF}\label{intSinkMAPF}

To obtain an integral solution from the entropy-regularized transports~$\{\widetilde\Pi_t\}_{t=1}^T$, obtained from~\textbf{P2}, we project~$\widetilde\Pi_t$'s back on the totally unimodular polyhedron~$\mc F$. To this end, we minimize a modified objective~${\langle\Pi_t, C_t\rangle + \lambda\, \mathrm{KL}(\Pi_t\|\widetilde \Pi_t)}$ that penalizes the transport~$\Pi_t$ when it's far from~$\widetilde\Pi_t$. Adding the KL penalty however makes the objective nonlinear, which we address by linearizing around an operating point~$\pi^0_{ij,t}$, yielding
\begin{align*}
    \mathrm{KL}(\Pi_t\|\widetilde \Pi_t) &\approx 
    \sum_{i,j}\pi_{ij,t} \left(\log \pi_{ij,t}^0  - \log {\widetilde \pi_{ij,t}}\right)\\ 
    & + \sum_{i,j}\pi_{ij,t} 
    -\sum_{i,j} \pi_{ij,t}^0.
\end{align*}
Dropping constants and choosing~$\pi_{ij,t}^0$ to be a constant over all~$i,j$, we obtain 
\begin{align*}
    \textbf{P3:}~~
    \min_{\{\Pi_t\}_{t=1}^T} \sum_{t=1}^T&\left( \sum_{i,j}\pi_{ij,t}\left(c_{ij,t} - \lambda \log ({\widetilde \pi_{ij,t}}+\delta)\right)\right)\\
    \text{subject to}\quad &{\Pi_t\!\in\!\mc F},~ {\Pi_t\subseteq[\widetilde\Pi_t]_\eta}, \forall t,
\end{align*}
where~${\delta\geq 0}$ ensures the logarithm is well-defined, and~$[\widetilde\Pi_t]_\eta$ is the regularized transport~$\widetilde\Pi_t$ with zeros for all elements that are at most~$\eta$. We let~$\{\widehat\Pi_t\}_{t=1}^T$ denote the transport resulted by~\textbf{P3}, which is effectively built from the \textit{shadow} transport~\textbf{P2}, i.e, the entropic regularization of~\textbf{P1}. We provide a few important remarks regarding \textbf{P3} next:

\begin{itemize}
    \item The constants~$\lambda,\delta$ must be chosen carefully to ensure that the modified cost~${c_{ij,t} - \lambda \log ({\widetilde \pi_{ij,t}}+\delta)}$ lies in the purview of Assumption~\ref{assump1}. Clearly,~$\{\widehat\Pi_t\}_t$ is integral~$\{0,1\}$, because~\textbf{P3} is an LP under the same feasibility polyhedron~$\mc F$ of~\textbf{P1}. However, the pruned graph or~$T$ may no longer remain feasible and~$\eta$ may need to be adjusted accordingly. 

    \item The interplay between the three constants~$\ve,\lambda,\eta$ straddle the spectrum of transports from optimal to highly scalable. Note that~${\lambda=\eta=0}$, disconnects \textbf{P2} (regardless of~$\ve$) and recovers \textbf{P1}.

    \item Choosing~${\ve,\lambda>0}$, independent of~$\eta$, biases \textbf{P3} to place mass on edges with large~$\widetilde\pi_{i,j,t}$, since the modified objective in \textbf{P3} arises from a linearized KL divergence between~$\widehat\pi_{i,j,t}$ and~$\widetilde\pi_{i,j,t}$. Consequently,~$\widehat\pi_{i,j,t}$ may inherit the smoothing effect induced by~\textbf{P2} as~$\ve\uparrow$.

    \item A scalable recipe is apparent: solve convex~\textbf{P2} fast, build a shadow, and subsequently solve the~\textbf{P3} LP over a pruned graph (with appropriate choices of~$\ve,\eta,\lambda,\delta$). A detailed breakdown of these parameters, based on~$260$ experiments on a~$1.5$M-variable problem, is provided in Appendix~\ref{sensitivity}.

\end{itemize}

\section{Computation Complexity}\label{sec:comp}
The LP in Problem~\textbf{P1} returns~$T$ transport matrices~${\{\Pi_t\}_{t=1}^T}$, each supported on~$|\mc E|$ edges, resulting in~${n := |\mc E|T}$ decision variables and~$\mc O(n)$ constraints. On physical graphs with bounded-degree connectivity (e.g., nearest-neighbor motion), ${|\mc E| = \mc O(K)}$ and thus~${n = \mc O(KT)}$. Using classical interior-point methods for linear programming, Problem~\textbf{P1} admits worst-case bit-complexity bounds of order $\mc O(n^{3}L)$, where~$L$ is the encoding length of the input data. The corresponding methods may not return a basic optimal solution (a vertex of the constraint polyhedron); in such cases, standard crossover techniques can be used to recover an extreme-point solution~\cite{Wright1997,PotraWright2000}. While these methods provide polynomial-time guarantees, practical implementations often rely on simplex-based methods, which perform significantly faster on large-scale instances despite lacking polynomial worst-case guarantees.

To improve scalability,~\textbf{P2} solves an entropic relaxation via Sinkhorn iterations (Appendix~\ref{sinkMAPF}) that are specialized to the MAPF problem and its constraints. The convergence and complexity of multi-marginal Sinkhorn are studied, e.g., in~\cite{DiMarinoGerolin2020,Carlier2022}, where linear convergence is established when the iterations are viewed as block coordinate descent on a convex dual objective. In practice, only a small number of Sinkhorn iterations suffices to construct the pruned graph used later in~\textbf{P3}, as demonstrated in Section~\ref{sec:exp} and in the detailed experiments (Appendix~\ref{extra_sim}). Finally, Problem~\textbf{P3} solves the original linear program over the pruned graph and therefore has the same worst-case complexity bound as the aforementioned interior-point methods. However, due to the effective graph pruning based on the shadow transport of~\textbf{P2}, the number of variables is significantly reduced, with~${n\approx \zeta |\mc E|T}$, where in practice~$\zeta$ typically lies in the range~$[0.2,0.4]$ for large-scale instances, as demonstrated in the experiments.

The above complexity bounds are worst-case limits. In practice, modern LP solvers such as Gurobi are highly optimized and we observe empirical solve time scaling as~${O(K^{1.68})}$ for~\textbf{P1} and as~${O(K^{1.15})}$ for the~\textbf{P2}+\textbf{P3} pipeline; see Fig.~\ref{fig:scaling_runtime} and Appendix~\ref{scaling} for a detailed scaling study.

\begin{figure*}
    \centering
    \includegraphics[width=0.95\textwidth]{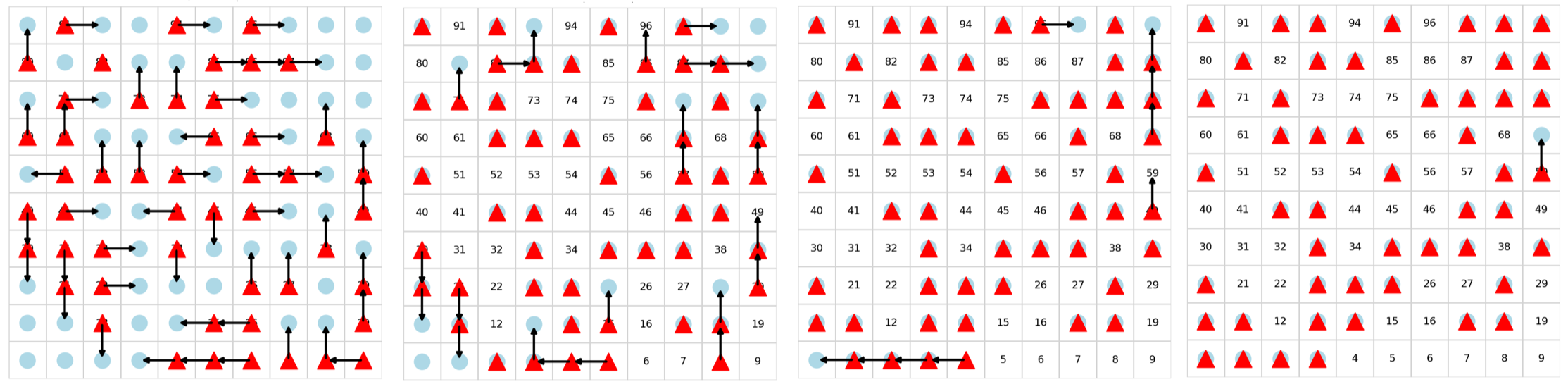}
    \includegraphics[width=0.95\textwidth]{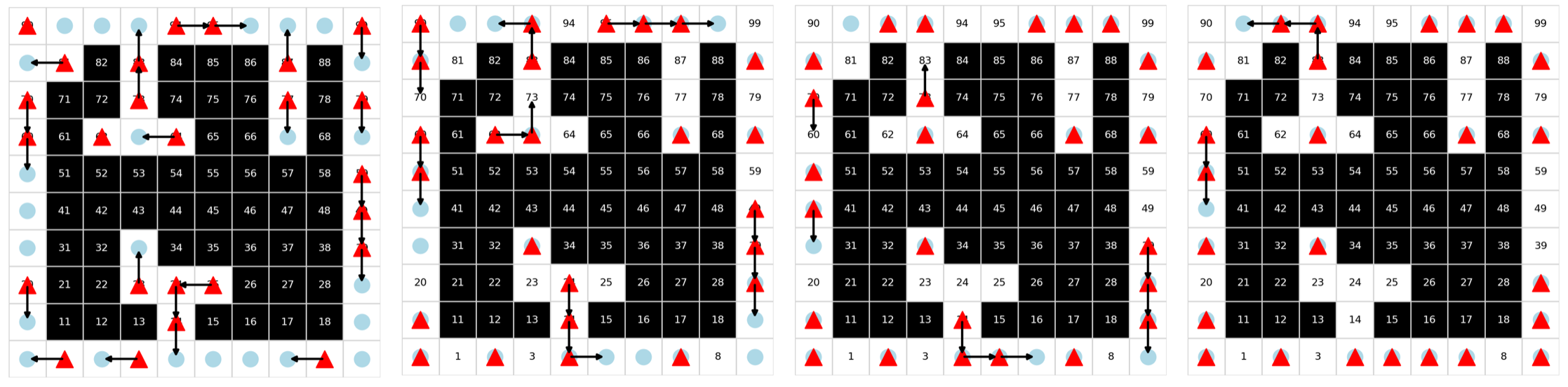}
    \caption{A~${10\times 10}$ grid with~${\{50,25\}}$ robots~$\robot$ and targets~$\target$, and~${\{0,50\}}$ obstacles.}
    \label{fig1}
\end{figure*}

\section{Experiments}\label{sec:exp}
We consider a grid of size~${K\!=\!{W\!\times\! H}}$ with potential obstacles; while the framework applies to arbitrary graphs, grids are chosen for simplicity.
Motion is allowed in the cardinal directions, while diagonal motion is prohibited. Waiting costs~$0$ at targets and~$0.5$ at non-targets; move cost is~${c_{i\neq j}=1}$ for all~${{i\to j \in\mc E}}$. The experiments are conducted using the HiGHS solver from the SciPy’s linprog library. Fig.~\ref{fig1} shows the min-cost~\textbf{P1} transport for two arbitrary configurations over~${T=3}$; we note that arbitrarily placed obstacles alter the connectivity and diameter of the underlying graph, and it no longer remains a regular grid. We next demonstrate the Schrödinger shadow transport (\textbf{P2}) in Fig.~\ref{fig3} on a~${40\times 40}$ grid with~$\{20,80\}$ robots and targets; all trajectories are superimposed. The left figures show the optimal integral transports~\textbf{P1}; the middle figures show the (fractional) Sinkhorn-MAPF with~${\varepsilon=50}$; while the right figures show integral~\textbf{P3} transports obtained on pruned graphs with nominal cost degradation. Finally, Fig.~\ref{fig4} demonstrates the cost degradation versus edges kept from the shadow transport on a~${K=W^2}$ grid with~$2W$ robots. We observe that shadow-based pruning is much more effective and feasibility is obtained with smaller number of edges as~$K\uparrow$ with a nominal cost degradation. 

To evaluate the runtime scaling, we switch to the Gurobi LP solver and conduct~$162$ independent runs on square grids ranging from~${K=2{,}500}$ to~${K=22{,}500}$ vertices at~$5\%$ robot density~(${N=0.05K}$) and~${T=30}$.
Fig.~\ref{fig:scaling_runtime} shows that~\textbf{P1} solve time
grows as~${O(K^{1.68})}$ while the~\textbf{P2}+\textbf{P3} pipeline scales nearly linearly as~${O(K^{1.15})}$, yielding speedups from~$3.6\times$ to~$7.1\times$ with cost gap consistently below~$10\%$. Every solution across all runs is verified integral. See Appendix~\ref{scaling} for the full scaling study.

Table~\ref{tab:sensitivity} reports the average cost gap~(\%) of
the~\textbf{P2}+\textbf{P3} pipeline over~$260$ runs at~${K=10{,}000}$
for varying~$(\varepsilon,\lambda)$. The regularization
parameter~$\varepsilon$ is the dominant factor: small~$\varepsilon$
produces a concentrated shadow close to the~\textbf{P1} optimum,
while~$\lambda$ has a milder effect. A robust default
is~${\varepsilon=0.2}$,~${\lambda=0}$: a~$4.3\%$ gap at~$5\times$ speedup.
See Appendix~\ref{sensitivity} for the full sensitivity analysis.

Appendix~\ref{nu_costs} validates the proposed approaches under non-uniform costs, and Appendix~\ref{comps} compares against the CBM baseline of~\cite{Ma2016}. 

\begin{table}[t]
\centering
\caption{Average cost gap~(\%) relative to~\textbf{P1} for
each~$(\varepsilon,\lambda)$ pair, over~$13$ instances
at~${K=10{,}000}$,~${T=30}$.}
\label{tab:sensitivity}
\vspace{0.3em}
\small
\begin{tabular}{@{}c c c c c@{}}
\toprule
$\varepsilon$ \textbackslash{} $\lambda$ & $0$ & $0.5$ & $1.0$ & $5.0$ \\
\midrule
$0.1$ & 2.3 & 2.5 & 2.7 & 3.0 \\
$0.2$ & 4.3 & 5.0 & 5.8 & 7.3 \\
$0.5$ & 11.1 & 12.7 & 14.0 & 16.5 \\
$1.0$ & 17.3 & 18.9 & 20.1 & 23.1 \\
$5.0$ & 17.1 & 18.0 & 18.7 & 19.9 \\
\bottomrule
\end{tabular}
\end{table}

\section{Conclusions}
In this paper, we develop a principled framework for multi-agent path finding (MAPF) that bridges multi-marginal optimal transport, entropy-regularized relaxations, and linear programming. By showing total unimodularity of the feasibility polyhedron, we obtain integral transports in polynomial time without explicitly enforcing integrality. We extend the methodology to Schrödinger bridges and entropic formulations that provide a novel probabilistic viewpoint of MAPF, yielding scalable approximations and structural guidance for reducing problem size. Building on this structure, the proposed projection and pruning strategies enable efficient recovery of executable transports, offering a flexible trade-off between tractability, smoothness, and optimality.

\clearpage
\begin{figure*}
    \centering
    \includegraphics[width=0.7\textwidth]{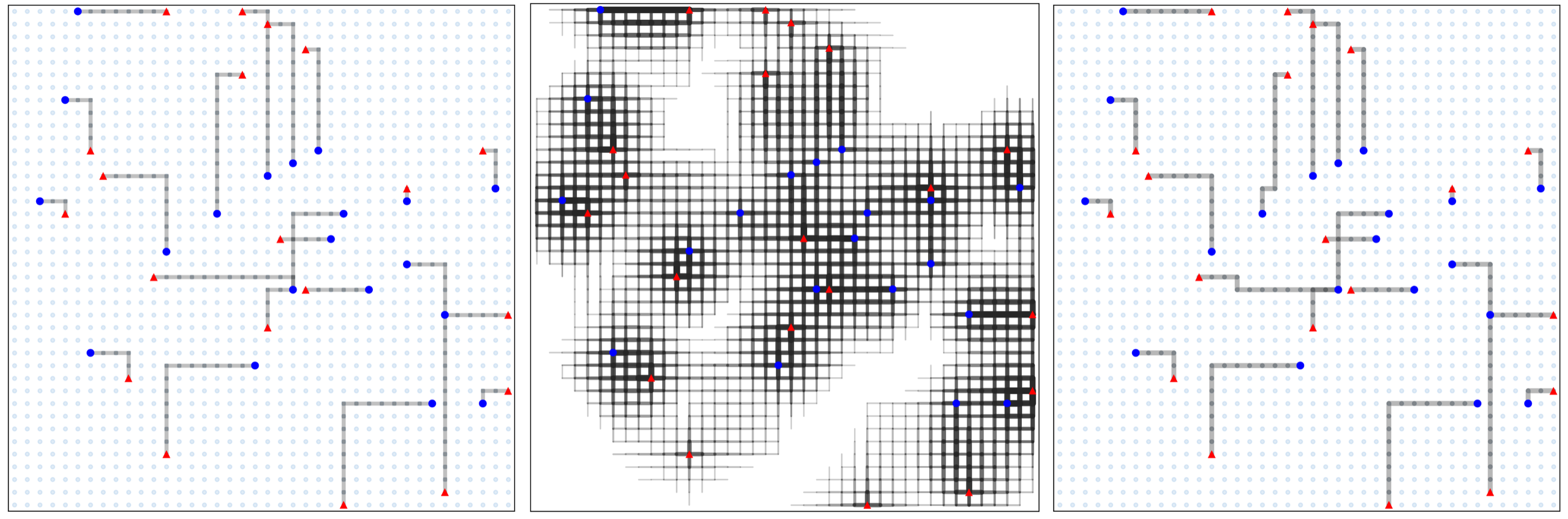}
    \includegraphics[width=0.7\textwidth]{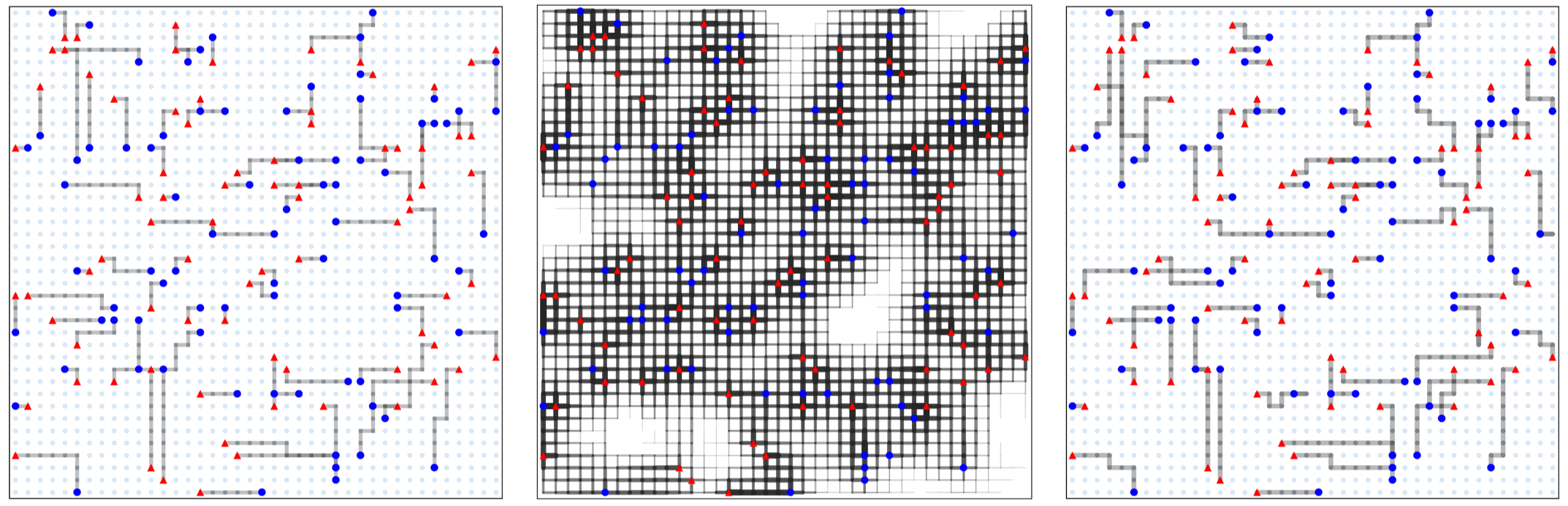}
    \caption{(Left) Optimal~\textbf{P1}; (Middle) Schr\"odinger shadow~\textbf{P2}; (Right) Integral projection~\textbf{P3}.
    Top (${N=20,T=15}$):~\textbf{P1} cost~$181$;~\textbf{P2} cost~$1053$;~\textbf{P3} with~$23\%$ edges retained at cost~$181$, i.e.,~$0\%$ degradation. 
    Bottom (${N=80,T=10}$):~\textbf{P1} cost~$402$;~\textbf{P2} cost~$3160$;~\textbf{P3} with~$22\%$ edges retained at cost~$436$, i.e.,~$8.5\%$ degradation.}
    \label{fig3}
\end{figure*}

\begin{figure*}
    \centering
    \includegraphics[width=0.9\textwidth]{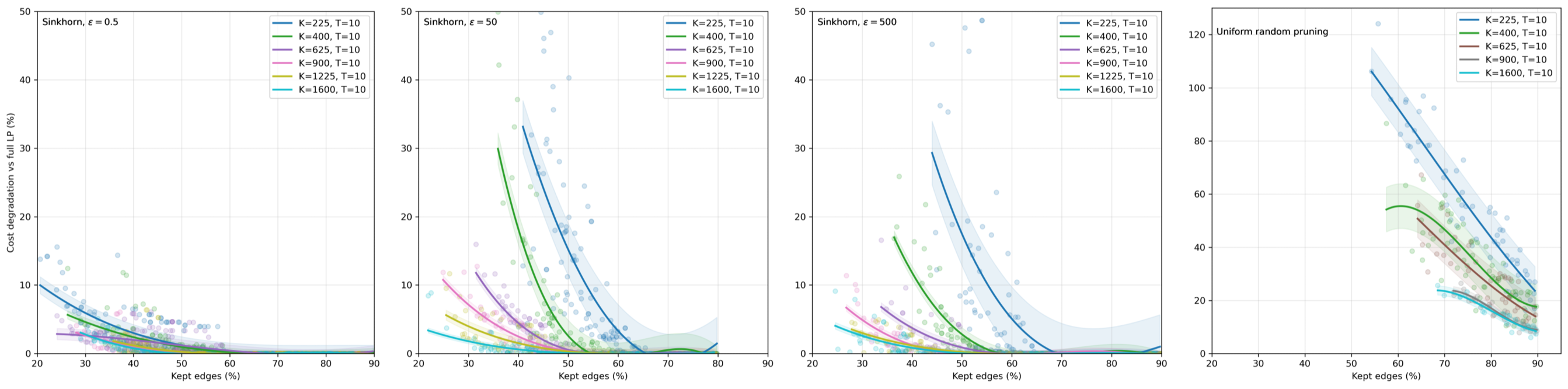}
    \caption{Scalable MAPF: The vertical axis plots the cost degradation~$x$, i.e., the cost of transport obtained from~\textbf{P3} is~$(1+x)c_{opt}$, where~$c_{opt}$ is the optimal min-cost from~\textbf{P1}; the horizontal axis plots the~$\%$ of edges retained from the full~\textbf{P1} transport.}
    \label{fig4}
\end{figure*}

\begin{figure*}[t]
    \centering
    \includegraphics[width=0.75\textwidth]{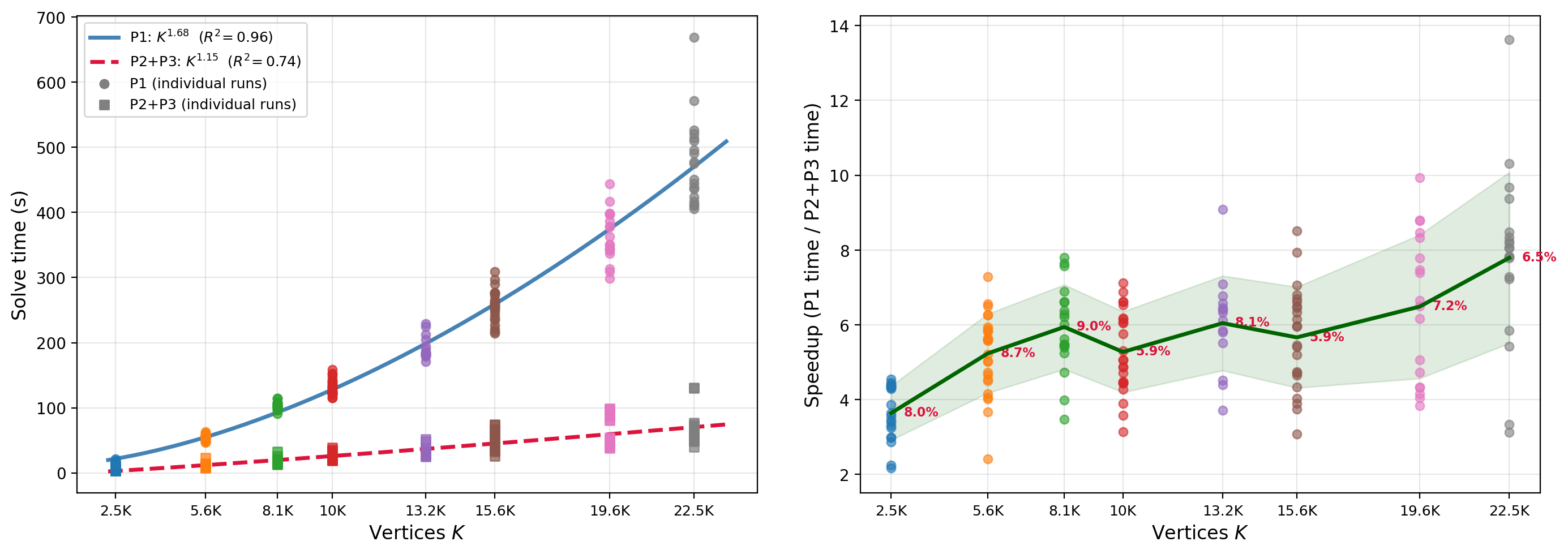}
    \caption{Runtime scaling across~$162$ runs at~$5\%$ robot density,~${T=30}$. (Left)~Solve time of~\textbf{P1} (circles) and~\textbf{P2}+\textbf{P3} (squares) versus~$K$; curves show power-law fits~${aK^p+b}$. (Right)~Speedup versus~$K$; the green line connects averages, the shaded band shows~${\pm 1}$ std.\ dev., and red annotations indicate the average cost gap.}
    \label{fig:scaling_runtime}
\end{figure*}

\clearpage
\section*{Acknowledgments}
This paper has been accepted for publication at the \textit{43rd International Conference on Machine Learning (ICML'26)}, July 2026, as a spotlight paper. 

\bibliographystyle{icml2026}
\bibliography{Refs.bib}

\appendix
\onecolumn

\section*{Appendix Overview}
The appendix is organized as follows.

\begin{table}[h]
\centering
\begin{tabular}{@{}l l@{}}
\toprule
Section & Content \\
\midrule
Appendix A & Proof of Lemma~\ref{lem1} (Feasibility) \\
Appendix B & Proof of Lemma~\ref{lem:P1-LP} (Integrality and TU) \\
Appendix C & Proof of Theorem~\ref{lem:P1-MAPF} (Collision-free transports) \\
Appendix D & Proof of Lemma~\ref{lem:mm} (Minimum makespan) \\
Appendix E & Proof of Lemma~\ref{lem_KL} (KL decomposition) \\
Appendix F & Proof of Lemma~\ref{lem_KL_Gibbs} (Gibbs kernel reduction) \\
Appendix G & Sinkhorn-MAPF algorithm \\
Appendix H & Detailed Experiments \\
\quad \ref{illustrative} & Illustrative Experiments (Figs.~\ref{a_fig1}--\ref{a_fig6}) \\
\quad \ref{scaling} & Scaling Study (162 runs, Table~\ref{tab:scaling}, Figs.~\ref{fig:scaling_tradeoff}--\ref{fig:scaling_breakdown}) \\
\quad \ref{sensitivity} & Parameter Sensitivity (260 runs, Table~\ref{tab:sinkhorn_time}, Fig.~\ref{fig:sensitivity}) \\
\quad \ref{nu_costs} & Non-uniform Costs (24 runs, Fig.~\ref{fig:nonuniform}) \\
\quad \ref{comps} & Baseline Comparison (15 runs, Table~\ref{tab:baseline}) \\
\bottomrule
\end{tabular}
\end{table}

\section{Proof of Lemma~\ref{lem1}}\label{p_lem1}
\begin{proof}
Consider a Markov chain~$\mathrm M$ whose state-space consists of all $K$-dimensional~$\{0,1\}$ vectors with \textit{exactly}~$N$ ones, representing the configuration space of the robots on the graph~$\mc G$. A transition from state~${\mb q\in \mrm M}$ to~${\mb q'\in\mrm M}$ is allowed (and given a non-zero probability), whenever~$\mb q'$ is obtained from~$\mb q$ by moving a single robot along a traversable edge~${(i,j)\in\mc E}$ to a neighboring location~$j$ that is unoccupied by a robot, or by keeping all robots at their current vertices. In other words, the state~$\mb q$ has outgoing (positive probability) edges to every state that can be achieved with exactly one robot's valid move and also to itself (no robot moves). Because ${\mc G}$ is connected, the robots are indistinguishable, 
a standard result from the pebble motion literature states that the configuration space of~$\mrm M$ is connected~\cite{KMS84}. Additionally, since~$\mc G$ is finite, under the imposed transition probabilities,~$\mrm M$ is irreducible and every state in~$\mrm M$ is recurrent, i.e., from any state~${\mb q_1 \in \mrm  M}$, one can reach any other state~${\mb q_2\in\mrm M}$, by a finite sequence of single-robot moves along the edges of~$\mc G$, with probability~$1$. Therefore, the terminal configuration~$\bnu$ is reachable from~$\bmu$ in a finite number of steps~$\bar T$. Clearly, a transport sequence~$\{\Pi_t\}_{t=1}^{\bar T}$ that encodes these Markovian transitions satisfies all feasibility constraints in~$\mc F$, and the theorem follows, since~$C_t$ is finite on~$\mc E$.
\end{proof}

\section{Proof of Lemma~\ref{lem:P1-LP}}\label{plem:P1-LP}
\begin{proof}
We first show (i): In order to establish the integral solution guarantee by the LP in~\textbf{P1}, we show that all constraints in~$\mc F$ can be written in the form of a node-arc incidence matrix of an augmented directed graph~$\bar{\mc G}$ that we construct as follows. Consider the time-expanded graph~$\mc G_{0}^{\bar T}$ over the horizon~$\bar T$, i.e., with vertices~$(i,t)$, for each~${i\in\mc V}$ and time~${t=0,\dots,\bar T}$, and arcs~${(i,t-1) \to (j,t)}$, for~${t=1,\ldots,\bar T}$, whenever the move~$i\to j$ is allowed by~$\mc G$. We further apply the classical node-splitting construction of Ford and Fulkerson~\cite{FordFulkerson1962,AhujaMagnantiOrlin1993} to augment~$\mc G_{0}^{\bar T}$ as follows. For every intermediate vertex~${(i,t),{t=1,\ldots,\bar {T}-1}}$, split it into two vertices
\[
(i,t)'' \quad\text{(in-node)}, \qquad (i,t)' \quad\text{(out-node)},
\]
and add an internal arc
\[
(i,t)'' \to (i,t)'. 
\]
All arcs that originally entered~$(i,t)$, now enter~$(i,t)''$, traverse the internal arc~${(i,t)''\to (i,t)'}$; and all arcs that originally left~$(i,t)$ now emanate from~$(i,t)'$. Thus, at each time~${t=1,\ldots,\bar {T}-1}$, a transport entering any vertex~$i$ is forced over~$(i,t)''\ra(i,t)'$ before it can move on to the next time layer~$t+1$. Let~$\bar{\mc G}$ denote the resulting augmented graph with~$(\bar T+1)$ layers of~$\mc G$, with a total of~${2K\bar T (= K+2K(\bar T-1)+K)}$ vertices due to node-splitting, and with~$K(\bar T-1)$ internal arcs added to the arcs in~$\mc G_0^{\bar T}$. 

Let $\bar\bpi$ stack all arc transport variables on $\bar{\mc G}$. It can be verified that all constraints in~$\mathbf{P1}$ can now be written as~${\bar A\bar\bpi\leq \bar {\mb b}}$, ${-\bar A\bar\bpi\leq -\bar {\mb b}}$, ${I\bar\bpi\leq \mb 1}$, and~${-I \bar\bpi\leq 0}$, where~${\bar{\mb b}\in\{0,\pm 1\}^{2K\bar T}}$ is such that it is~${-\mu_i}$ for each node~$(i,0)$ in layer~$0$,~$\nu_j$ for each node~$(j,\bar T)$ in the last layer~$\bar T$, and~$0$ on every intermediate split node. Define
\[
\hat A :=
\left[
\begin{array}{r}
\bar A\\ -\bar A\\ I\\ -I
\end{array}
\right],
\qquad
\hat{\mb b} :=
\left[
\begin{array}{r}
\bar {\mb b}\\ -\bar {\mb b}\\ \mb 1\\ \mb 0
\end{array}
\right].
\]
Since incidence matrices of directed graphs are totally unimodular (TU),~$\bar A$ is TU; see Chapter 19 in~\cite{schrijver1998}. Consequently, the overall constraint matrix~$\hat A$ is TU, because TU is preserved under row sign changes and appending rows of~$\pm I$. Since~$\hat A$ is TU, the polyhedron~${\{\bar\bpi:\hat A\bar\bpi\leq\hat{\mb b}\}}$ has integral extreme points. By the TUM theorem (Theorem 19.1 and Corollary 19.1a in~\cite{schrijver1998}), solving the LP corresponding to~\textbf{P1} on the augmented~$\bar{\mc G}$ admits an optimal basic solution that is integral. From standard network flow arguments~\cite{FordFulkerson1962,AhujaMagnantiOrlin1993}, any feasible flow in the augmented network yields an equivalent feasible flow in the original time-expanded network~$\mc G_0^{\bar T}$ that respects the node-capacity constraints, and conversely. Thus,~$\{\Pi^*_t\}_{t=1}^{\bar T}$ can be chosen as an optimal basic solution and is therefore integral, and~(i) follows. The rest of the lemma follows from the standard arguments in linear programming; see also Section~\ref{sec:comp} on precise complexity arguments.
\end{proof}

\section{Proof of Theorem~\ref{lem:P1-MAPF}}\label{p_lem:P1-MAPF}
\begin{proof}
We first show~(i). Robots may collide in the following scenarios: 
\begin{inparaenum}[(a)]
    \item at two intersecting edges~${i\to m}$ and~${j\to k}$, for distinct~$i,j,k,m$; or,
    \item a robot traversing~${i\to j}$ may collide with stationary robots at nearby vertices; or,
    \item when two robots travel to the same destination; or,
    \item at a bidirectional edge~${i\leftrightarrow j}$; or,
    \item in a flow cycle~${i_1\to i_2\to\cdots\to i_k\to i_1}$, for~${k\geq 3}$.
\end{inparaenum}
Clearly,~(a) and~(b) are ruled out because of Assumption~\ref{assump1}(\ref{conf_dist}), and~(c) is ruled out by the vertex-capacity constraint. To show that~(d) does not appear in the transport, we proceed as follows. Suppose, on the contrary, that there exists a min-cost transport~$\{\Pi_t\}_{t=1}^{\bar T}$, such that at time~$t$ and two distinct vertices~${i\neq j}$, we have~${\pi_{ij,t}=1}$ and~${\pi_{ji,t}=1}$, i.e., two robots simultaneously traverse the edges ${i\to j}$ and ${j\to i}$. Since self-loops ${i\to i}$ and ${j\to j}$ are feasible by Assumption~\ref{assump1}(i), define an alternate plan~$\{\tilde\Pi_t\}_{t=1}^{\bar T}$ that is exactly the same as the optimal~$\{\Pi_t\}_{t=1}^{\bar T}$, except for these swaps, which are replaced by waiting moves, i.e.,
\[
\tilde\pi_{ii,t}=1,\quad \tilde\pi_{jj,t}=1,\quad 
\tilde\pi_{ij,t}=0,\quad \tilde\pi_{ji,t}=0.
\]
This modification preserves feasibility, since the row and column sums of the~$t$-th slice~$\Pi_t$ remain unchanged and hence the distributions~$\mb q_{t-1}$ and~$\mb q_t$ are identical for both~$\{\Pi_t\}_{t=1}^{\bar T}$ and~$\{\tilde\Pi_t\}_{t=1}^{\bar T}$. By Assumption~\ref{assump1}(\ref{conf_self}),
\[
c_{ii,t}+c_{jj,t} < c_{ij,t}+c_{ji,t},
\]
and therefore
\[
\langle \tilde\Pi_t, C_t\rangle < \langle \Pi_t, C_t\rangle,
\]
contradicting the optimality of~$\{\Pi_t\}_{t=1}^{\bar T}$. With a 
similar argument, a~$k$-cycle leaves the configuration unchanged (because the robots are anonymous) at the price of~$k$ moves and is therefore also suboptimal. Hence, no min-cost solution contains a collision and~(i) follows;~(ii) follows consequently, and~(iii) is guaranteed by the terminal feasibility of~\textbf{P1}. 
\end{proof}

\section{Proof of Lemma~\ref{lem:mm}}\label{p_lem:mm}
\begin{proof}
That~\textbf{P1} results into integral transports is already established in Lemma~\ref{lem:P1-LP}. Let~${\Pi_t'\in\{0,1\}^{K\times K}},{t=1,\ldots,\bar T}$, be an optimal solution of~\textbf{P1} obtained with the cost structure~$C_t$ described in Assumption~\ref{assump:time-sep}, i.e.,~$\{\Pi_t'\}_{t=1}^{\bar T}$ minimizes the transport cost~$\sum_t\langle \Pi_t',C_t,\rangle$, over the horizon~${t=0,1,\ldots,\bar T}$, and let~${T'\le\bar T}$ be its makespan, i.e., there is no motion after~$T'$. Let~$T^*$ denote the minimum makespan over all feasible transports, and fix any feasible integral transport $\{\Pi_t^*\}_{t=1}^{\bar T}$, whose makespan is $T^*$
(extended to the horizon $\bar T$ by waits at targets). Clearly, we have that~$T'\ge T^*$ and we need to show that~${T'=T^*}$.

Suppose, on the contrary, that $T'>T^*$. Since $T'$ is the makespan of $\{\Pi_t'\}_t$, there exists at least one non-wait transport at time $T'$, i.e.,~${\pi'_{ij,T'}=1}$, for some ${i\neq j}$, with cost
\[
c_{ij,T'} = B^{T'}\tilde c_{ij} \;\ge\; B^{T'}\,\tilde c_{\min},
\]
from Assumption~\ref{assump:time-sep}, and therefore the min-cost transport~$\Pi_t'$ satisfies:
\[
\sum_{t=1}^{\bar T}\langle \Pi_t',C_t\rangle
\;\ge\; \langle \Pi_{T'}',C_{T'}\rangle
\;\ge\; c_{ij,T'} \;\ge\; B^{T'}\,\tilde c_{\min} \geq B^{T^*+1}\,\tilde c_{\min},
\]
since~${T'\ge T^*+1}$. Similarly,~$\{\Pi_t^*\}_t$ has no non-wait motion after time $T^*$, so its cost is supported only on times~$t\le T^*$, and using~$\pi^*_{ij,t}\le 1$, we have
\[
\sum_{t=1}^{\bar T}\langle \Pi_t^*,C_t\rangle
=
\sum_{t=1}^{T^*}\langle \Pi_t^*,C_t\rangle
\;\le\;
\sum_{t=1}^{T^*}\sum_{i\to j\in\mc E} c_{ij,t}
=
\sum_{t=1}^{T^*} B^t \sum_{i\to j\in\mc E}\tilde c_{ij}.
\]
By Assumption~\ref{assump:time-sep}, it follows that
\[
B^{T^*+1}\,\tilde c_{\min}
\;>\;
\frac{B^{T^*+1}}{B-1}\sum_{i\to j\in\mc E}\tilde c_{ij}
\;\ge\;
\sum_{t=1}^{T^*} B^t \sum_{i\to j\in\mc E}\tilde c_{ij}.
\]
Hence,~$\sum_{t=1}^{\bar T}\langle \Pi_t',C_t\rangle
>
\sum_{t=1}^{\bar T}\langle \Pi_t^*,C_t\rangle$, contradicting the optimality
of~$\{\Pi_t'\}_t$ and we conclude that~$T'=T^*$.
\end{proof}

\section{Proof of Lemma~\ref{lem_KL}}\label{p_lem_KL}

\begin{proof}
Since~${\mb G}$ is Markovian, it admits the following factorization:
\begin{equation*}
    \mathbf{G}_{i_0, \dots, i_T} = [\mb g_0]_{i_0} \prod_{t=1}^T \frac{[\mathbf{G}_t]_{i_{t-1}, i_t}}{[\mb g_{t-1}]_{i_{t-1}}},
\end{equation*}

Recalling the definition of~$\mathrm{KL}$ divergence, write
\begin{equation*}
    \mathrm{KL}(\mathbf{P} \| \mathbf{G}) = \sum_{i_0, \dots, i_T} \mathbf{P}_{i_0, \dots, i_T} \log \frac{\mathbf{P}_{i_0, \dots, i_T}}{\mathbf{G}_{i_0, \dots, i_T}}.
\end{equation*}
Substituting the Markov factorizations in~\eqref{KLtensor} yields
\begin{align*}
    \mathrm{KL}(\mathbf{P} \| \mathbf{G})
    &= \sum_{i_0, \dots, i_T} \mathbf{P}_{i_0, \dots, i_T}
    \Bigg[
    \log \frac{\tfrac{1}{N}[\mb q_0]_{i_0}}{[\mb g_0]_{i_0}}
    + \sum_{t=1}^T
    \left(
    \log \frac{\tfrac{1}{N}[\Pi_t]_{i_{t-1},i_t}}{[\mathbf{G}_t]_{i_{t-1},i_t}}
    - \log \frac{\tfrac{1}{N}[\mb q_{t-1}]_{i_{t-1}}}{[\mb g_{t-1}]_{i_{t-1}}}
    \right)
    \Bigg].
\end{align*}
Distributing the sums and marginalizing $\mathbf{P}$ yields~\eqref{fullKL} and the proof follows.
\end{proof}

\section{Proof of Lemma~\ref{lem_KL_Gibbs}}\label{p_lem_KL_Gibbs}

\begin{proof}
Recall that~$\mb q_t$, for any~${t=0,1,\dots,T}$, is a~$\{0,1\}$ vector with exactly~$N$ ones. Therefore, 
\begin{align*}
    \mathrm{KL}(\tfrac{1}{N}\mb q_t \| \mb g_t) = \sum_{i=1}^K \tfrac{1}{N}[\mb q_t]_i \log\frac{\tfrac{1}{N}[\mb q_t]_i}{[\mb g_t]_i} = \sum_{i\in \mc V_t} \frac{1}{N} \log\frac{\frac{1}{N}}{[\mb g_t]_i} 
    := \kappa_t,
\end{align*}
for all~$t$, since~$\mb g_t$ are given fixed references. From~\eqref{fullKL}, we have
\begin{align*}
    \mathrm{KL}(\mathbf{P} \| \mathbf{G})
    &= \sum_{t=1}^T \mathrm{KL}(\tfrac{1}{N}\Pi_t \| \mathbf{G}_t)
    + \kappa,
\end{align*}
where~$\kappa$ encodes all~$\kappa_t$'s and is independent of the transport variables. 
Fixing~$t$ and expanding the KL term, we get:
\begin{align*}
\mathrm{KL}(\tfrac{1}{N}\Pi_t\|{\mb G}_t)
&=
\sum_{i,j}\tfrac{1}{N}\Pi_{ij,t}\log\frac{\tfrac{1}{N}\Pi_{ij,t}}{{\mb G}_{ij,t}}\\
&=
\sum_{i,j}\tfrac{1}{N}\Pi_{ij,t}\log \tfrac{1}{N}\Pi_{ij,t}
-\sum_{i,j}\tfrac{1}{N}\Pi_{ij,t}\log \frac{g_{ij,t}}{z_t}\\
&=\sum_{i,j}\tfrac{1}{N}\Pi_{ij,t}\log \tfrac{1}{N}\Pi_{ij,t}
+\sum_{i,j}\tfrac{1}{N}\Pi_{ij,t} \frac{c_{ij,t}}{\varepsilon}
+\sum_{i,j}\tfrac{1}{N}\Pi_{ij,t}\log z_t,\\
&=\tfrac{1}{N}\sum_{i,j}\Pi_{ij,t}\log \Pi_{ij,t} + \log \tfrac{1}{N}
+\tfrac{1}{N}\sum_{i,j}\Pi_{ij,t} \frac{c_{ij,t}}{\varepsilon}
+\log z_t,
\end{align*}
which yields the desired result after noting that~$\sum_{i,j}\Pi_{ij,t}=N$.
\end{proof}

\section{Sinkhorn-MAPF}\label{sinkMAPF}
Recall that~\textbf{P2} is an entropic regularization of the LP in~\textbf{P1}. This regularization enables the use of efficient Sinkhorn-type algorithms, which scale to problem sizes where the original LP may become computationally intractable. We leverage these ideas to design an algorithm for the time-expanded transport formulation. Recall~${\mb G}_t$ from Lemma~\ref{lem_KL}. It is well known (see e.g., Lemma~2 in~\cite{cuturi2013sinkhorn}) that the minimizer of~\textbf{P2} is obtained at 
\begin{align*}
    \widetilde\Pi_t = \diag(\mb u_t)\, {\mb G}_t\, \diag(\mb v_t),\qquad t=1,\ldots,T,
\end{align*}
for some positive scaling vectors~${\mb u_t,\mb v_t \in \mbb R^K_+}$, where~$\diag(\mb u_t)$ is the diagonal matrix formed by the vector~$\mb u_t$. The Sinkhorn algorithm is an iterative scheme to find the scaling vectors~$\mb u_t,\mb v_t$ such that~$\widetilde\Pi_t$, as written above, satisfies the constraint set~$\mc F$. We describe this procedure next. Given the element-wise expansion of the above, i.e.,
\begin{align*}
    \widetilde\pi_{i,j,t} = \mb u_{i,t}\, {\mb G}_{ij,t}\, \mb v_{j,t},\qquad t=1,\ldots,T,~i,j=1,\ldots,K, 
\end{align*}
we note that, for each~$i,t$, 
\begin{align*}
    [\widetilde\Pi_t\mb 1]_i = \sum_{j}\widetilde\pi_{ij,t} = \mb u_{i,t}\, \sum_{j}{\mb G}_{ij,t} \mb v_{j,t} = \mb u_{i,t}\, [{\mb G}_{t} \mb v_t]_i,
\end{align*}
and similarly, for each~$j,t$,
\begin{align*}
    [\widetilde\Pi_t^\top\mb 1]_j = \sum_{i}\widetilde\pi_{ij,t} = \mb v_{j,t}\sum_{i} \mb u_{i,t}\, {\mb G}_{ij,t}  = \mb v_{j,t}\, [{\mb G}_{t}^\top \mb u_t]_j.
\end{align*}
We can now write all constraints in~$\mc F$ in terms of the scaling vectors as follows:
\begin{align*}
\mb q_{t-1} &= \widetilde\Pi_t\mb 1 = \mb u_t \odot [{\mb G}_{t}\mb v_t],\\
\mb q_{t} &= \widetilde\Pi_t^\top \mb 1 = \mb v_t \odot [{\mb G}_{t}^\top \mb u_t],
\end{align*}
where~$\odot$ denotes the element-wise product. Since~${\widetilde\Pi_t^\top \mb 1 = \mb q_t = \widetilde\Pi_{t+1} \mb 1}$, we have the dynamic consistency equation:
\begin{align}
\mb v_t \odot [G_t^\top \mb u_t] = \mb u_{t+1} \odot [G_{t+1}\mb v_{t+1}],\qquad t=1,\ldots,T-1.
\end{align}
The boundary conditions are given by
\begin{align}
\mb q_{0} = \widetilde\Pi_1\mb 1 &= \mb u_1 \odot [{\mb G}_1\mb v_1] = \bmu,\\
\mb q_{T} = \widetilde\Pi_T^\top \mb 1 &= \mb v_T \odot [{\mb G}_T^\top \mb u_T] = \bnu.
\end{align}
We now define the Sinkhorn-MAPF algorithm below\footnote{It is not uncommon to implement the resulting equations in the log domain to account for the numerical instabilities when~$\varepsilon\to0$, see e.g.,~\cite{Schmitzer2019Stabilized}.}:

\textbf{Initialization:} Set ${\mb u_t^{(0)} = \mb v_t^{(0)} = \mathbf{1}}$ for all ${t=1,\ldots,T}$ and then normalize the boundary slices only on the prescribed supports, i.e.,
\begin{align*}
[\mb u_1^{(0)}]_i \gets \frac{[\bmu]_i}{[G_1 \mb v_1^{(0)}]_i},~\text{for all } i \in \mathrm{supp}(\bmu),\qquad
[\mb v_T^{(0)}]_j \gets \frac{[\bnu]_j}{[G_T^\top \mb u_T^{(0)}]_j},~\text{for all } j \in \mathrm{supp}(\bnu);
\end{align*}
where we adopt the convention throughout that the ratio is set to~$0$ if the denominator is zero unless otherwise stated.

\textbf{Sinkhorn sweeps:} For~$\tau=0,\dots,{\bar\tau-1}$, repeat the following: 
\begin{enumerate}
\item \textit{Starting marginal projection: }Given~$\mb v_1^{(\tau)}$, update~$\mb u_1^{(\tau+1)}$ to conform with the starting marginal~$\bmu$, using
\begin{align*}
    [\mb u_1^{(\tau+1)}]_i = 
    \begin{cases}
        [\bmu]_i/[G_1\mb v_1^{(\tau)}]_i,& \mbox{if }[\bmu]_i=1,\\
        [\mb u_1^{(\tau)}]_i, & \mbox{if }[\bmu]_i=0,
    \end{cases}
\end{align*}
and~$\mb u_t^{(\tau+1)}=\mb u_t^{(\tau)},$ for~${t>1}$.

\item \textit{Forward consistency projection:} 
For each~$t=1,\ldots,{T-1}$, enforce the dynamic consistency between~$t$ and~$t+1$. 
First compute the current scaling constants
\begin{align*}
    [\mb q_{t}^{\text{out}}]_i &= [\mb v_t^{(\tau)}]_i \cdot [G_t^\top \mb u_t^{(\tau+1)}]_i,\\
    [\mb q_{t}^{\text{in}}]_i  &= [\mb u_{t+1}^{(\tau+1)}]_i \cdot [G_{t+1}\,\mb v_{t+1}^{(\tau)}]_i,
\end{align*}
and define a correction factor
\begin{align*}
    [\boldsymbol\gamma_t]_i = 
    \begin{cases}
        \sqrt{ \dfrac{[\mb q_{t}^{\text{in}}]_i}{[\mb q_{t}^{\text{out}}]_i} },& \text{if }[\mb q_{t}^{\text{out}}]_i>0,\\[6pt]
        1,& \text{otw.}
    \end{cases}
\end{align*}
to update
\begin{align*}
    [\mb v_t^{(\tau+1)}]_i     &= [\mb v_t^{(\tau)}]_i\, [\boldsymbol\gamma_t]_i^\eta,\\
    [\mb u_{t+1}^{(\tau+1)}]_i &= [\mb u_{t+1}^{(\tau+1)}]_i\, [\boldsymbol\gamma_t]_i^{-\eta}.
\end{align*}
The multiplicative factor~$[\boldsymbol\gamma_t]_i^\eta$ moves 
$[\mb q_{t}^{\text{out}}]_i$ and $[\mb q_{t}^{\text{in}}]_i$ closer at each node~$i$.
If ${\eta=1}$ (and no other constraints are active), 
then the update enforces $[\mb q_{t}^{\text{out}}]_i = [\mb q_{t}^{\text{in}}]_i$ in one step; 
for $\eta\in(0,1)$, it is a damped projection.

\item \textit{Terminal marginal projection: }Given~$\mb u_T^{(\tau+1)}$, update~$\mb v_T^{(\tau+1)}$ to conform with the terminal marginal~$\bnu$, using
\begin{align*}
    [\mb v_T^{(\tau+1)}]_j = 
    \begin{cases}
        [\bnu]_j/[G_T^\top\mb u_T^{(\tau+1)}]_j,& \mbox{if }[\bnu]_j=1,\\
        [\mb v_T^{(\tau)}]_j, & \mbox{if }[\bnu]_j=0.
    \end{cases}
\end{align*}
For~${t<T}$, we already have~$\mb v_t^{(\tau+1)}$ from the consistency update above.

\item \textit{Backward consistency projection:} For each $t=T-1,\ldots,1$, enforce the dynamic consistency between~${t+1}$ and~$t$, i.e., in reverse time.
Compute
\begin{align*}
    [\bar{\mb q}_{t}^{\mathrm{out}}]_i &= 
        [\mb u_{t+1}^{(\tau+1)}]_i \cdot 
        [G_{t+1}\,\mb v_{t+1}^{(\tau+1)}]_i,\\
    [\bar{\mb q}_{t}^{\mathrm{in}}]_i &= 
        [\mb v_{t}^{(\tau+1)}]_i \cdot 
        [G_t^\top \mb u_{t}^{(\tau+1)}]_i,
\end{align*}
and
\begin{align*}
    [\bar\gamma_t]_i = 
    \begin{cases}
        \sqrt{\dfrac{[\bar{\mb q}_{t}^{\mathrm{in}}]_i}{[\bar{\mb q}_{t}^{\mathrm{out}}]_i}}, 
        & [\bar{\mb q}_{t}^{\mathrm{out}}]_i>0,\\[6pt]
        1, & \text{otw.}
    \end{cases}
\end{align*}
with updates
\begin{align*}
    [\mb u_{t+1}^{(\tau+1)}]_i &= 
        [\mb u_{t+1}^{(\tau+1)}]_i\, [\bar\gamma_t]_i^\eta,\\
    [\mb v_{t}^{(\tau+1)}]_i &= 
        [\mb v_{t}^{(\tau+1)}]_i\, [\bar\gamma_t]_i^{-\eta}.
\end{align*}
As in the forward sweep, the multiplicative factor~$[\boldsymbol\gamma_t]_i^\eta$  reduces the discrepancy between $[\mb q_{t}^{\text{out}}]_i$ and~$[\mb q_{t}^{\text{in}}]_i$ at each node~$i$, but now propagates corrections backward in time. 
\end{enumerate}

\textbf{Final regularized transports:} The algorithm is terminated at~${\tau=\bar\tau-1}$, resulting in
\begin{align*}
    \widetilde\Pi_t = \diag(\mb u_t^{(\bar\tau)})\, G_{t}\, \diag(\mb v_t^{(\bar\tau)}),\qquad t=1,\ldots,T,
\end{align*}

\section{Detailed Experiments}\label{extra_sim}
In this section, we provide a detailed set of experiments. We consider square grids of size~${K=W^2}$ with~$N$ robots,~$M$ targets, and~$O$ obstacles, such that~${N=M}$. While the proposed framework applies to arbitrary graphs, grids are chosen for ease of visualization and reproducibility. The time horizons are chosen such that the corresponding LPs are feasible; smaller values of~$T$ are only chosen for convenience as it is easier to display the corresponding trajectories. Unless stated otherwise, the cost structure is such that waiting cost at targets is~${c_{ii}=0,i\in\mc M}$, waiting at non-target is~${c_{jj}}=0.5,j\notin\mc M$, while the move cost is~${c_{i\neq j}=1},$ for all~${{i\to j \in\mc E}}$. 

\subsection{Illustrative Experiments}\label{illustrative}

We first provide some basic experiments to demonstrate the main ideas. The experiment setup is described in the caption of each figure and the corresponding LPs are solved using the standard linear programming suite in the HiGHS solver from the SciPy's linprog library; all variables are continuous by default. Figs.~\ref{a_fig1},~\ref{a_fig2}, and~\ref{a_fig3} solve~\textbf{P1} over a given time horizon~$T$; we note that arbitrarily placed obstacles alter the connectivity and diameter of the underlying graph, and it no longer remains a regular grid. Figs.~\ref{ms_fig1}-\ref{ms_fig4} elaborate the min-cost versus minimum makespan nature of the transports returned by~\textbf{P1}. Figs.~\ref{a_fig4},~\ref{a_fig5}, and~\ref{a_fig6} solve~\textbf{P2} with the Sinkhorn iterations of Appendix~\ref{sinkMAPF} for the following~${(\varepsilon,\bar\tau)}$ pairs, where~$\bar\tau$ are the total number of Sinkhorn iterations: $\{(0.2,50), (0.5,10), (50,5)\}$; then prune the resulting graph and project on the integral~\textbf{P3} LP. As known for Sinkhorn iterations,~\textbf{P2} may require a larger~$\bar\tau$ as~$\varepsilon$ decreases; however, a useful shadow transport is typically obtained within a small number of iterations.

In the following subsections (Sections~\ref{scaling}--\ref{comps}), we conduct a large-scale study of the proposed approaches. We ran over~$460$ experiments on graphs ranging from~${K=2{,}500}$ to~${K=22{,}500}$ vertices, i.e., from~$369$K to~$3.4$M LP variables, using the Gurobi LP solver with continuous variables; all \textbf{P1} and \textbf{P3} solutions are verified integral as guaranteed by the TU of the underlying LP.

\subsection{Scaling Study}\label{scaling}

We evaluate the runtime scaling of~\textbf{P1} and the~\textbf{P2}+\textbf{P3} pipeline on a 2022 MacBook; all LPs use continuous variables. We consider square grids of size~${K=W^2}$ with no obstacles, at~$5\%$ robot density~(${N=0.05K}$), over a horizon~${T=30}$. The cost structure follows the convention adopted in Assumption~\ref{assump1}. The Sinkhorn parameters are~${\varepsilon=0.2}$ and~${\bar\tau=150}$ sweeps. For each grid size, we generate~$14$--$25$ random instances and report averaged results. On grid graphs, each vertex has at most~$5$ neighbors ($4$ cardinal directions plus a self-loop), so the number of variables in~\textbf{P1} is~${|\mc E|T \approx 5KT}$. For example, at~${K=22{,}500}$ with~${T=30}$:~${|\mc E|T = 3{,}357{,}000}$. 

Table~\ref{tab:scaling} summarizes the results across~$8$ grid sizes, totaling~$162$ runs. For each grid, we report the number of~\textbf{P1} variables~(${=|\mc E|T}$), the average~\textbf{P1} solve time, the average~\textbf{P2}+\textbf{P3} solve times (Sinkhorn + feasible~\textbf{P3}), the resulting speedup, cost gap relative to~\textbf{P1}, and the percentage of edges retained from the shadow transport (which also provides the average number of~\textbf{P3} variables, kept after pruning).

\begin{table}[!htb]
\centering
\caption{Scaling of~\textbf{P1} and~\textbf{P2}+\textbf{P3} across grid sizes at~$5\%$ robot density~(${N=0.05K}$),~${T=30}$,~${\varepsilon=0.2}$. All times are averaged and are reported in seconds~(s). Every solution across all~$162$ runs is verified integral.}
\label{tab:scaling}
\vspace{0.3em}
\small
\begin{tabular}{@{}c c c c c c c c c@{}}
\toprule
$K$ & $N$ & Runs & \textbf{P1} vars & \textbf{P1} (s) & \textbf{P2}+\textbf{P3} (s) & Speedup & Gap (\%) & Kept (\%) \\
\midrule
$2{,}500$   & $125$   & 20 & $369$K   &   15 &    4 & $3.6\times$ & 8.0 & 32 \\
$5{,}625$   & $281$   & 25 & $835$K   &   55 &   11 & $5.0\times$ & 8.7 & 35 \\
$8{,}100$   & $405$   & 19 & $1.2$M   &  103 &   18 & $5.7\times$ & 9.0 & 37 \\
$10{,}000$  & $500$   & 24 & $1.5$M   &  132 &   26 & $5.1\times$ & 5.9 & 41 \\
$13{,}225$  & $661$   & 14 & $2.0$M   &  193 &   33 & $5.8\times$ & 8.1 & 39 \\
$15{,}625$  & $781$   & 23 & $2.3$M   &  257 &   48 & $5.3\times$ & 5.9 & 43 \\
$19{,}600$  & $980$   & 18 & $2.9$M   &  364 &   62 & $5.8\times$ & 7.2 & 42 \\
$22{,}500$  & $1{,}125$ & 19 & $3.4$M &  478 &   67 & $7.1\times$ & 6.5 & 40 \\
\bottomrule
\end{tabular}
\end{table}

Fig.~\ref{fig:scaling_runtime} (in Section~\ref{sec:exp}, left panel) plots the~\textbf{P1} and~\textbf{P2}+\textbf{P3} solve times against the number of grid vertices~$K$, with power-law fits of the form~${aK^p + b}$ over all~$162$ individual runs. The fitted models are
\[
\textbf{P1:}\quad \textrm{solve time} = 2.26\!\times\!10^{-5}\, K^{1.68} + 10.2 \quad (R^2 = 0.96),
\]
\[
\textbf{P2}{+}\textbf{P3:}\quad \textrm{solve time} = 7.40\!\times\!10^{-4}\, K^{1.15} - 2.97 \quad (R^2 = 0.74).
\]
We note that~\textbf{P1} time grows sub-quadratically in~$K$, while the~\textbf{P2}+\textbf{P3} oracle time scales almost linearly. The speedup, shown in Fig.~\ref{fig:scaling_runtime} (right panel), grows with problem size from~$3.6\times$ at~${K=2{,}500}$ to~$7.1\times$ at~${K=22{,}500}$, while the cost gap remains consistently below~$10\%$ (median~$6.4\%$). Since~\textbf{P1} already provides the exact optimum, the~\textbf{P2}+\textbf{P3} pipeline offers a practical speed-quality tradeoff: a~$5$--$7\times$ speedup at under~$10\%$ cost degradation.

Fig.~\ref{fig:scaling_tradeoff} examines the cost-gap-versus-speedup tradeoff from two perspectives. The left panel plots the cost gap against the speedup for each of the~$162$ individual runs, colored by the number of vertices~$K$. The cluster structure confirms that larger instances achieve higher speedups at comparable or lower cost gaps, i.e., the shadow-based pruning becomes more effective as~$K$ grows. The right panel plots the average cost gap and speedup jointly against~$K$ on a dual axis. The cost gap remains stable between~$5$--$9\%$ across all grid sizes, while the speedup increases steadily from~$3.6\times$ to~$7.1\times$.

Fig.~\ref{fig:scaling_breakdown} provides a complementary view of the pipeline. The left panel decomposes the average solve time into~\textbf{P1} (gray), Sinkhorn (blue), and~\textbf{P3} LP (red) components. The connected dots trace the scaling shape: the~\textbf{P1} curve grows subquadratically while the~\textbf{P2}+\textbf{P3} curve remains nearly flat. The italic percentages above each stacked bar indicate the Sinkhorn share of the~\textbf{P2}+\textbf{P3} oracle time, which decreases from~$63\%$ at~${K=2{,}500}$ to~$34\%$ at~${K=22{,}500}$; at large scale, the LP solve dominates. The right panel shows the variable reduction achieved by shadow-based pruning:~\textbf{P3} consistently operates on~$32$--$43\%$ of the~\textbf{P1} variables.

\begin{figure}[!htb]
    \centering
    \includegraphics[width=0.75\textwidth]{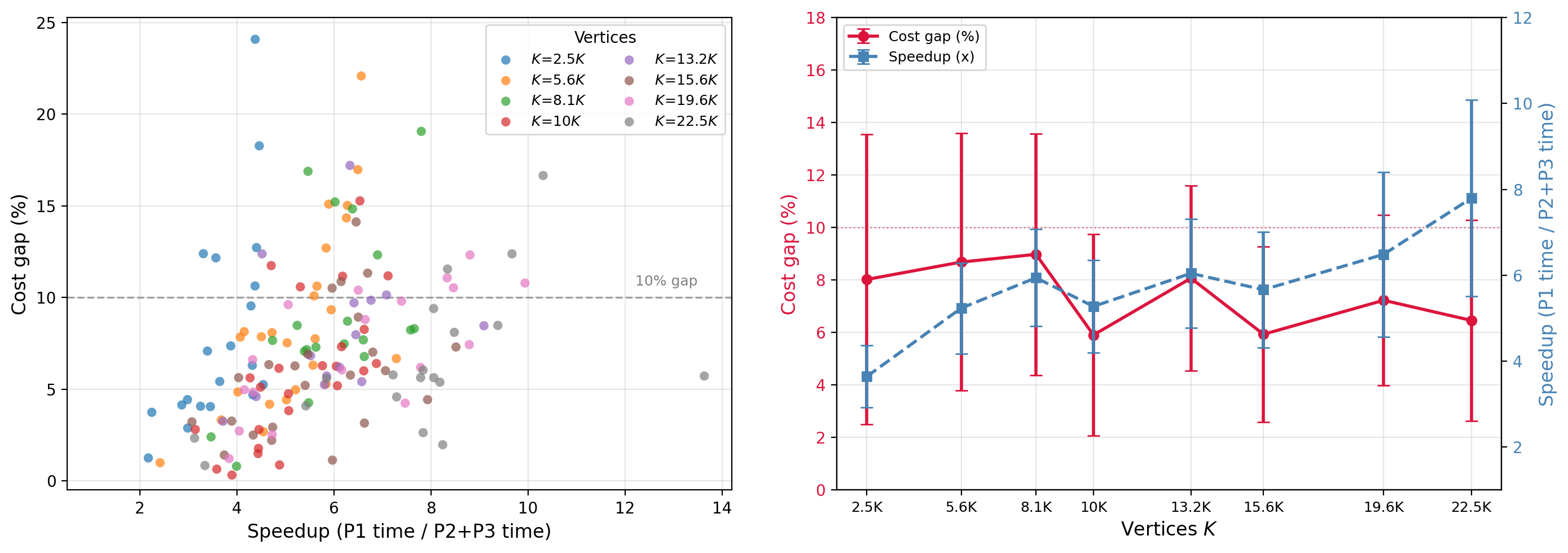}
    \caption{Cost gap and speedup tradeoff across~$162$ scaling runs at~$5\%$ robot density,~${T=30}$. (Left)~Cost gap~(\%) versus speedup for each individual run, colored by the number of vertices~$K$; the dashed line marks the~$10\%$ gap threshold. (Right)~Average cost gap~(\%, left axis, red) and speedup~(right axis, blue) versus~$K$, with~${\pm 1}$ standard deviation error bars; the dotted line marks the~$10\%$ gap threshold.}
    \label{fig:scaling_tradeoff}
\end{figure}

\begin{figure}[!htb]
    \centering
    \includegraphics[width=0.75\textwidth]{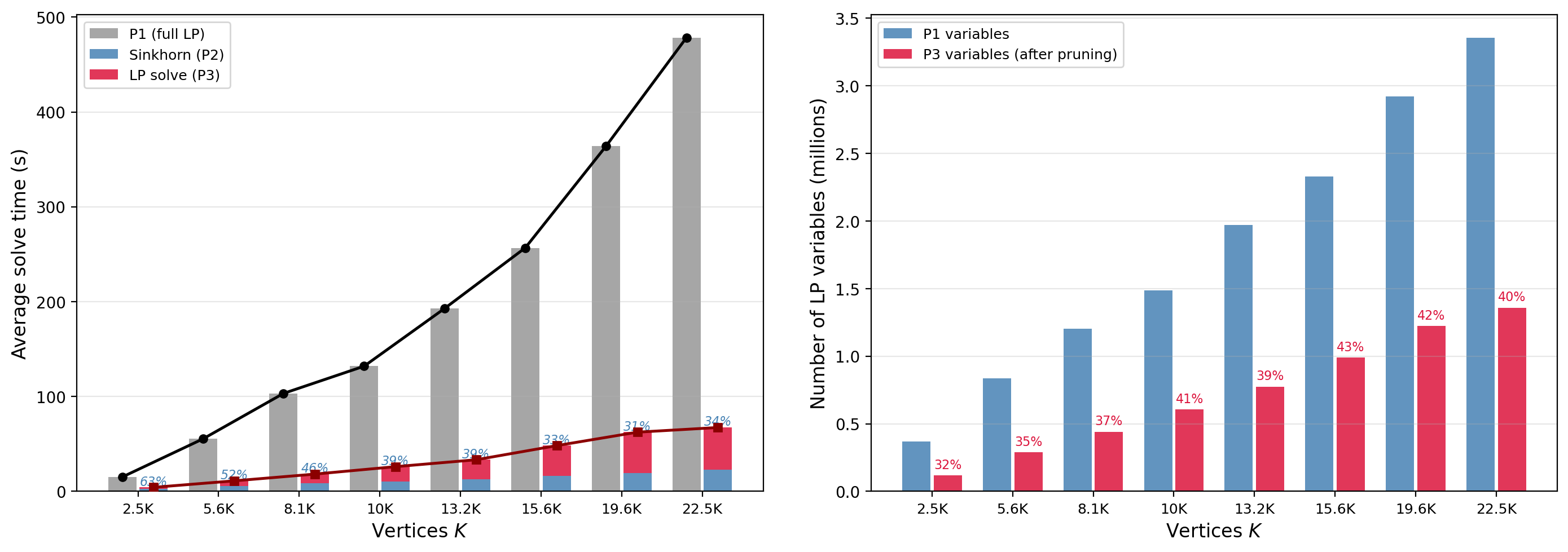}
    \caption{Pipeline decomposition across~$8$ grid sizes at~$5\%$ robot density,~${T=30}$; all values are averages over~$14$--$25$ independent instances. (Left)~Solve time of~\textbf{P1} (gray bars) versus the Sinkhorn~(\textbf{P2}, blue) and LP solve~(\textbf{P3}, red) components of the pipeline; connected dots trace the scaling shape of each; italic percentages show the Sinkhorn share of the~\textbf{P2}+\textbf{P3} time. (Right)~Number of LP variables in~\textbf{P1} (blue) versus~\textbf{P3} after pruning (red); percentages indicate the fraction of~\textbf{P1} variables retained.}
    \label{fig:scaling_breakdown}
\end{figure}

\subsection{Sinkhorn and Pruning Parameter Sensitivity}\label{sensitivity}
In this section, we study the sensitivity of the scalable~\textbf{P2}+\textbf{P3} pipeline to the corresponding parameters~$\varepsilon$ and~$\lambda$. We sweep~${\varepsilon\in\{0.1,0.2,0.5,1.0,5.0\}}$ and~${\lambda\in\{0,0.5,1.0,5.0\}}$ over~$13$ independent random instances on~${K=10{,}000}$ vertices~(${W=H=100}$,~${N=500}$,~${T=30}$,~$1.5$M variables in~\textbf{P1}), for a total of~$260$ runs ($20$ parameter combinations per instance). The~\textbf{P1} baseline solves on average in~$130$s. Of the~$260$ runs,~$233$ are feasible; the~$27$ infeasible cases arise at the extremes~${\varepsilon=0.1}$ ($19$ runs across~$5$ instances) and~${\varepsilon=5.0}$ ($8$ runs across~$2$ instances). These infeasibilities are artifacts of the fixed pruning threshold~(${\approx 40\text{--}45\%}$ of edges retained); relaxing the threshold to retain more edges recovers feasibility in all cases. Every feasible solution is verified integral.

Table~\ref{tab:sensitivity} (in Section~\ref{sec:exp}) reports the average cost gap~(\%) relative to~\textbf{P1} for each~$(\varepsilon,\lambda)$ combination. The parameter~$\varepsilon$ is the dominant factor: small~$\varepsilon$~(${\leq 0.2}$) produces a concentrated shadow close to the~\textbf{P1} optimum, enabling aggressive pruning with~$2$--$5\%$ gap, while large~$\varepsilon$ smooths the shadow and increases the gap to~$17$--$20\%$. The parameter~$\lambda$ has a milder effect, adding roughly~$1$--$6\%$ to the gap depending on~$\varepsilon$. A plausible time-quality tradeoff is at~${\varepsilon=0.2}$,~${\lambda=0}$: a~$4.3\%$ gap in~$26$s ($5.0\times$ speedup over~\textbf{P1}).

Table~\ref{tab:sinkhorn_time} reports the Sinkhorn convergence and \textbf{P2} + \textbf{P3} timing per~$\varepsilon$ (at~${\lambda=0}$), where Sinkhorn iterations are terminated at convergence. The Sinkhorn convergence scales inversely with~$\varepsilon$, as expected from entropic regularization:~$307$ sweeps at~${\varepsilon=0.1}$ versus~$39$ at~${\varepsilon=5.0}$. However, the cost gap increases as~$\varepsilon$ increases (from~$\sim\!2\%$ at~${\varepsilon=0.1}$ to~$\sim\!17\%$ at~${\varepsilon=5.0}$), reflecting the smoothing effect that makes the shadow less discriminative and the pruned graph less targeted. The total~\textbf{P2}+\textbf{P3} time ranges from~$48$s at~${\varepsilon=0.1}$ to~$22$--$26$s at~${\varepsilon\geq 0.2}$, all well below the~\textbf{P1} baseline of~$130$s. At~${\varepsilon=5.0}$, the runtime increase is because of the diffuse shadow that makes pruning ineffective, resulting in more variables retained in~\textbf{P3}.

\begin{table}[!htb]
\centering
\caption{Sinkhorn convergence and~\textbf{P2}+\textbf{P3} pipeline timing per~$\varepsilon$ at~${\lambda=0}$, averaged over~$13$ instances at~${K=10{,}000}$,~${T=30}$.}
\label{tab:sinkhorn_time}
\vspace{0.3em}
\small
\begin{tabular}{@{}c c c c c c@{}}
\toprule
$\varepsilon$ & Sweeps & Sinkhorn (s) & \textbf{P2}+\textbf{P3} (s) & Gap (\%) & Kept (\%) \\
\midrule
$0.1$ & 307 & 20 & 48 & 2.3 & 47 \\
$0.2$ & 132 &  9 & 26 & 4.3 & 42 \\
$0.5$ &  97 &  7 & 22 & 11.1 & 40 \\
$1.0$ &  75 &  5 & 23 & 17.3 & 43 \\
$5.0$ &  39 &  3 & 25 & 17.1 & 47 \\
\bottomrule
\end{tabular}
\end{table}

\begin{figure}[!htb]
    \centering
    \includegraphics[width=0.95\textwidth]{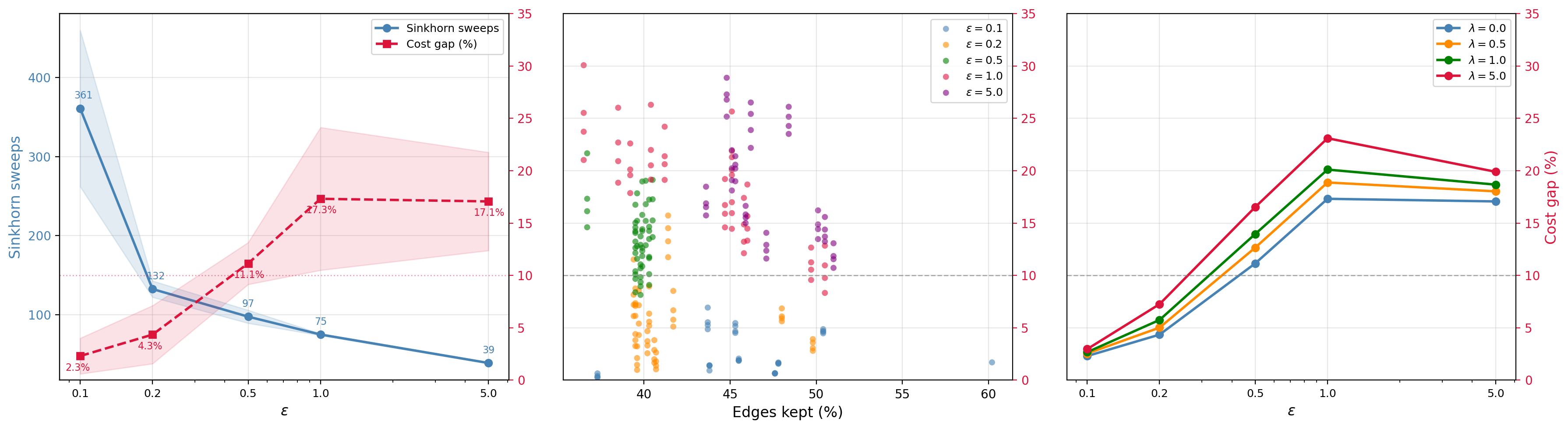}
    \caption{Parameter sensitivity across~$260$ runs ($13$ instances,~$20$ combinations each) at~${K=10{,}000}$,~${T=30}$. (Left)~Dual-axis plot at~${\lambda=0}$: effective Sinkhorn sweeps (left axis, blue) and cost gap~(\%, right axis, red) versus~$\varepsilon$; shaded bands show~${\pm 1}$ standard deviation across instances. (Middle)~Cost gap versus edges kept~(\%) for all~$233$ feasible runs, colored by~$\varepsilon$. (Right)~Average cost gap versus~$\varepsilon$ for each~$\lambda$.}
    \label{fig:sensitivity}
\end{figure}

\textbf{Parameter roles.} The parameter~$\varepsilon$ controls the sharpness of the shadow transport: as~${\varepsilon\to 0}$, the Schr\"odinger bridge concentrates onto minimum-cost geodesic corridors, producing a sharp shadow that enables aggressive pruning at low cost gap; as~$\varepsilon$ increases, the shadow becomes diffuse and the pruned graph retains more edges with less discriminative structure. The parameter~$\lambda$ controls the bias toward the shadow in the~\textbf{P3} cost: increasing~$\lambda$ penalizes edges with small shadow flow, effectively forcing~\textbf{P3} to follow the shadow transport more closely. When costs are uniform,~$\lambda$ and~$\varepsilon$ together provide a beneficial tiebreaker among edges that are otherwise equivalent in cost, favoring those with a darker shadow. The parameter~$\delta$ is a small numerical safeguard for the logarithm in the modified~\textbf{P3} cost and has negligible effect on the solution. Across the experiments reported in this paper, the following is a robust default that works without much fine-tuning:~${\delta=10^{-6}}$,~${\varepsilon=0.2}$, stop Sinkhorn when the last~$20$ iterates stabilize, and~${\lambda=0}$. For the pruning threshold~$\eta$, retaining~$40$--$48\%$ of edges (depending on~$\varepsilon$) consistently yields feasible~\textbf{P3} solutions.

Fig.~\ref{fig:sensitivity} visualizes the sensitivity structure from three perspectives. The left panel is a dual-axis plot: the left axis shows the effective number of Sinkhorn sweeps (blue) and the right axis shows the cost gap (red), both versus~$\varepsilon$ at~${\lambda=0}$, with~${\pm 1}$ standard deviation shaded across the~$13$ instances. The sweeps decrease inversely with~$\varepsilon$, while the gap increases nearly monotonically. The middle panel plots the gap against the fraction of edges retained for all~$233$ feasible runs, colored by~$\varepsilon$, where lower~$\varepsilon$ achieves lower gaps at comparable pruning levels. For a fixed~$\varepsilon$, the cost variation is also due to varying~$\lambda$. The right panel isolates the effect of~$\lambda$ by plotting the gap versus~$\varepsilon$ for each~$\lambda$ value; the curves confirm that~$\lambda$ shifts the gap upward by a roughly constant offset, with a mild effect relative to~$\varepsilon$.

\subsection{Non-uniform Costs}\label{nu_costs}
In this section, we validate the proposed~\textbf{P1} and the~\textbf{P2}+\textbf{P3} pipeline under non-uniform costs. We assign each cell a random arrival cost~${\sim\mathrm{Uniform}[0.6,1]}$ and wait cost~${\sim\mathrm{Uniform}[0.1,0.5]}$ (wait at target~${=0}$), preserving Assumption~\ref{assump1}. Choosing costs like this reflects e.g., uneven terrains; see Fig.~\ref{fig:nonuniform} (left) for a candidate scenario. We run~$24$ instances at~${K=10{,}000}$~(${W=H=100}$,~${N=500}$ robots,~${T=30}$) with~${\varepsilon=0.2}$,~${\lambda=0}$. The~\textbf{P1} baseline averages~$138$s; the~\textbf{P2}+\textbf{P3} pipeline averages~$25$s at~$5.1\%$ gap and~$5.4\times$ speedup. For reference, the uniform-cost baseline (from the sensitivity study) gives~$4.3\%$ gap and~$5.0\times$ speedup. The gap and speedup under non-uniform costs are comparable, confirming that the pipeline adapts to the cost landscape without degradation. Fig.~\ref{fig:nonuniform} (middle) shows the per-instance gap and speedup, and Fig.~\ref{fig:nonuniform} (right) compares the uniform and non-uniform averages. Every solution is verified integral. 

\begin{figure}[!htb]
    \centering
    {\includegraphics[width=0.3\textwidth]{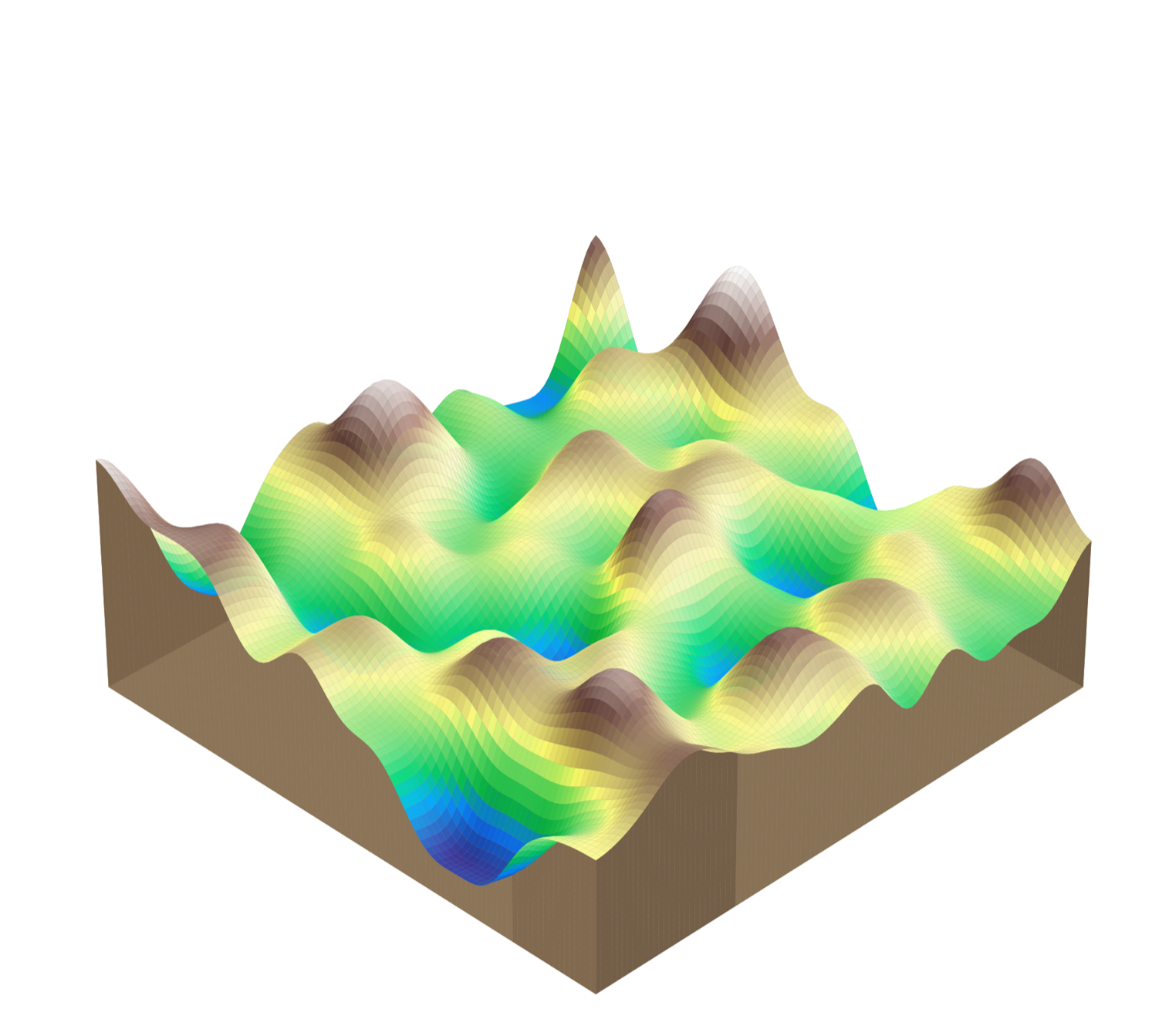}}
    \hspace{0.1cm}
    {\includegraphics[width=0.3\textwidth]{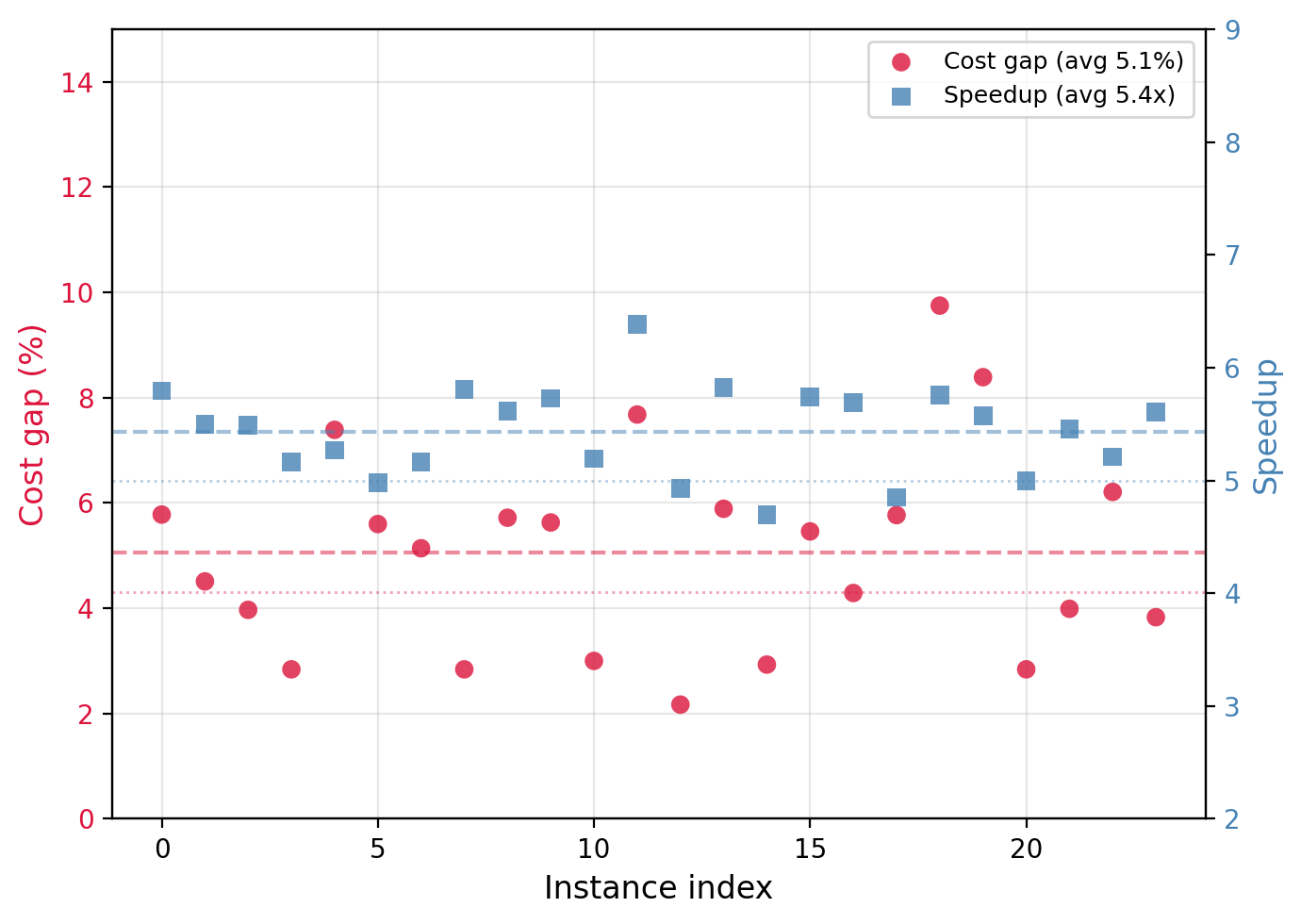}}
    \hspace{0.1cm}
    {\includegraphics[width=0.3\textwidth]{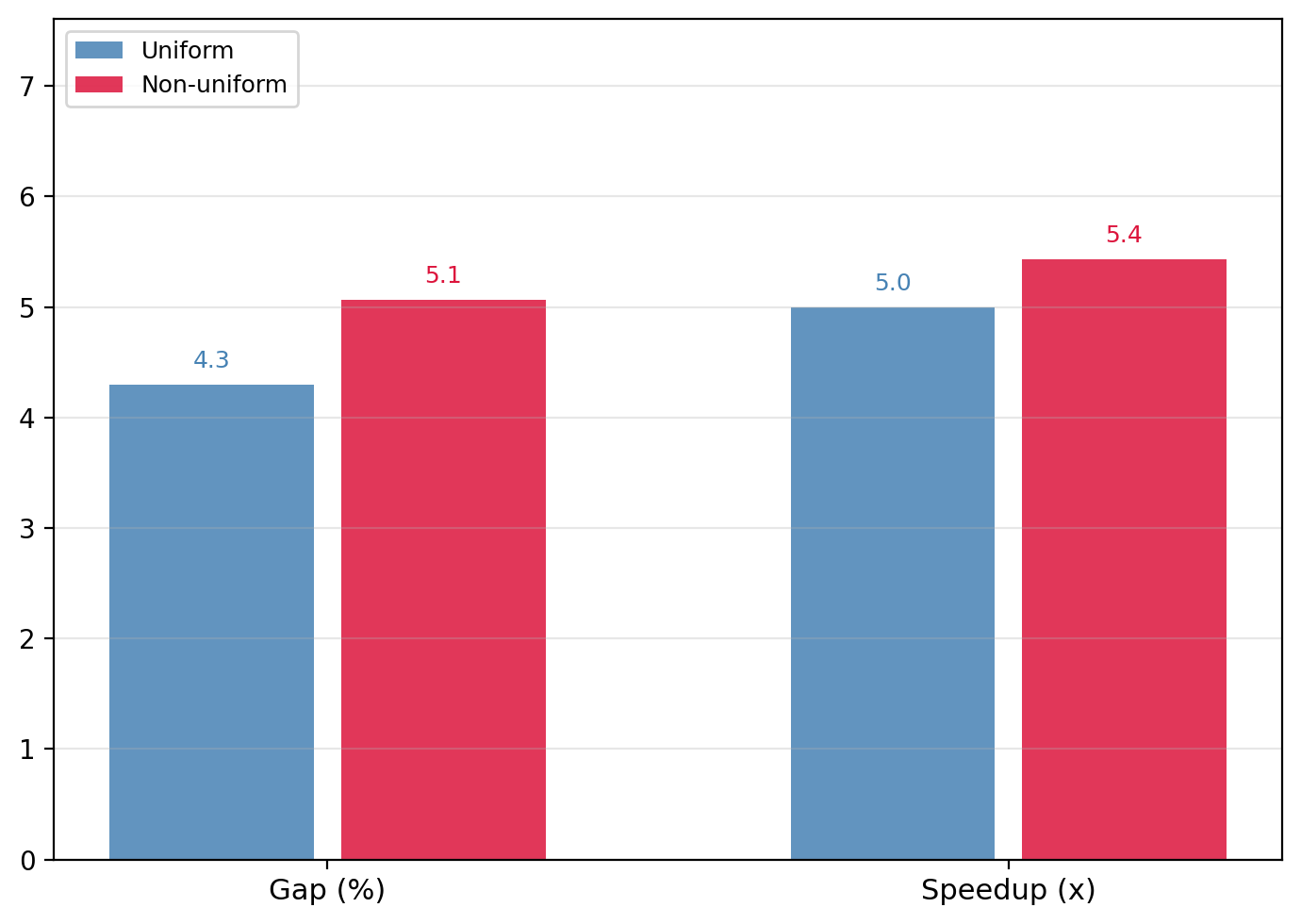}}
    \caption{Non-uniform cost experiments across~$24$ instances at~${K=10{,}000}$,~${T=30}$. (Left)~Illustrative terrain on a~${100\times 100}$ grid; each cell has a random arrival cost~${\sim\mathrm{Uniform}[0.6,1]}$; higher elevation corresponds to higher move cost. (Middle)~Cost gap~(\%, left axis, red) and speedup~(right axis, blue) per instance; dashed lines show non-uniform averages ($5.1\%$ gap,~$5.4\times$ speedup), dotted lines show the uniform-cost reference ($4.3\%$ gap,~$5.0\times$ speedup). (Right)~Uniform versus non-uniform cost comparison, averaged over all instances.}
    \label{fig:nonuniform}
\end{figure}

\subsection{Baseline Comparison}\label{comps}
In this section, we compare with a close prior anonymous MAPF formulation~\cite{Ma2016}, which studies TAPF (combined target-assignment and path-finding). In TAPF, a generalization of anonymous MAPF, agents are partitioned into teams, each team is given the same number of targets as agents, and the goal is to jointly assign agents to targets and plan collision-free paths that minimize makespan. TAPF with a single team (all agents exchangeable) is the anonymous MAPF problem that we solve. Their method CBM (Conflict-Based Min-Cost-Flow) uses a min-cost max-flow solver on a time-expanded network on the low level (for within-team assignment and routing) and conflict-based search on the high level (for inter-team collision resolution). We note that the network-flow formulation in~\cite{Ma2016} (or in~\cite{YuLaValle2013NetworkFlow}) does not establish total unimodularity of the constraint matrix. Consequently, even for a single team, the LP relaxation is not formulated and is not guaranteed to produce integer solutions; one must therefore resort to integer linear programming (ILP) or the hierarchical CBM search for integrality. 

In contrast, our~\textbf{P1} formulation guarantees integral solutions from the continuous LP via TU, without branch-and-bound or conflict resolution. The comparison next provides a useful reference point on the same grid setting. We adopt the same experimental setting as~\cite{Ma2016}, Table~1: a~${30\times 30}$ grid with~$10\%$ randomly blocked cells and~$4$-neighbor connectivity. Table~\ref{tab:baseline} compares the results. CBM solves up to~$50$ agents in~$5.32$s; the reported ILP-based solver handles~$50$ agents in~$162$s with only~$4\%$ success rate within a~$5$-minute timeout. We note that restricting the formulation in~\cite{Ma2016} to a single team may improve the performance, as the multi-team structure introduces additional complexity. In contrast, the proposed~\textbf{P1} solves~$300$ agents (same grid, $6\times$ more agents) in~$0.54$s on average, and the~\textbf{P2}+\textbf{P3} pipeline solves in~$0.49$s at~$0.63\%$ gap. Our framework further scales to~${K=22{,}500}$ vertices ($1{,}125$ agents) where~\textbf{P2}+\textbf{P3} solve time is~$67$s on average.

\begin{table}[!htb]
\centering
\caption{Comparison on~${30\times 30}$ grids with~$10\%$ obstacles. CBM and ILP results are from~\cite{Ma2016}, Table~1. Our results are averaged over~$15$ instances.}
\label{tab:baseline}
\vspace{0.3em}
\small
\begin{tabular}{@{}l c c c@{}}
\toprule
Method & Agents & Time (s) & Success \\
\midrule
CBM~\cite{Ma2016} & 50 & 5.32 & 100\% \\
ILP~\cite{Ma2016} & 50 & 162 & 4\% \\
ILP~\cite{Ma2016} & 40 & 153 & 14\% \\
\midrule
\textbf{P1} (ours) & 300 & 0.54 & 100\% \\
\textbf{P2}+\textbf{P3} (ours) & 300 & 0.49 & 100\% \\
\bottomrule
\end{tabular}
\end{table}

\newpage
\begin{figure}[!htb]
    \centering
    {\includegraphics[width=0.3\textwidth]{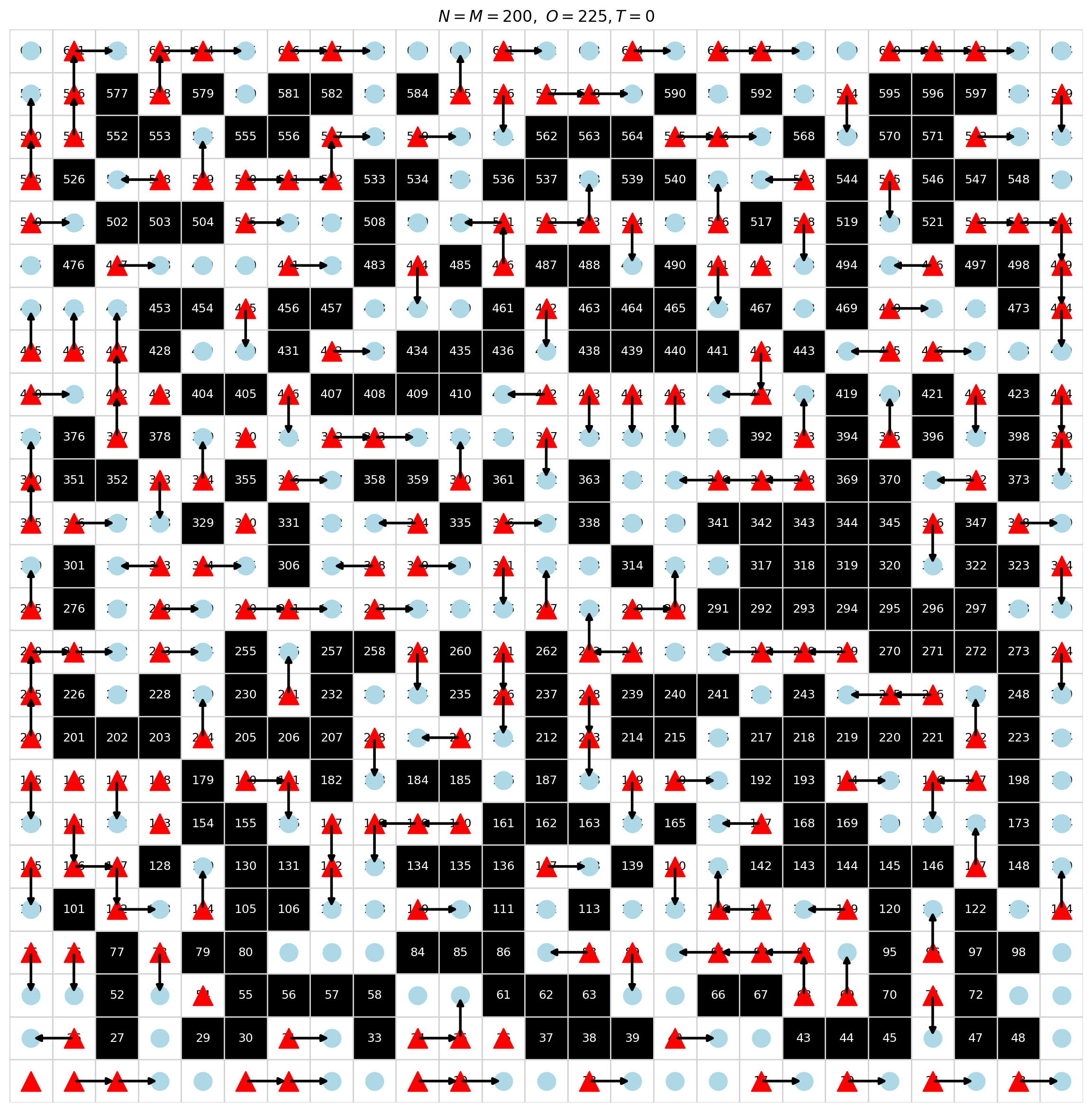}}
    \hspace{0cm}
    {\includegraphics[width=0.3\textwidth]{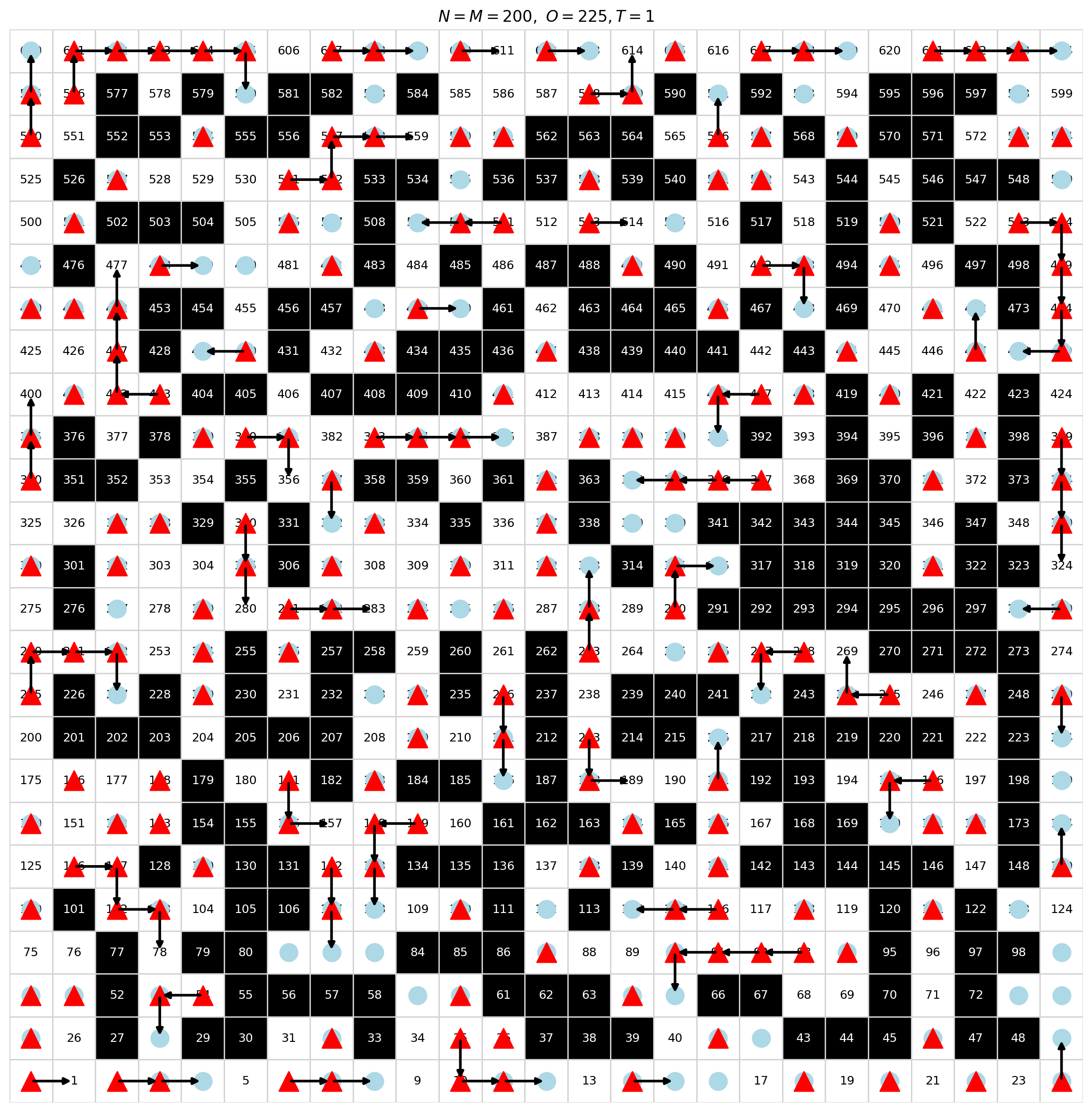}}
    \hspace{0cm}
    {\includegraphics[width=0.3\textwidth]{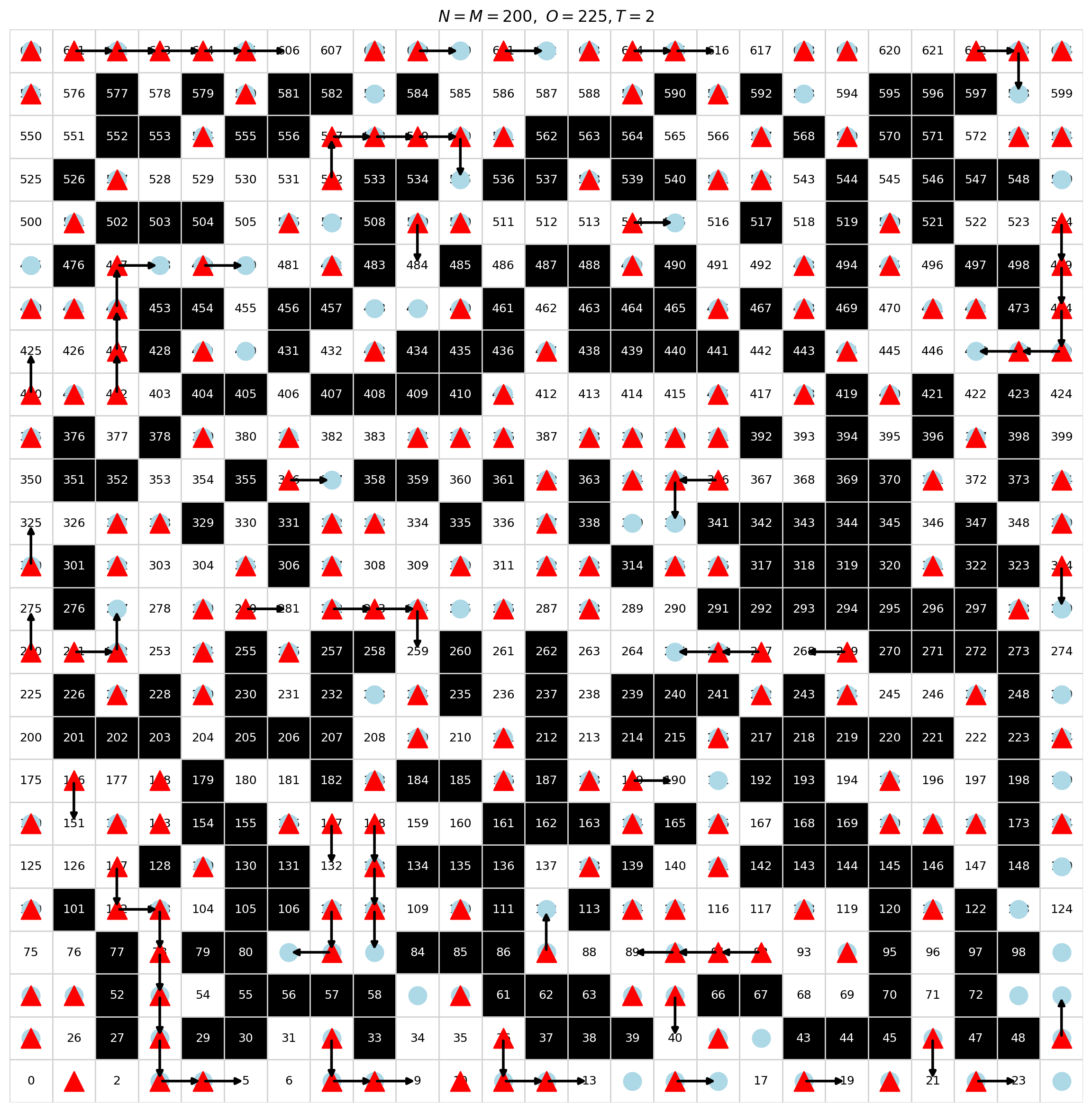}}
    \hspace{0cm}
    {\includegraphics[width=0.3\textwidth]{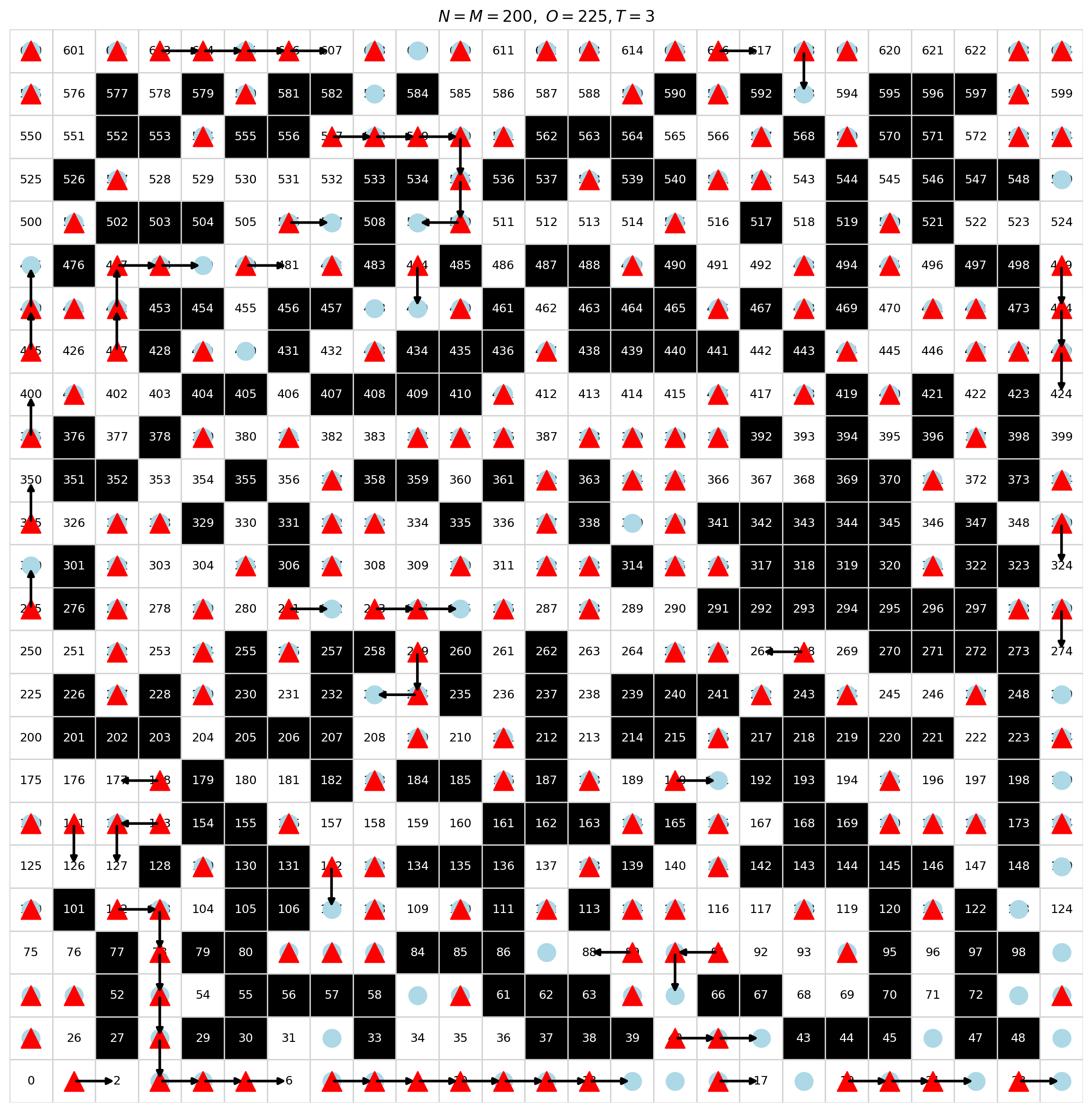}}
    {\includegraphics[width=0.3\textwidth]{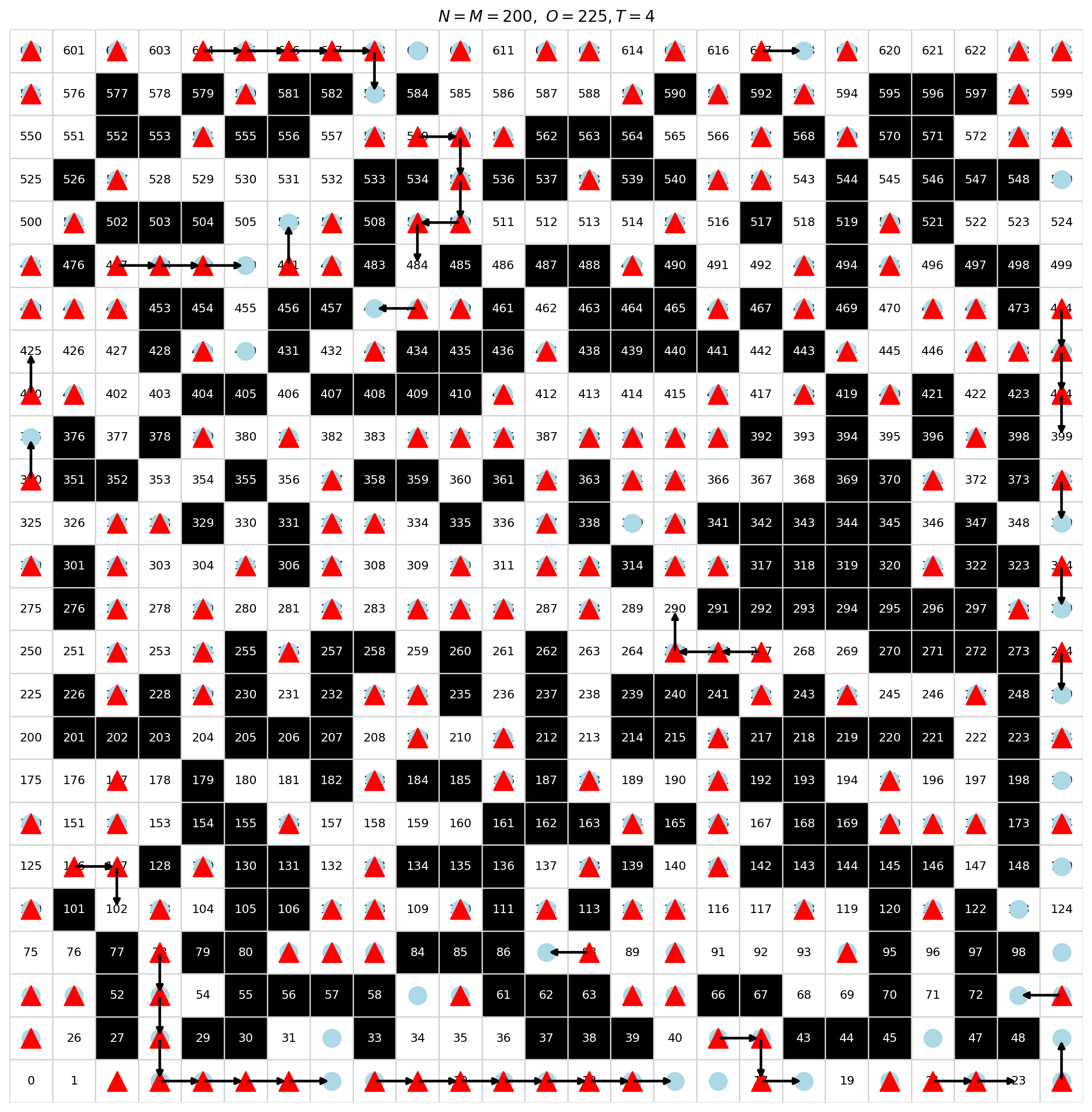}}
    \hspace{0cm}
    {\includegraphics[width=0.3\textwidth]{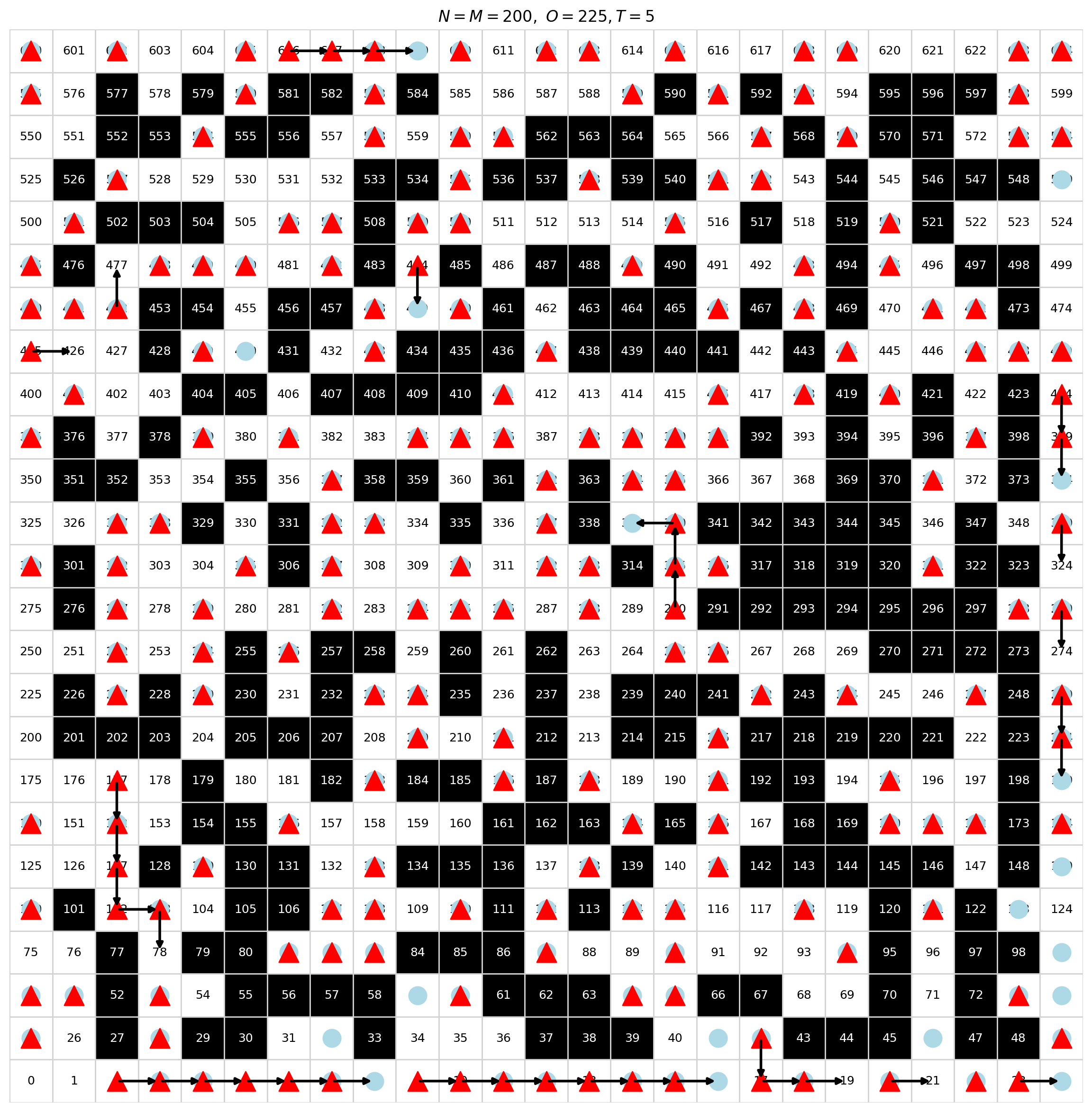}}
    \hspace{0cm}
    {\includegraphics[width=0.3\textwidth]{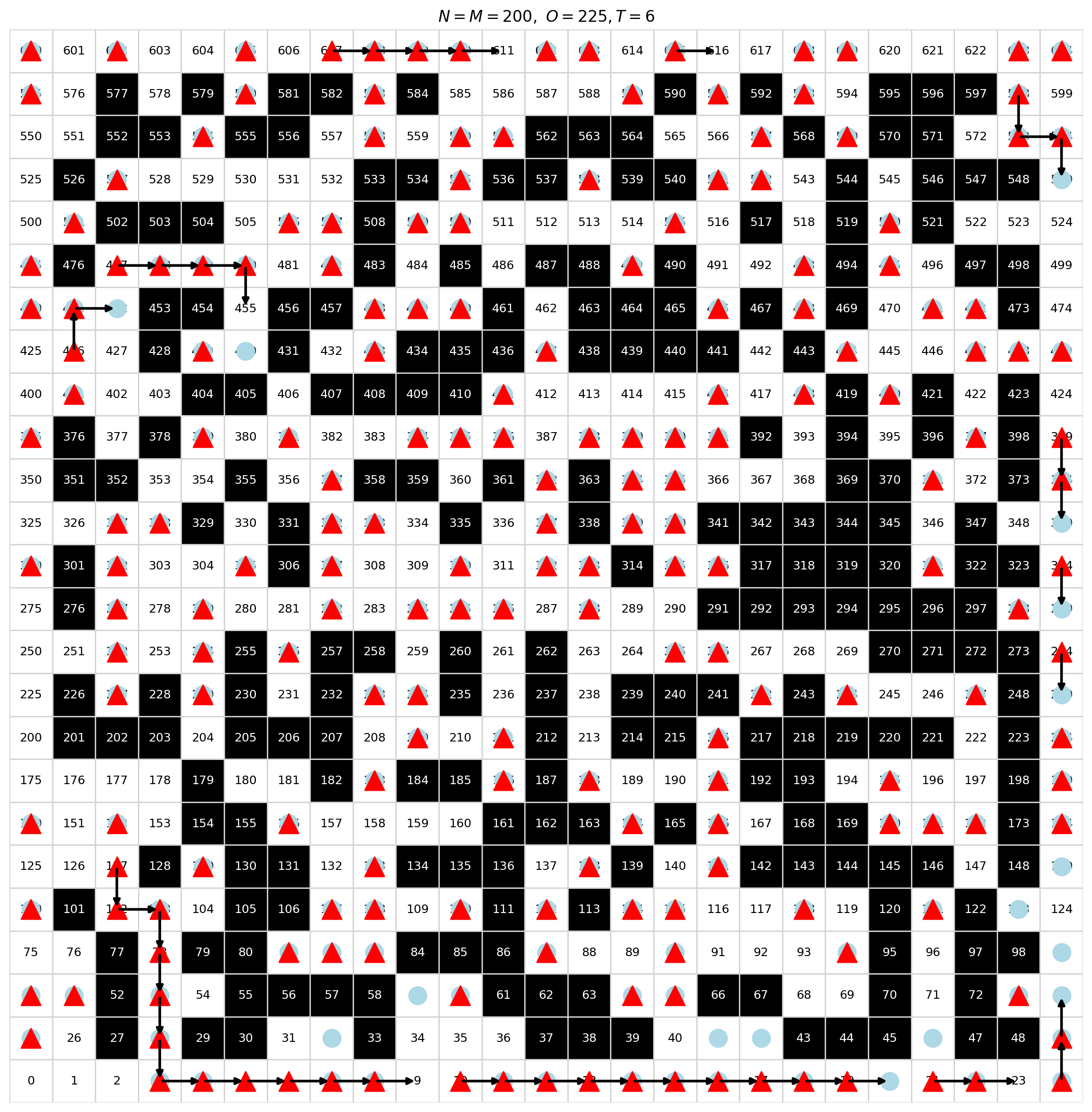}}
    \hspace{0cm}
    {\includegraphics[width=0.3\textwidth]{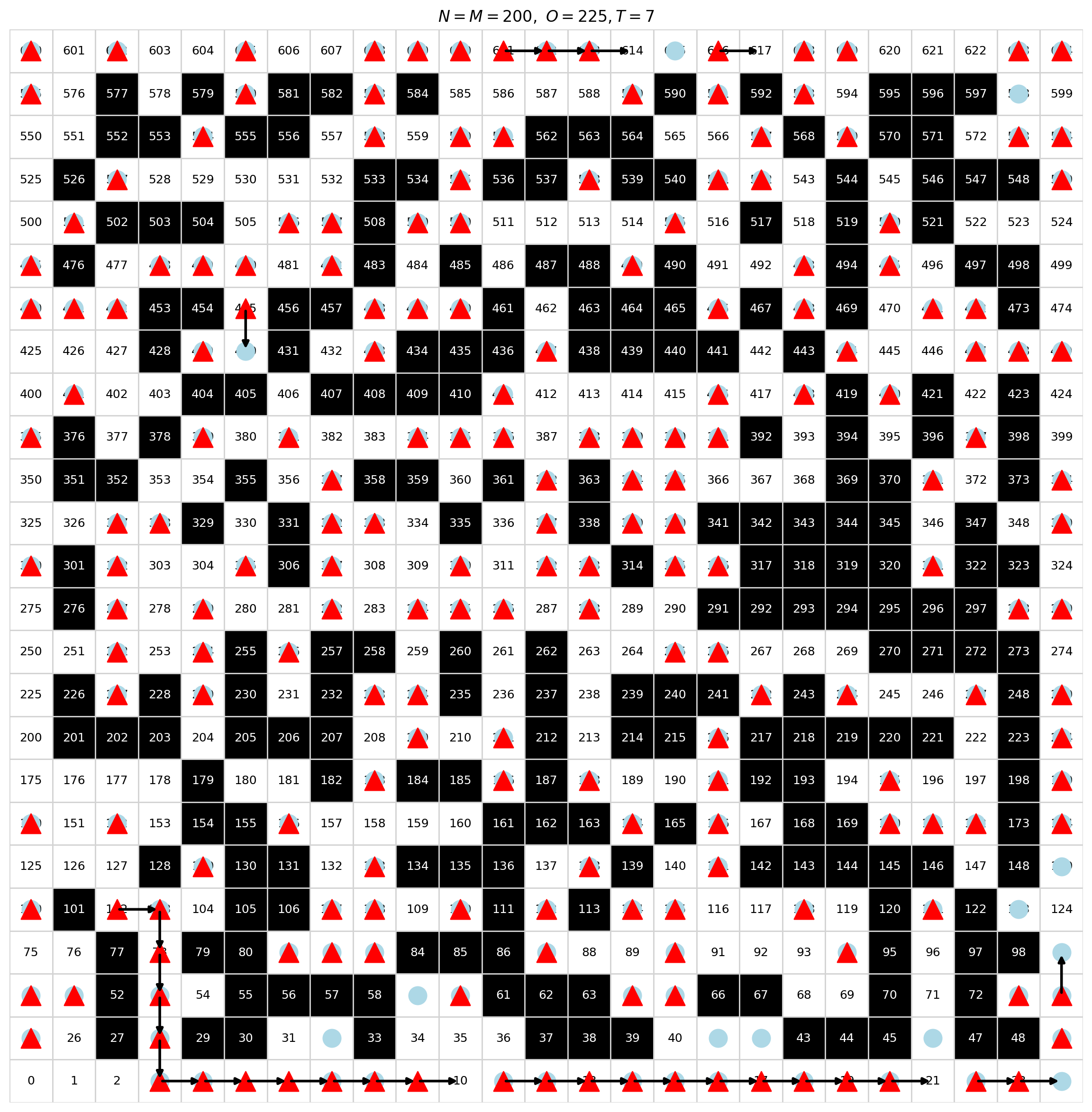}}
    \hspace{0cm}
    {\includegraphics[width=0.3\textwidth]{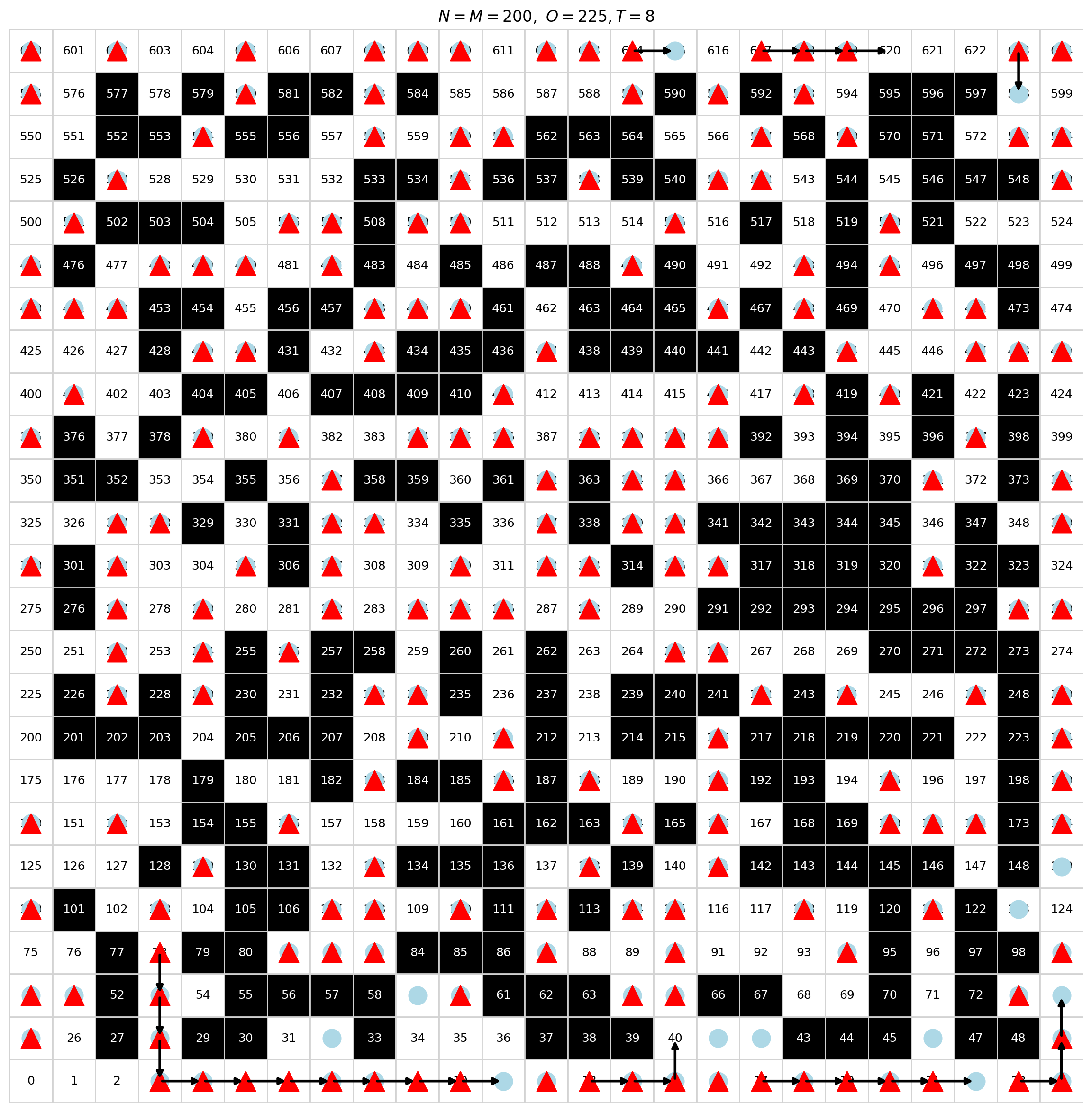}}
    \hspace{0cm}
    {\includegraphics[width=0.3\textwidth]{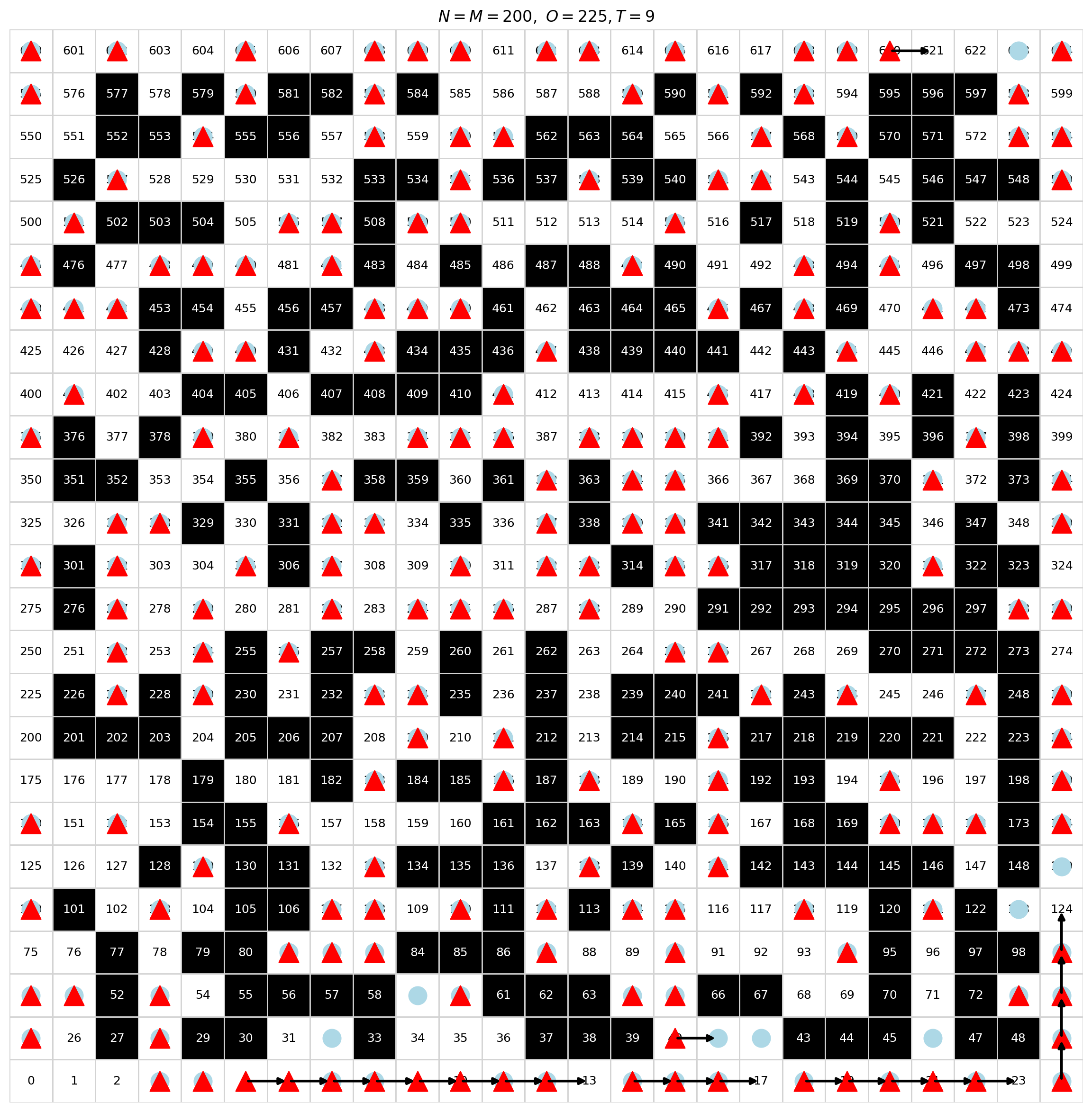}}
    \hspace{0cm}
    {\includegraphics[width=0.3\textwidth]{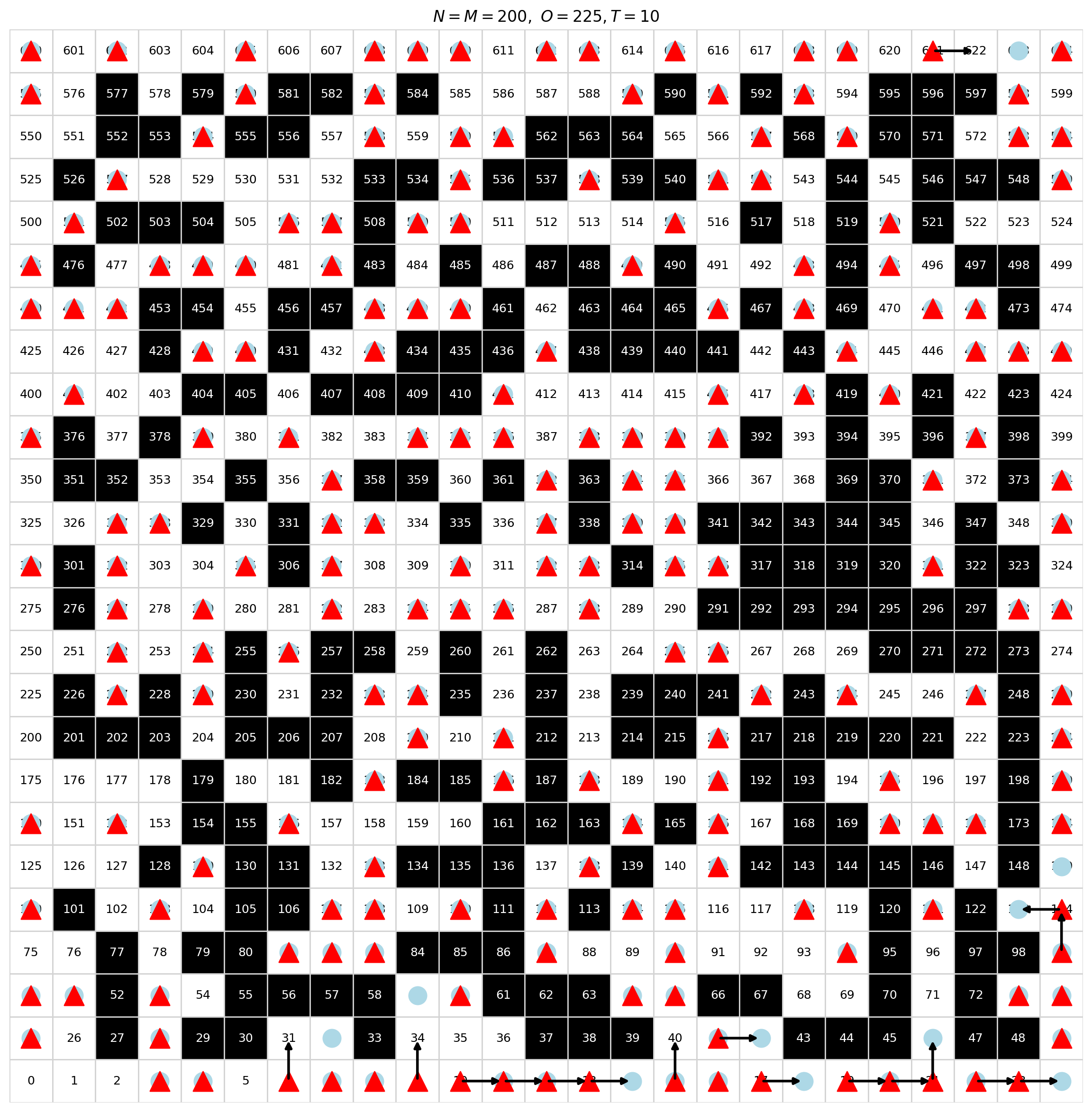}}
    \hspace{0cm}
    {\includegraphics[width=0.3\textwidth]{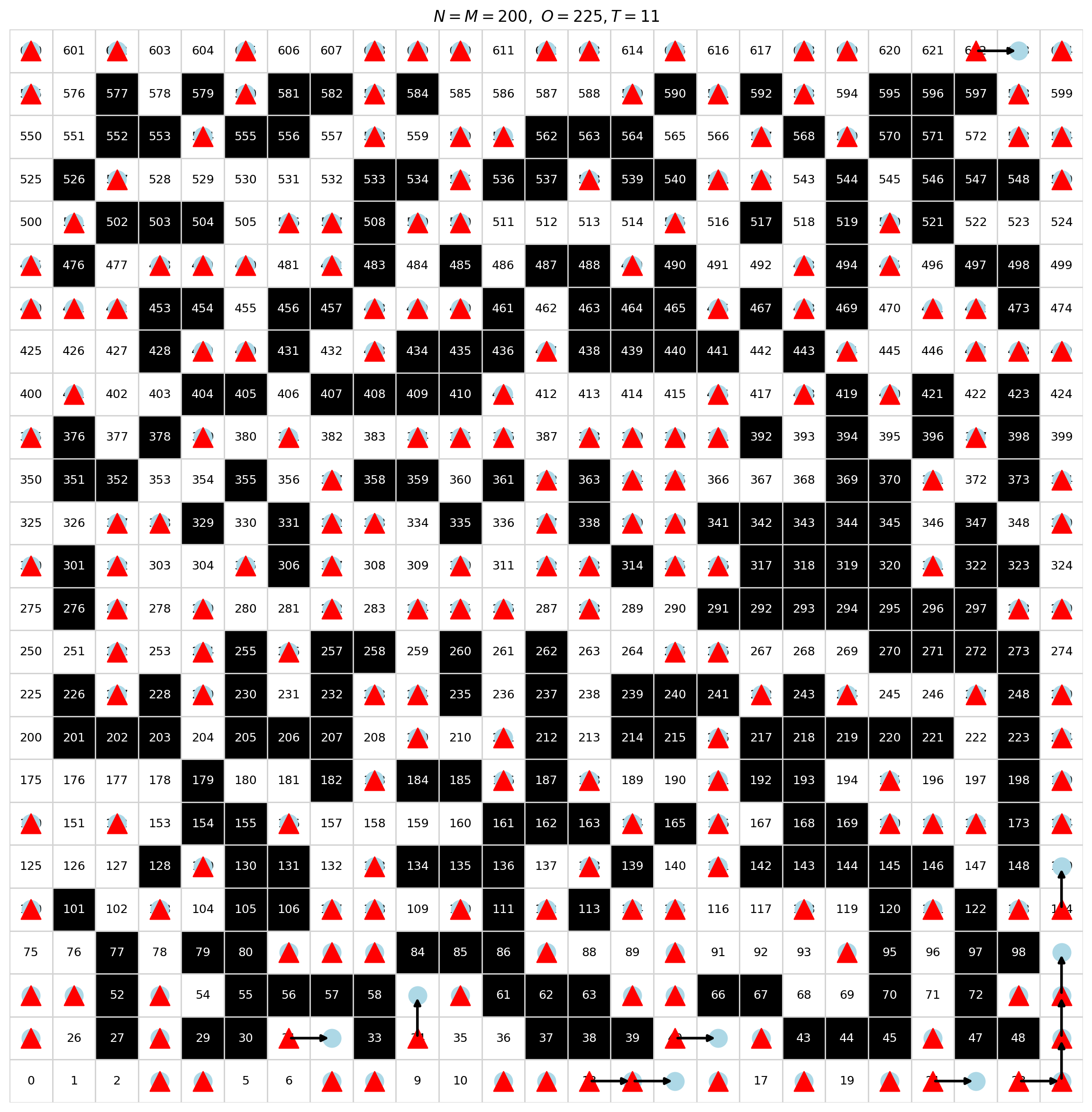}}
    \caption{A~${25\times 25}$ grid with~${\{200\}}$ robots~$\robot$,~${\{200\}}$ targets~$\target$, and~${\{225\}}$ obstacles. Every cell is either a robot, a target, or an obstacle. The robot trajectories come from the min-cost transport obtained by solving~\textbf{P1} over~${T=12}$.}
    \label{a_fig1}
\end{figure}

\newpage
\begin{figure}[!htb]
    \centering
    {\includegraphics[width=0.4\textwidth]{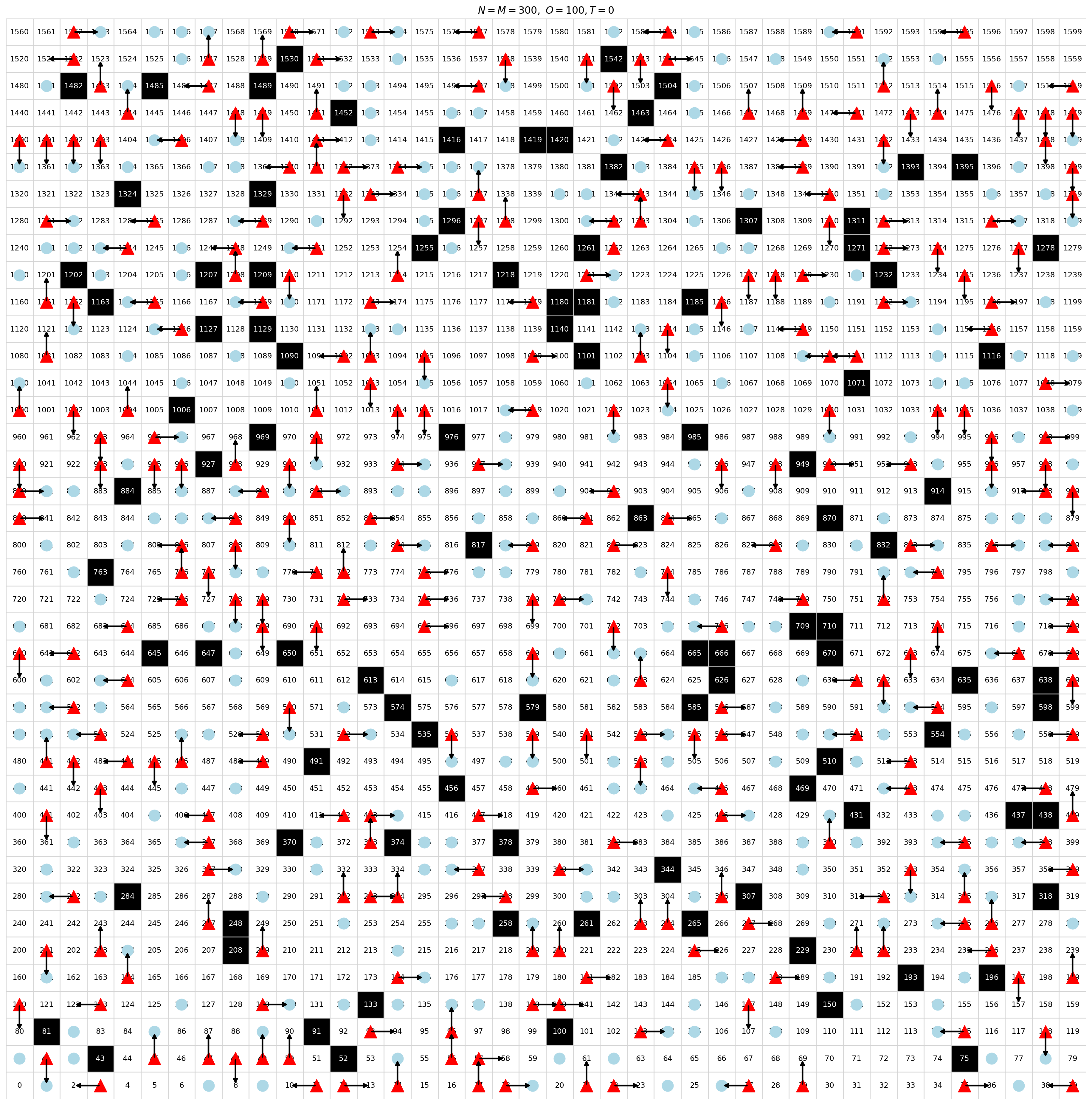}}
    \hspace{0cm}
    {\includegraphics[width=0.4\textwidth]{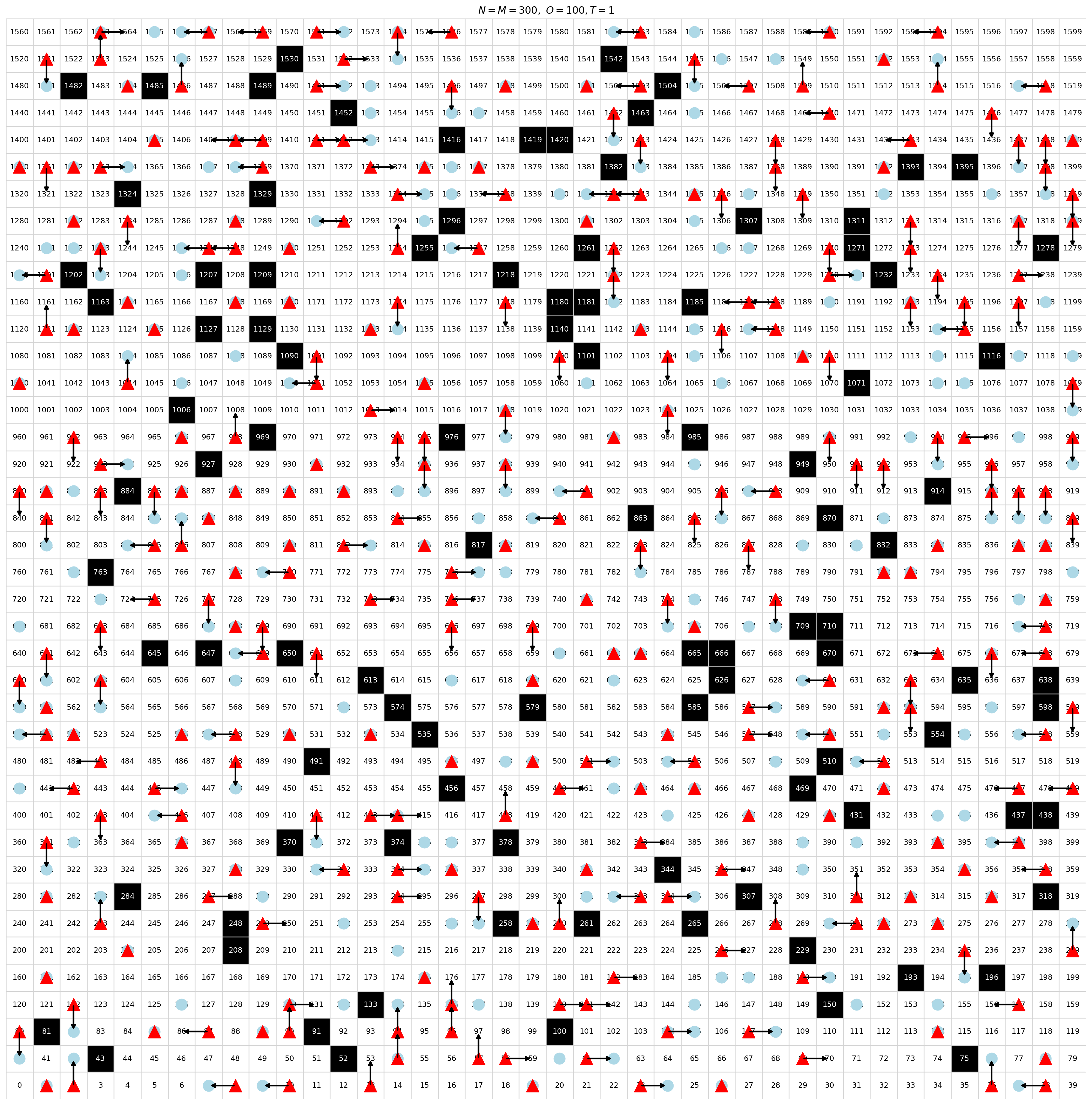}}
    \hspace{0cm}
    {\includegraphics[width=0.4\textwidth]{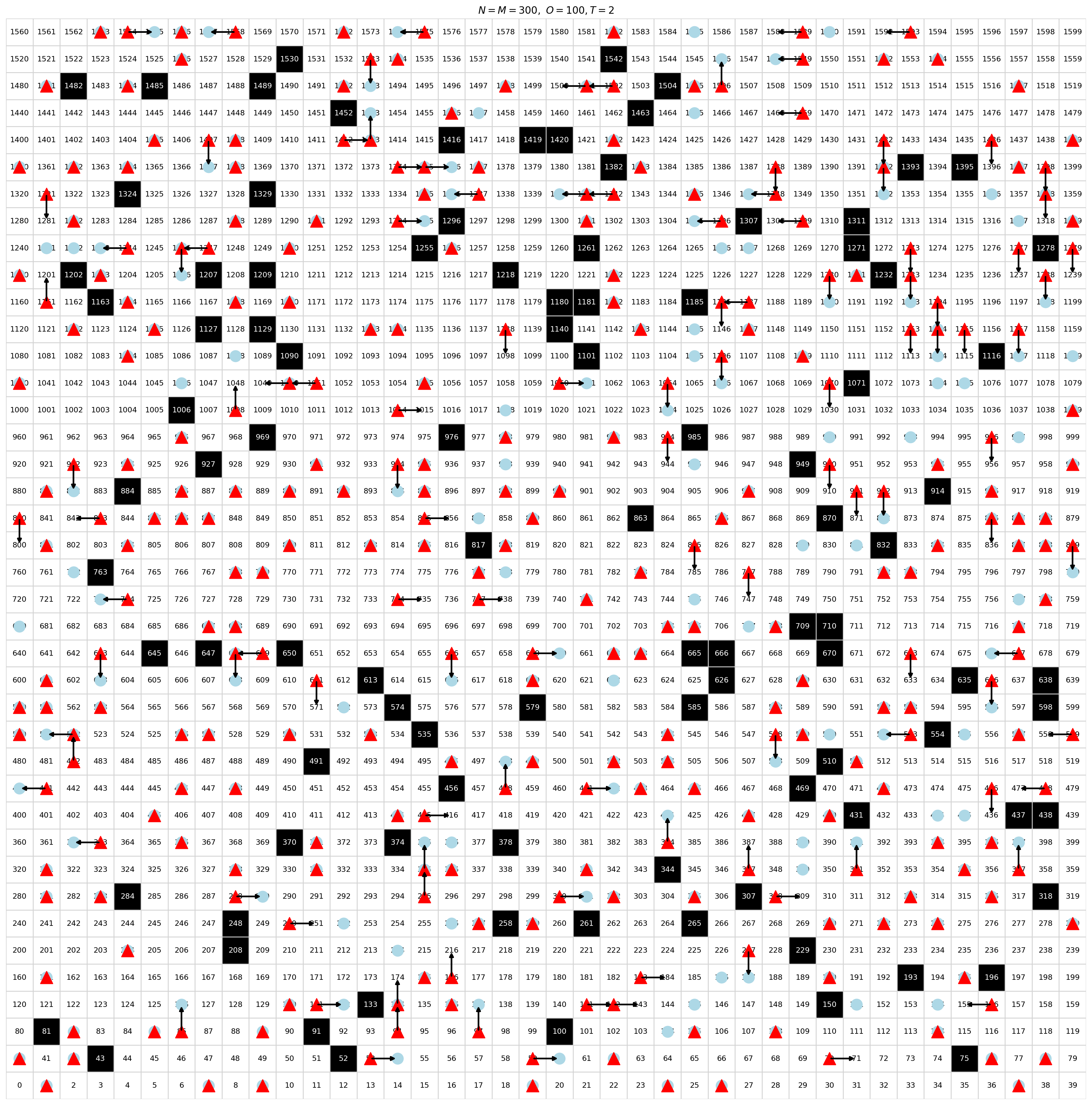}}
    \hspace{0cm}
    {\includegraphics[width=0.4\textwidth]{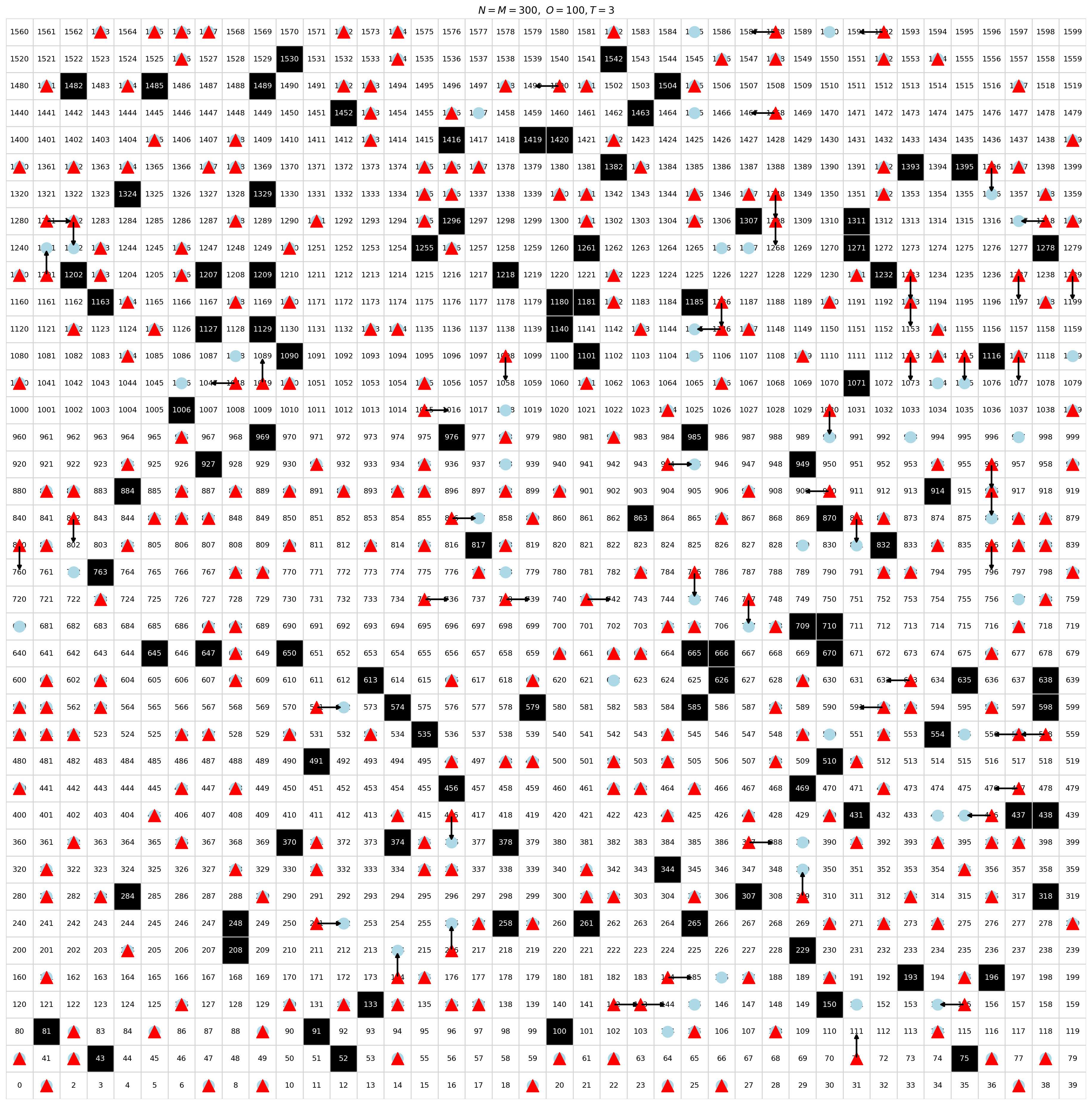}}
    {\includegraphics[width=0.4\textwidth]{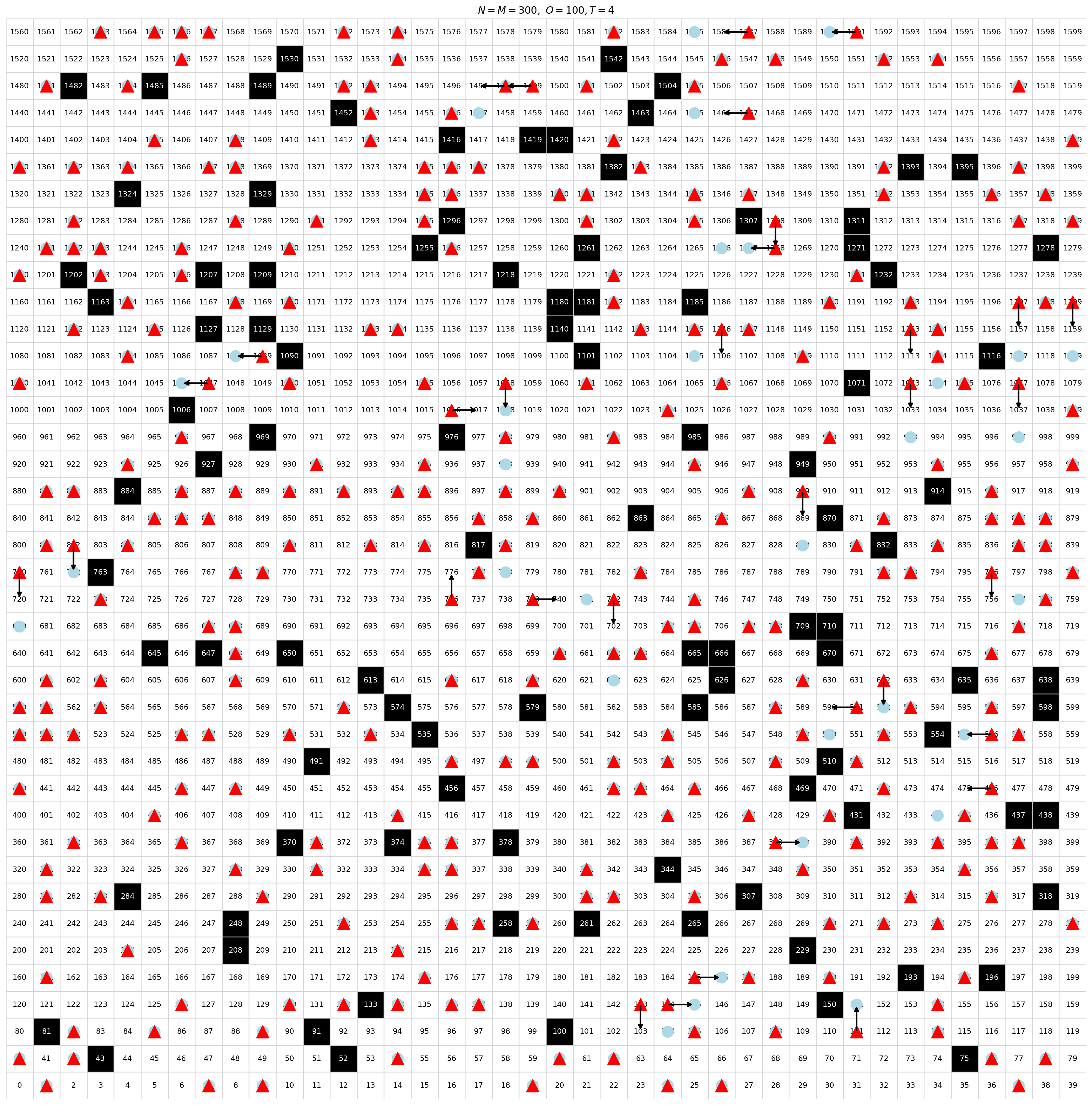}}
    \hspace{0cm}
    {\includegraphics[width=0.4\textwidth]{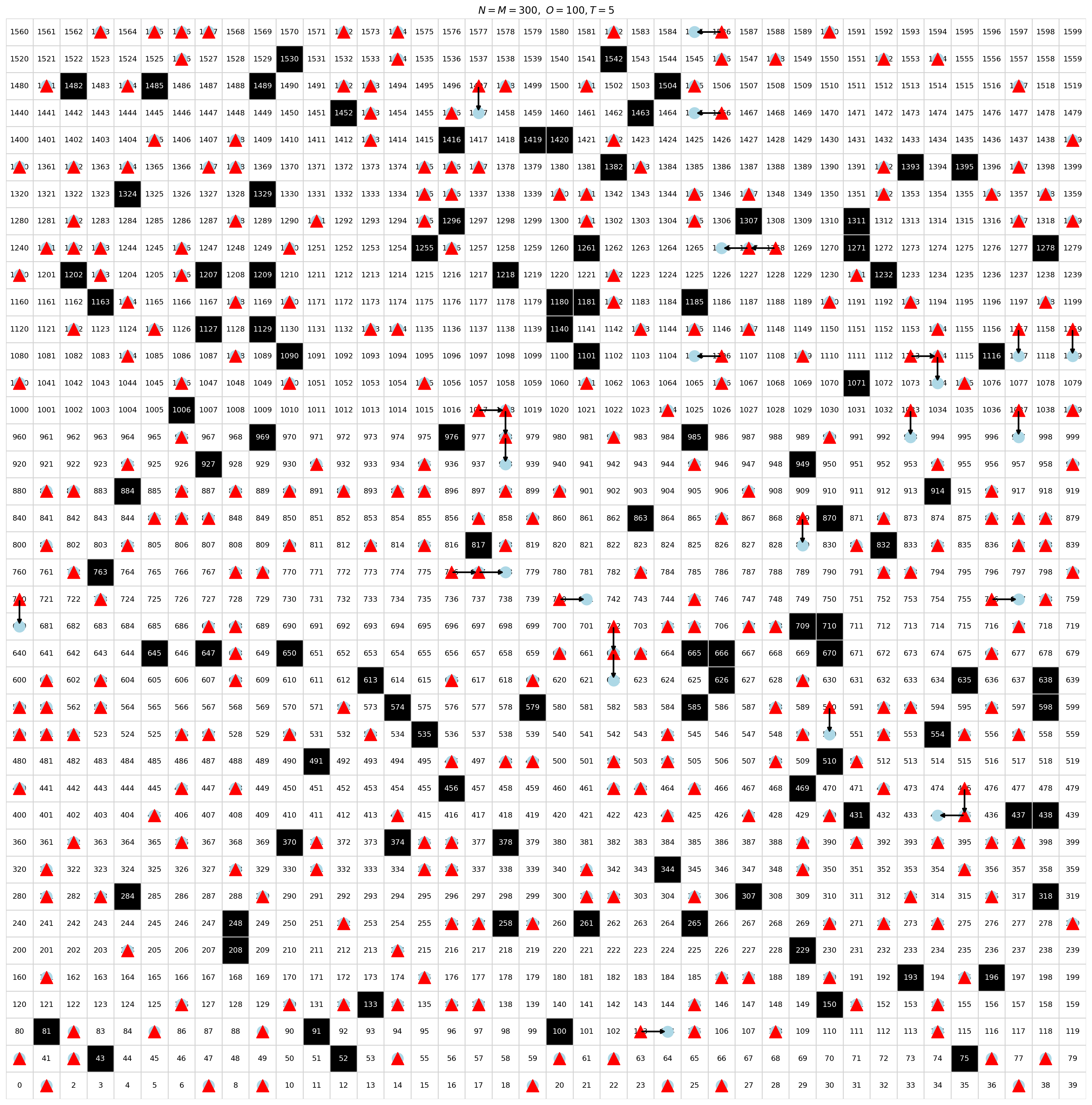}}
    \caption{A larger~${40\times 40}$ grid with~$300$ robots~$\robot$,~$300$ targets~$\target$, and~$100$ obstacles. The robot trajectories come from the min-cost transport obtained by solving~\textbf{P1} over~${T=6}$.}
    \label{a_fig2}
\end{figure}

\newpage
\begin{figure}[!htb]
    \centering
    {\includegraphics[width=0.4\textwidth]{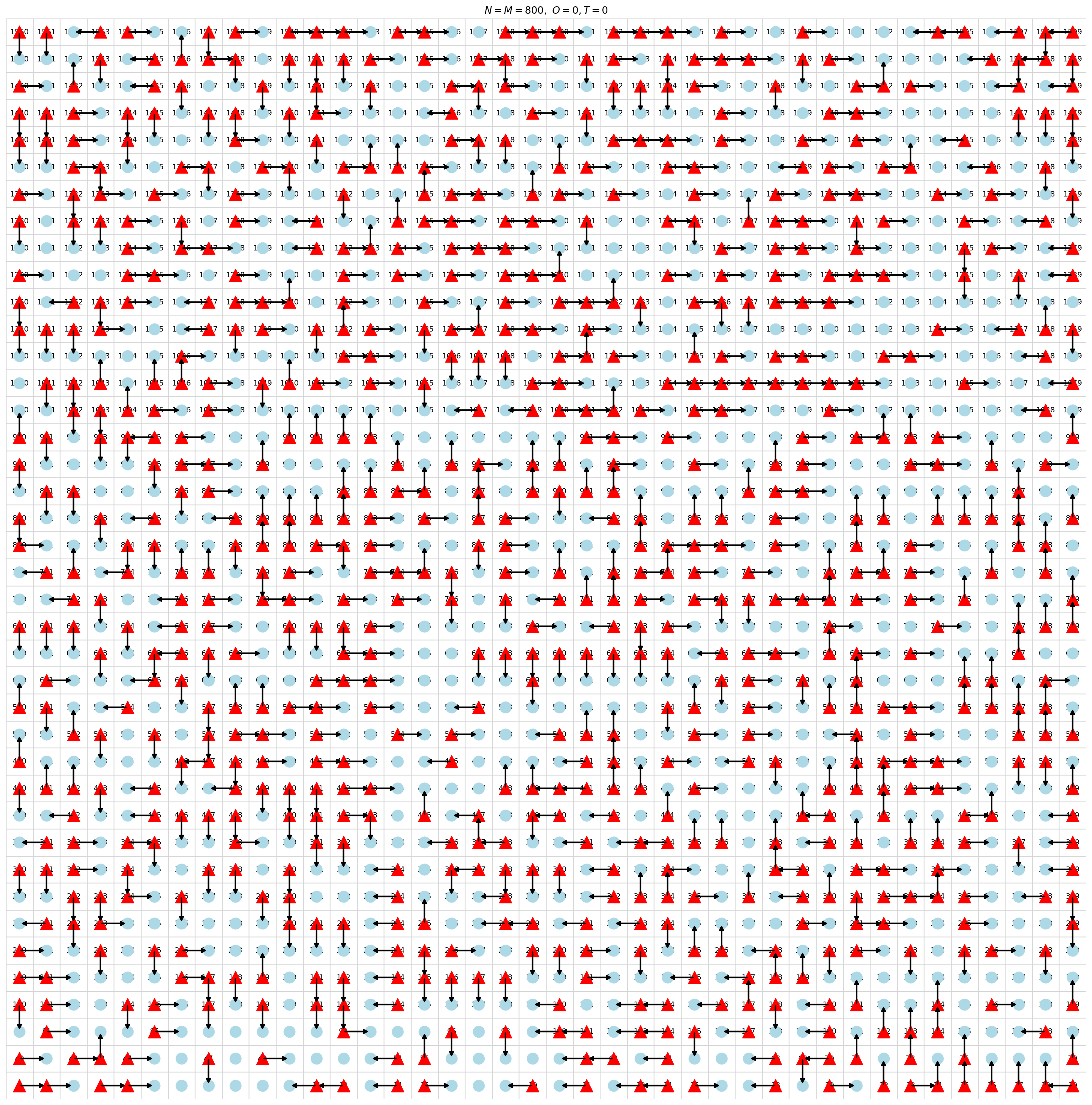}}
    \hspace{0cm}
    {\includegraphics[width=0.4\textwidth]{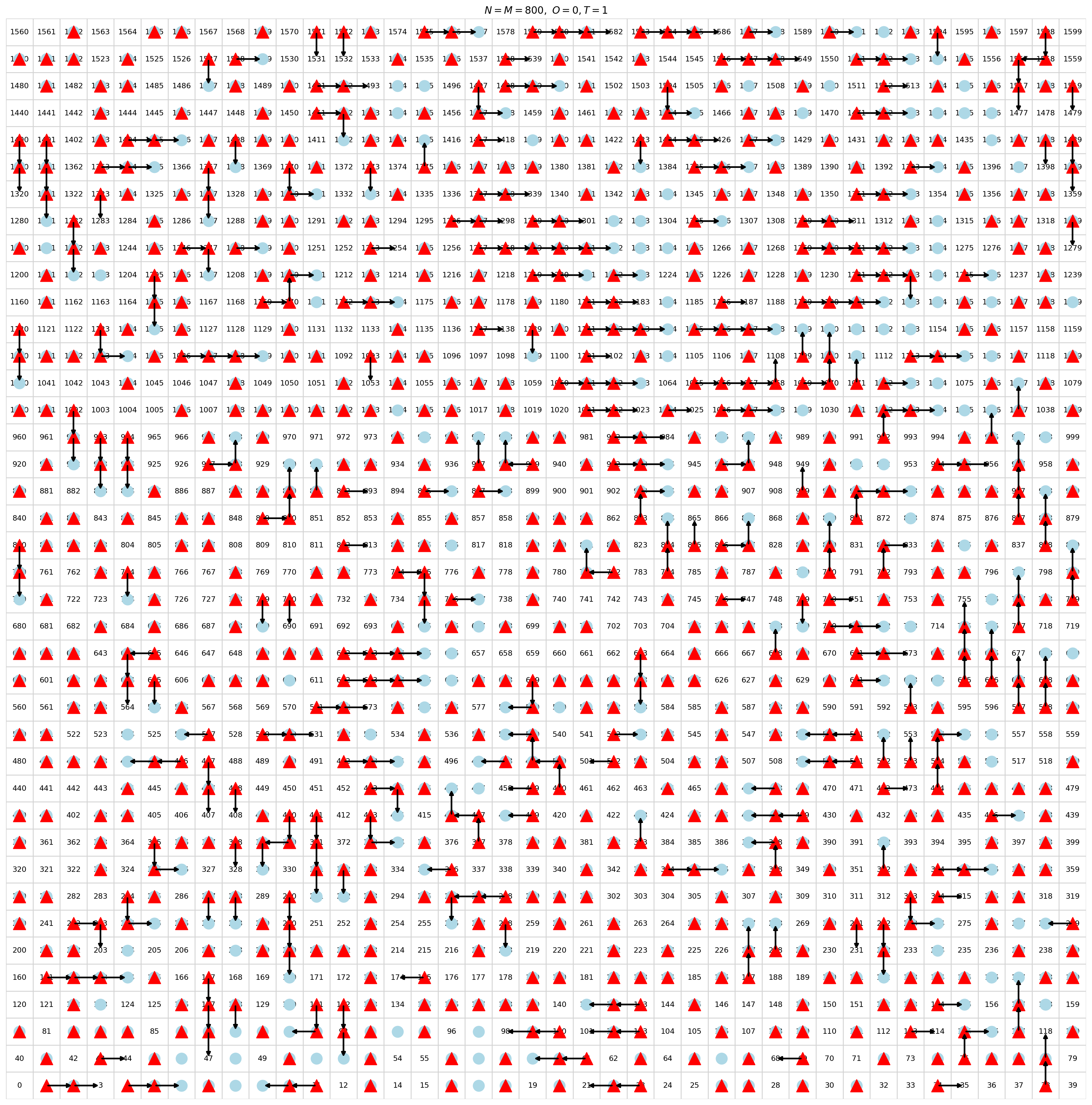}}
    \hspace{0cm}
    {\includegraphics[width=0.4\textwidth]{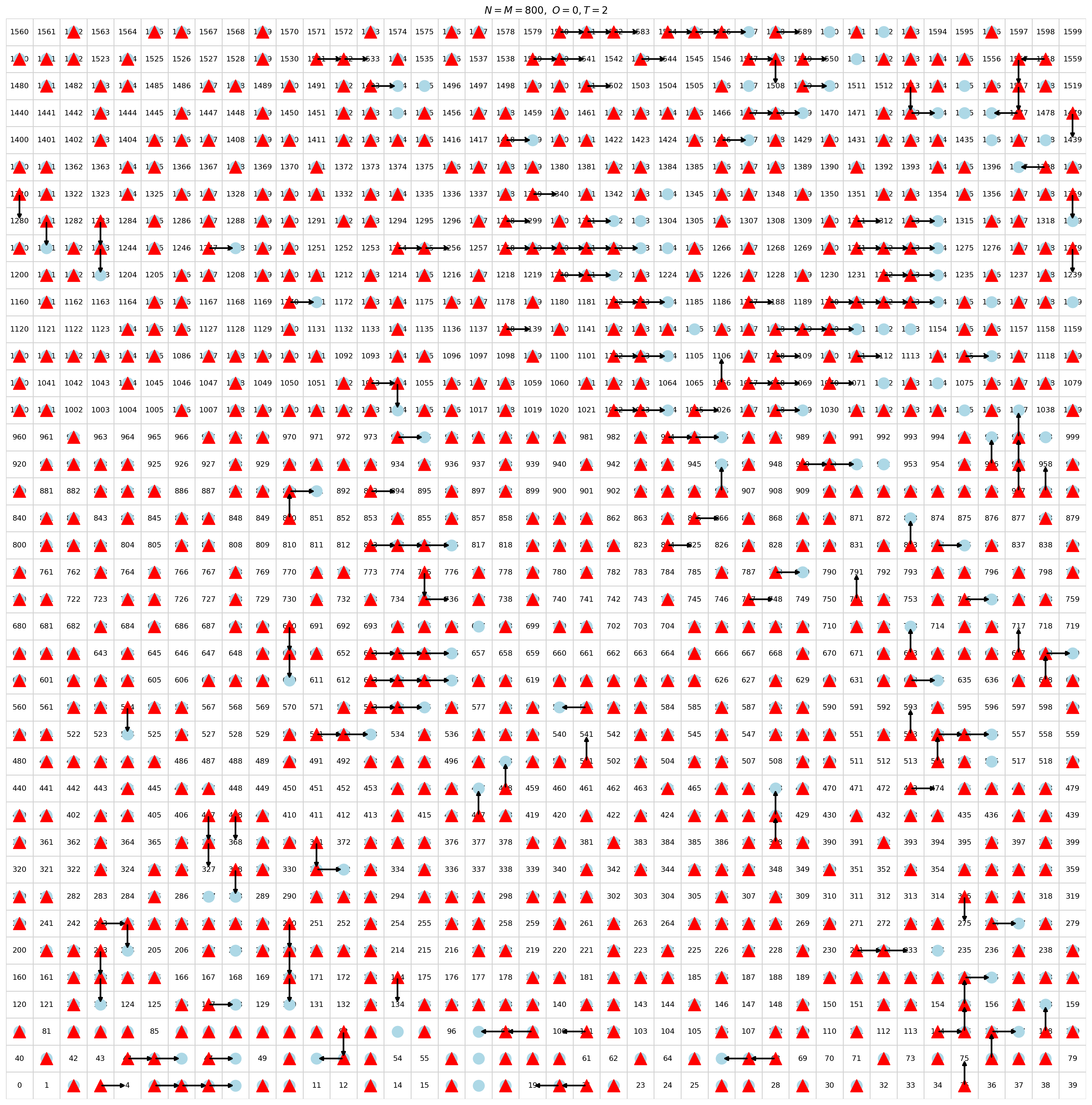}}
    \hspace{0cm}
    {\includegraphics[width=0.4\textwidth]{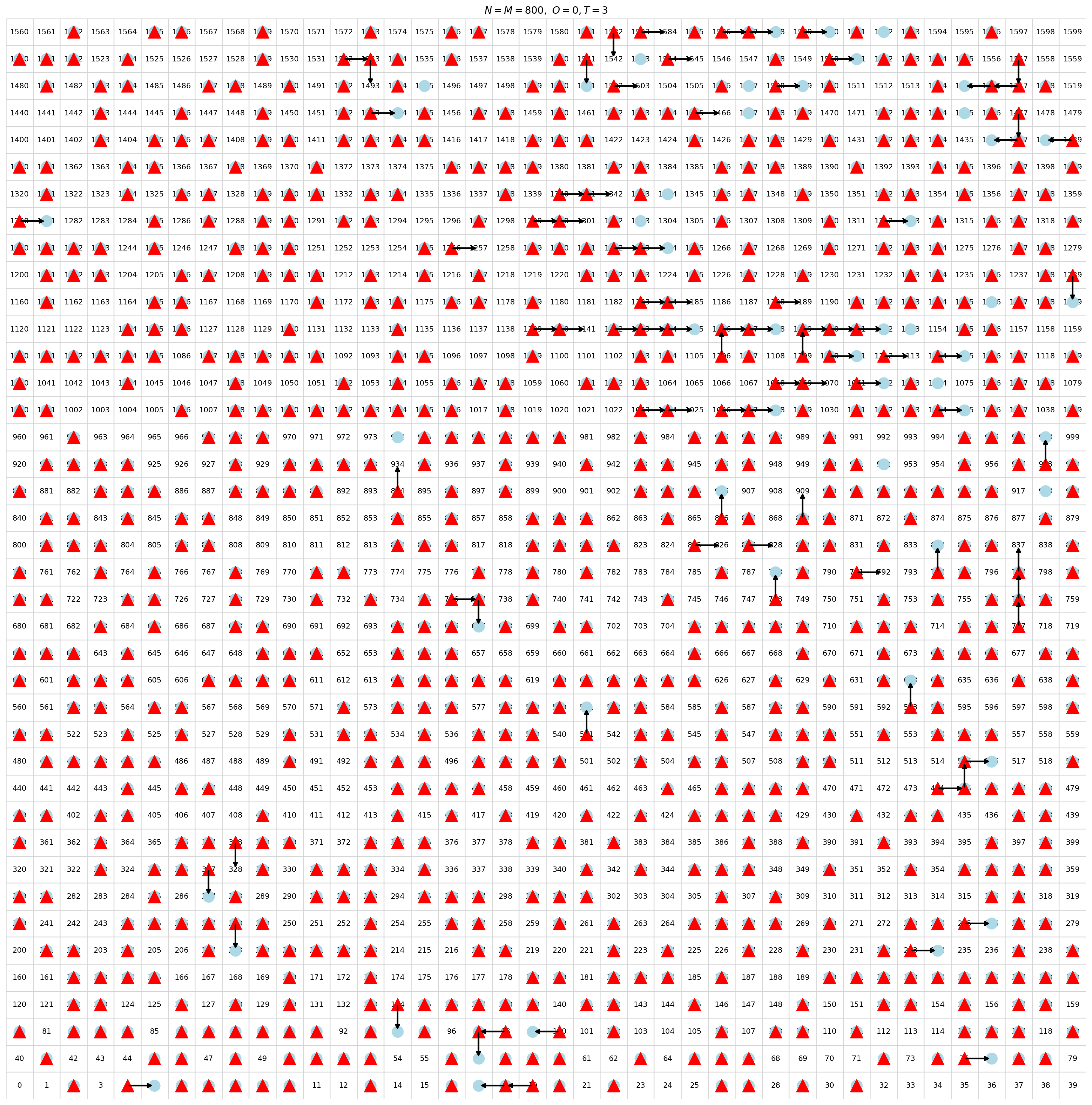}}
    {\includegraphics[width=0.4\textwidth]{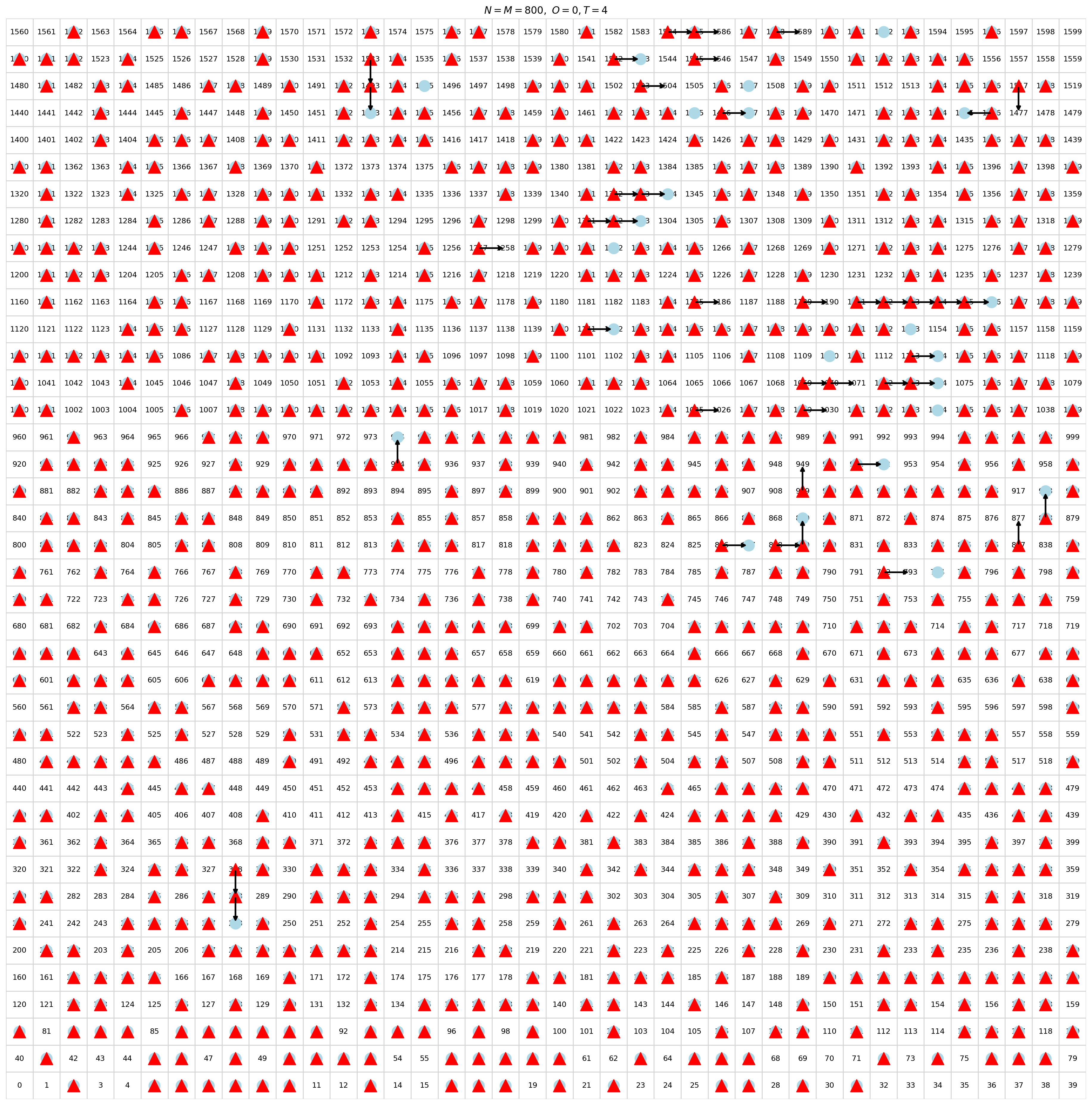}}
    \hspace{0cm}
    {\includegraphics[width=0.4\textwidth]{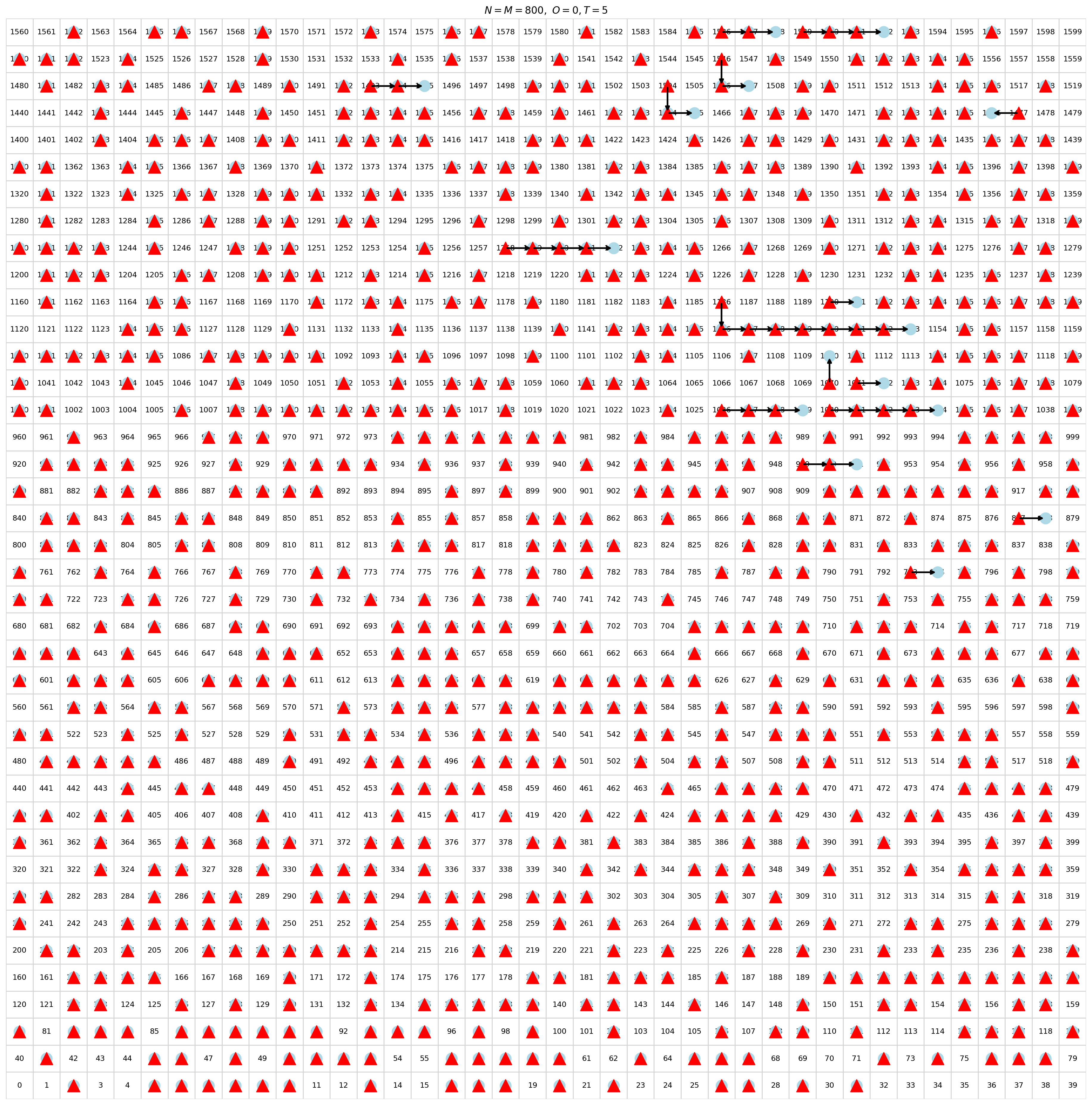}}
    \caption{A larger~${40\times 40}$ grid with~$800$ robots~$\robot$ and~$800$ targets~$\target$; every cell is either occupied by a robot or a target. The robot trajectories come from the min-cost transport obtained by solving~\textbf{P1} over~${T=6}$.}
    \label{a_fig3}
\end{figure}

\newpage
\begin{figure}[!htb]
    \centering
    \includegraphics[
        width=0.9\textwidth,
        trim=2in 0in 2in 0in,
        clip
    ]{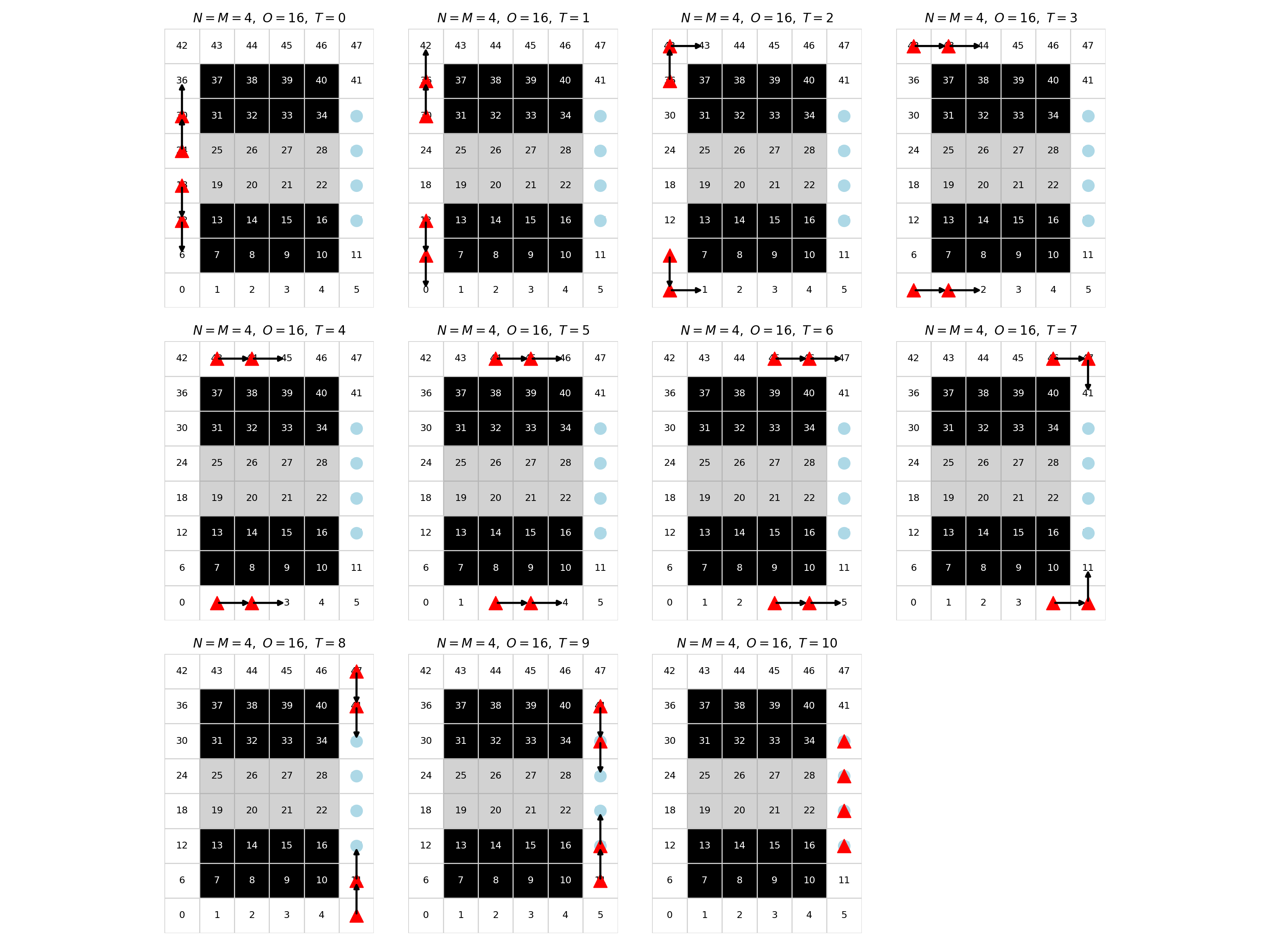}
    \caption{Min-cost vs. Min-makespan: A~${6\times 8}$ grid with~$4$ robots~$\robot$,~$4$ targets~$\target$, and~$16$ obstacles. Edges in the gray shaded region have cost~$10$; rest follow our move-wait cost convention. \\ With~${T=10}$, the min-cost transport avoids the high cost interior and takes the robots from the boundary of the grid, with all one-cost moves for a total of~$40$.}
    \label{ms_fig1}
\end{figure}
\newpage
\begin{figure}[!htb]
    \centering
    \includegraphics[
        width=0.9\textwidth,
        trim=2in 0in 2in 0in,
        clip
    ]{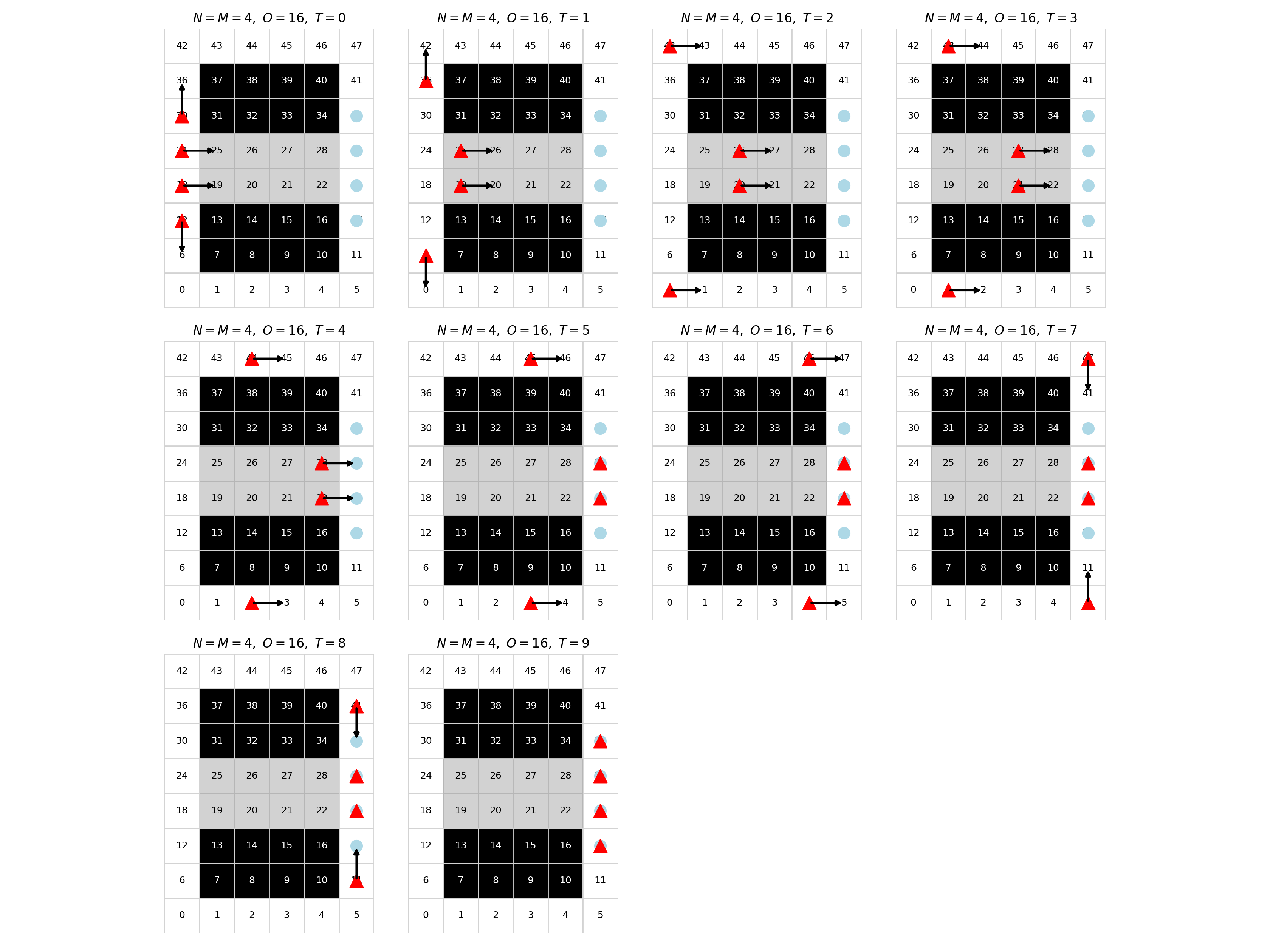}
    \caption{Min-cost vs. Min-makespan: A~${6\times 8}$ grid with~$4$ robots~$\robot$,~$4$ targets~$\target$, and~$16$ obstacles. Edges in the gray shaded region have cost~$10$; rest follow our move-wait cost convention.\\ With~${T=9}$, all robots cannot travel on the boundary as that requires~$10$ moves; the min-cost transport therefore takes two robots from the boundary in~$9$ steps each, and two from the higher cost interior edges for a total cost of~$82$.}
    \label{ms_fig2}
\end{figure}

\newpage
\begin{figure}[!htb]
    \centering
    \includegraphics[
        width=0.9\textwidth,
        trim=2in 0in 2in 0in,
        clip
    ]{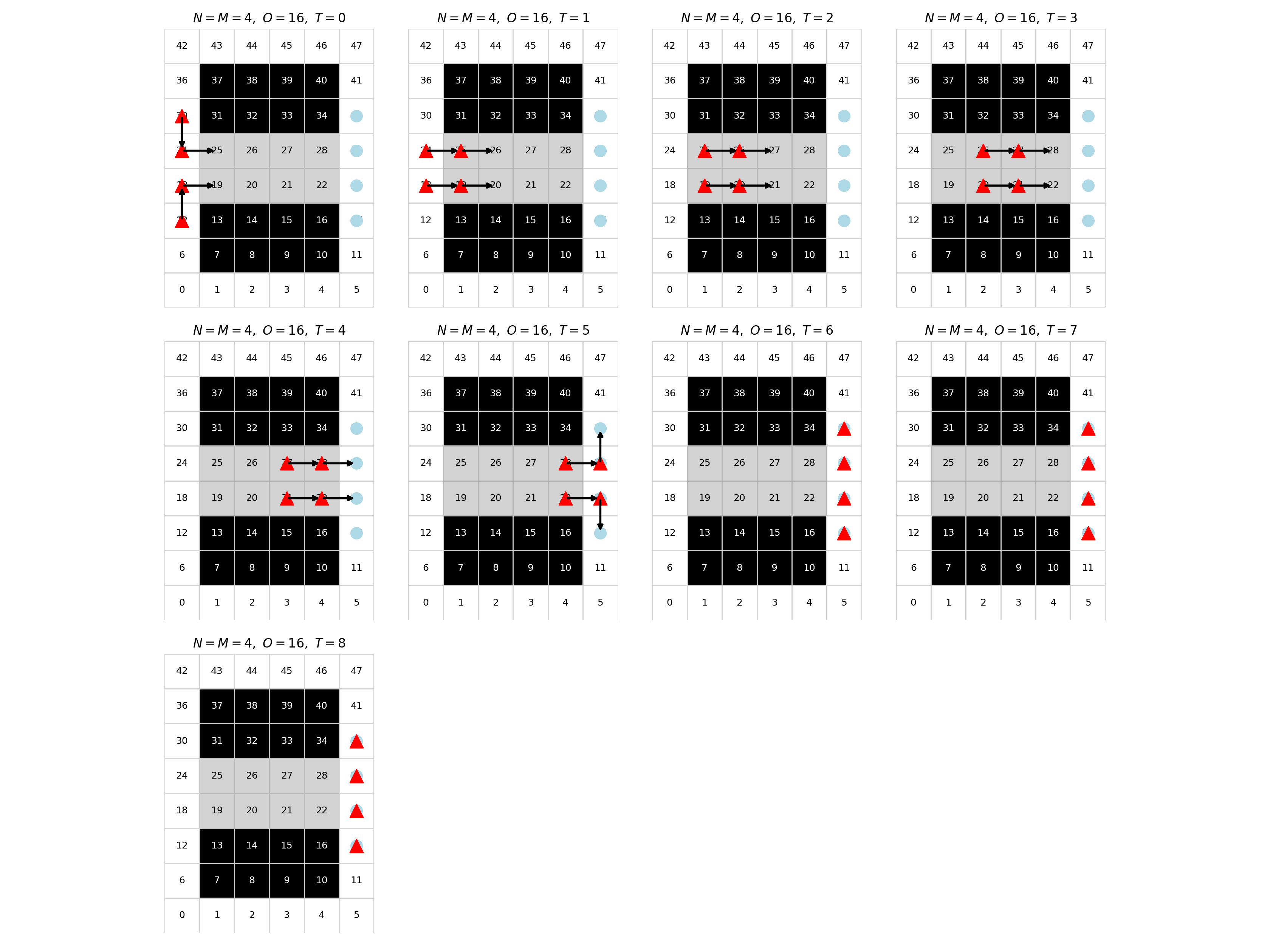}
    \caption{Min-cost vs. Min-makespan: A~${6\times 8}$ grid with~$4$ robots~$\robot$,~$4$ targets~$\target$, and~$16$ obstacles. Edges in the gray shaded region have cost~$10$; rest follow our move-wait cost convention. \\ With~${T=8}$, boundary paths are no longer feasible within the given time horizon. All robots must travel through the interior edges to reach the targets for a total cost of~$132$.}
    \label{ms_fig3}
\end{figure}
\newpage
\begin{figure}[!htb]
    \centering
    \includegraphics[
        width=0.9\textwidth,
        trim=2in 0in 2in 0in,
        clip
    ]{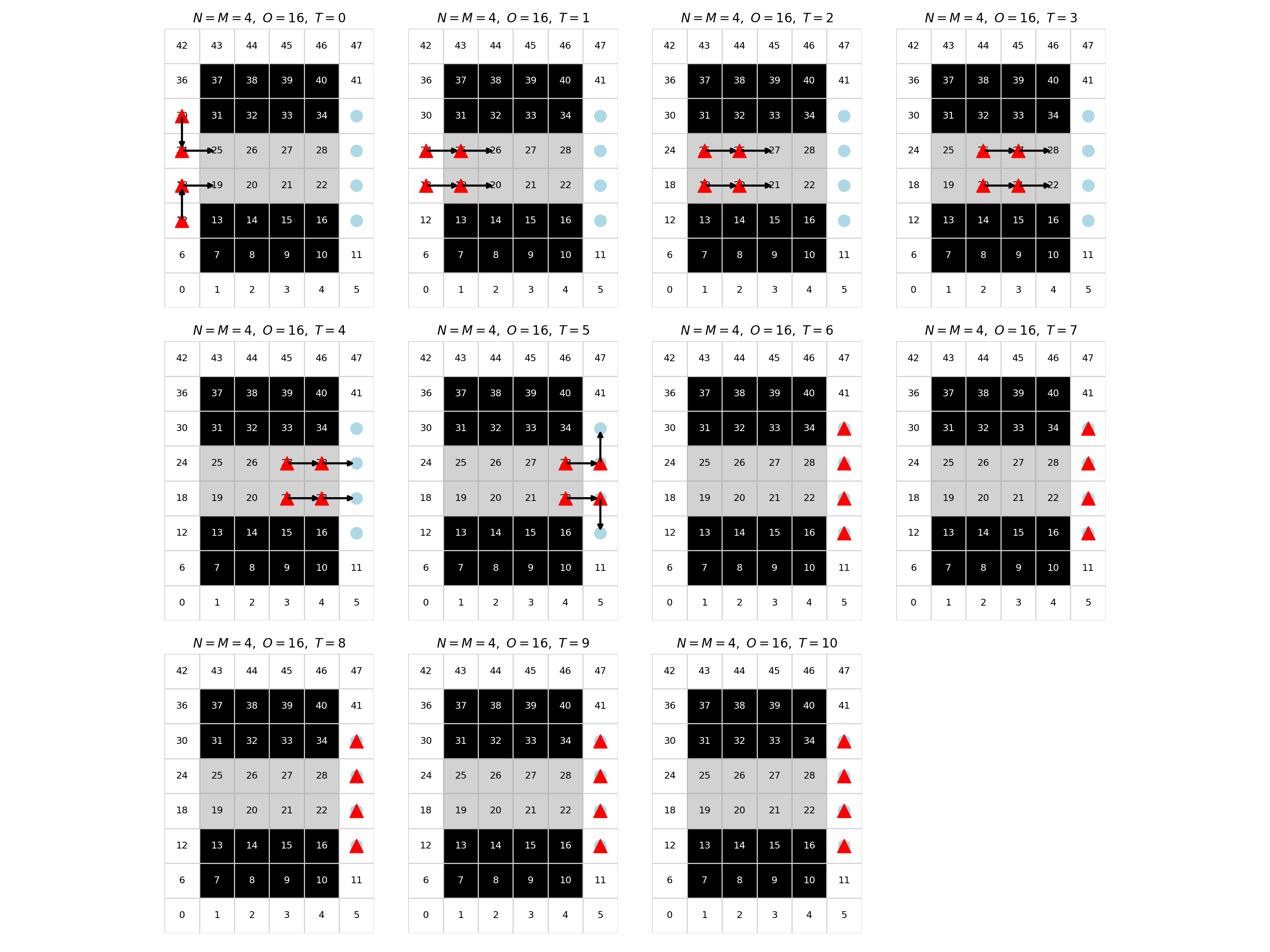}
    \caption{Min-cost vs. Min-makespan: A~${6\times 8}$ grid with~$4$ robots~$\robot$,~$4$ targets~$\target$, and~$16$ obstacles. Edges in the gray shaded region have cost~$10$; rest follow our move-wait cost convention. \\ We choose the cost structure described in Assumption~\ref{assump:time-sep}. \textbf{P1} consequently provides the minimum makespan solution that terminates the robot motion in~$5$ steps, i.e., achieves the minimum makespan, when solved over a longer~${T=10}$ horizon. The total cost of this transport increases exponentially, of the order of~$10^{13}$. For larger problems, the corresponding solvers may run into numerical instabilities and the search procedure described in Lemma~\ref{lem:T*} may be more tractable to compute the minimum makespan.}
    \label{ms_fig4}
\end{figure}

\newpage
\begin{figure}[!htb]
    \centering
    \includegraphics[width=0.75\textwidth]{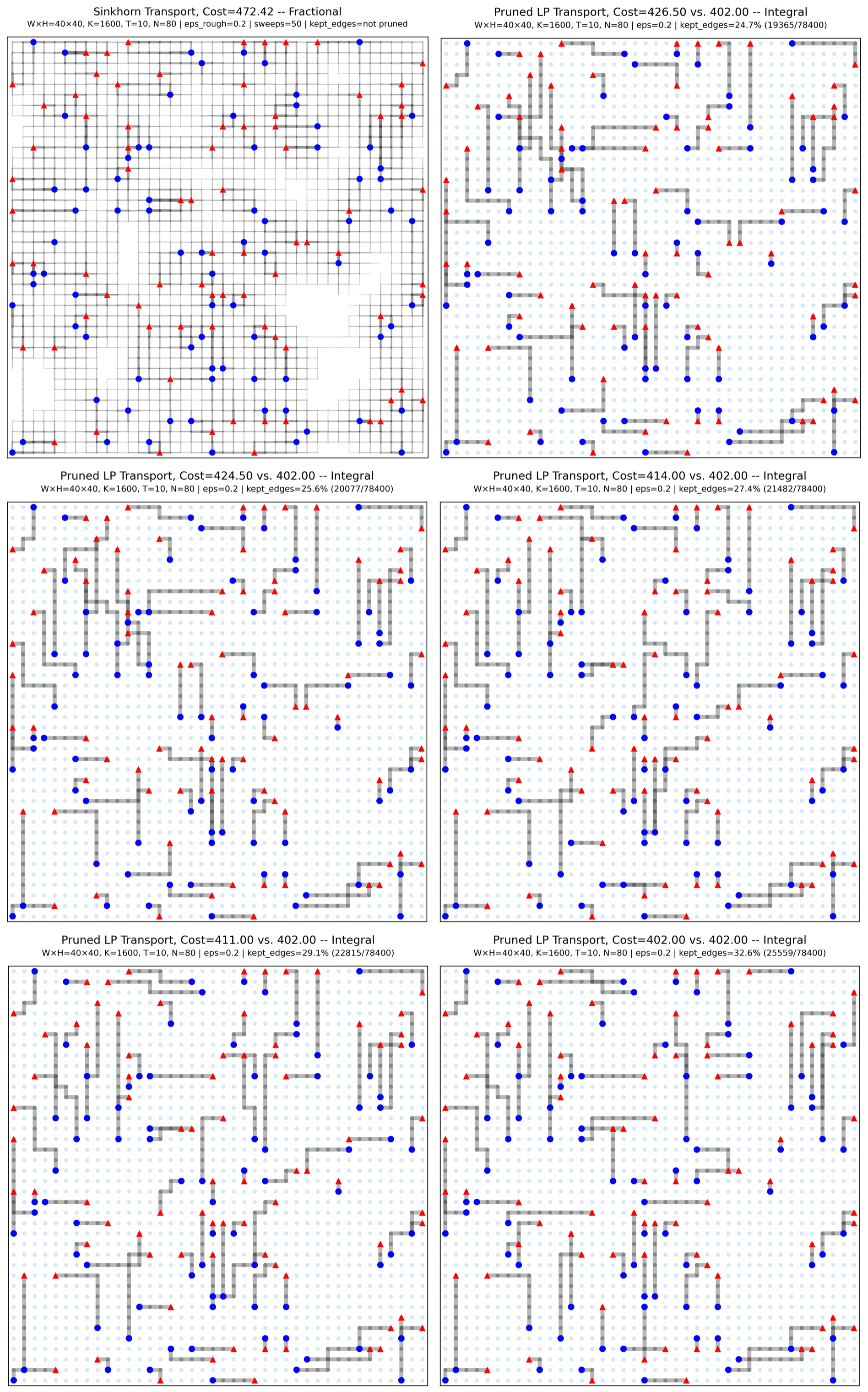}

    \caption{A~${40\times 40}$ grid with~$80$ robots~$\robot$,~$80$ targets~$\target$, and~${T=10}$. The robot paths are superimposed over the horizon~$T$. The first figure is the Schrödinger shadow that shows the likely mass transport obtained by solving~\textbf{P2} after~$\bar\tau$ Sinkhorn iterations in Appendix~\ref{sinkMAPF};~${{\varepsilon=0.2,\bar\tau=50}}$. The next figures show the integral projection~\textbf{P3} by keeping the highest-valued~$X\%$ of edges from the shadow. Cost is compared with the optimal in the figure title, and the optimal min-cost transport can be achieved with~$32.6\%$ edges. It is interesting to note the difference in the robot trajectories over these solutions.}
    \label{a_fig4}
\end{figure}

\newpage
\begin{figure}[!htb]
    \centering
    \includegraphics[width=0.75\textwidth]{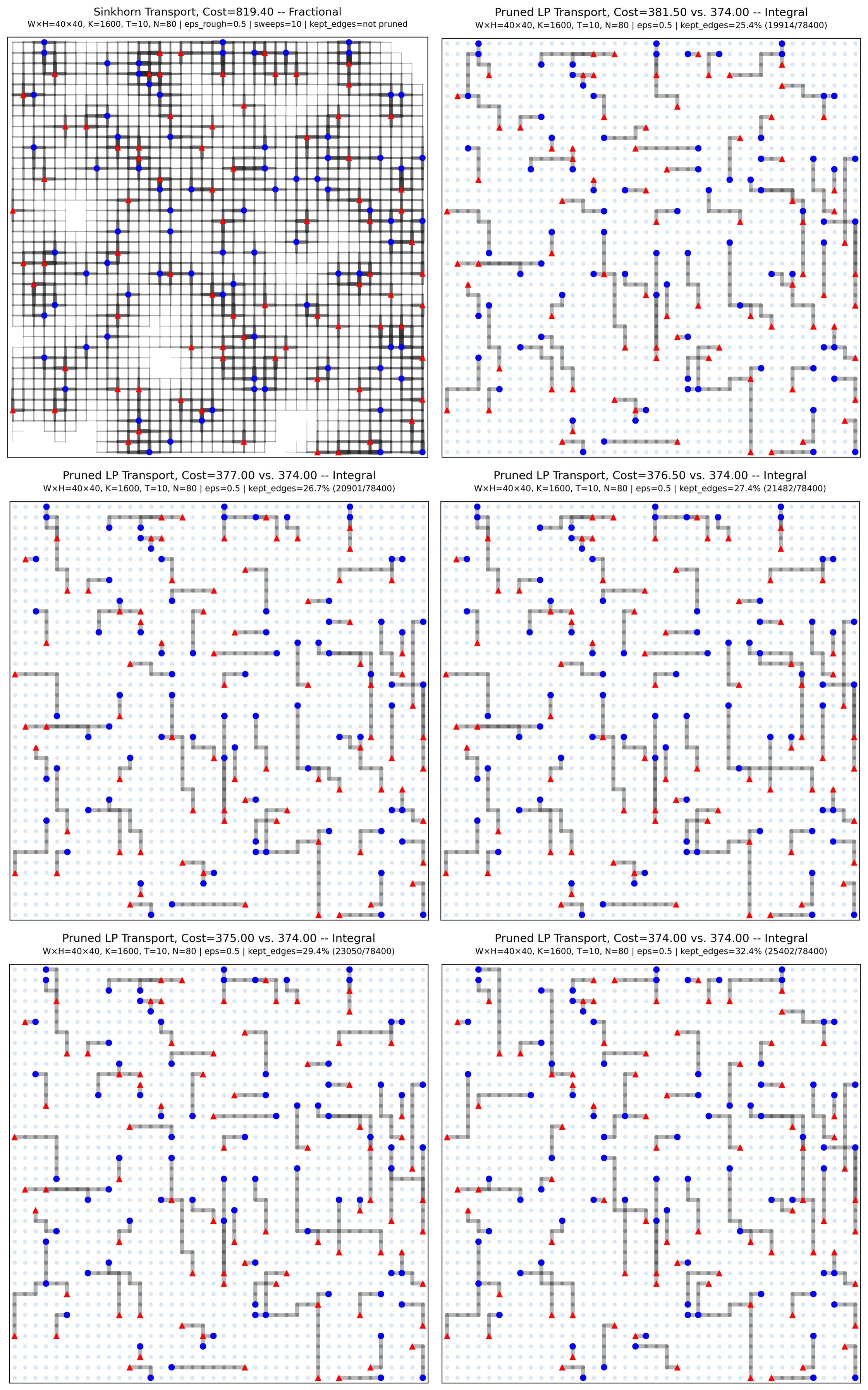}

    \caption{A~${40\times 40}$ grid with~$80$ robots~$\robot$,~$80$ targets~$\target$, and~${T=10}$. The robot paths are superimposed over the horizon~$T$. The first figure is the Schrödinger shadow that shows the likely mass transport obtained by solving~\textbf{P2} after~$\bar\tau$ Sinkhorn iterations in Appendix~\ref{sinkMAPF};~${{\varepsilon=0.5,\bar\tau=10}}$. The next figures show the integral projection~\textbf{P3} by keeping the highest-valued~$X\%$ of edges from the shadow. Cost is compared with the optimal in the figure title, which can be achieved with~$32.4\%$ edges.}
    \label{a_fig5}
\end{figure}

\newpage
\begin{figure}[!htb]
    \centering
    \includegraphics[width=0.75\textwidth]{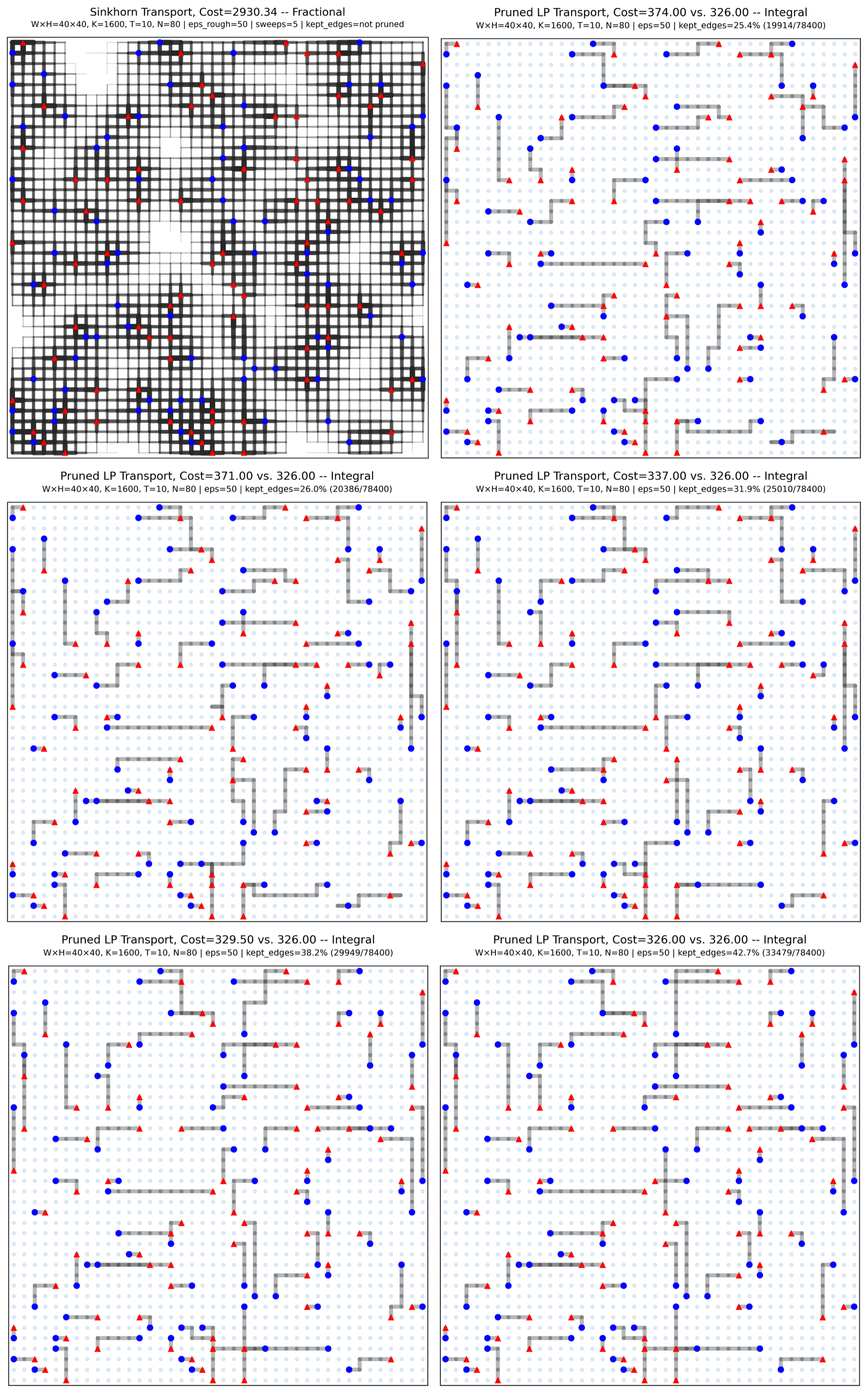}

    \caption{A~${40\times 40}$ grid with~$80$ robots~$\robot$,~$80$ targets~$\target$, and~${T=10}$. The robot paths are superimposed over the horizon~$T$. The first figure is the Schrödinger shadow that shows the likely mass transport obtained by solving~\textbf{P2} after~$\bar\tau$ Sinkhorn iterations in Appendix~\ref{sinkMAPF};~${{\varepsilon=50,\bar\tau=5}}$. The next figures show the integral projection~\textbf{P3} by keeping the highest-valued~$X\%$ of edges from the shadow. Cost is compared with the optimal in the figure title, which can be achieved with~$42.7\%$ edges.}
    \label{a_fig6}
\end{figure}

\end{document}